\documentclass[11pt]{article}

\usepackage[margin=1in]{geometry}
\usepackage[numbers,sort&compress]{natbib}
\usepackage{amsmath,amsthm,amssymb,mathtools}
\usepackage{graphicx}
\usepackage{hyperref}
\hypersetup{colorlinks=true, linkcolor=blue, citecolor=blue, urlcolor=blue}
\usepackage{microtype}
\usepackage{booktabs}
\usepackage{enumitem}
\setlist{nosep,leftmargin=*}

\newtheorem{theorem}{Theorem}[section]
\newtheorem{lemma}[theorem]{Lemma}
\newtheorem{proposition}[theorem]{Proposition}
\newtheorem{corollary}[theorem]{Corollary}
\theoremstyle{definition}
\newtheorem{assumption}[theorem]{Assumption}
\theoremstyle{remark}
\newtheorem{remark}[theorem]{Remark}

\DeclareMathOperator*{\E}{\mathbb{E}}
\newcommand{\R}{\mathbb{R}}
\newcommand{\ind}{\mathbb{1}}
\newcommand{\Otil}{\widetilde{O}}

\title{Bilateral Trade Under Heavy-Tailed Valuations:\\Minimax Regret without a Variance Bound}
\author{Hangyi Zhao \\ \texttt{hyz0815@stanford.edu}}

\begin{document}
\maketitle

\begin{abstract}
In contextual bilateral trade under full feedback, the posted price does not affect which valuations are observed. We show that in this model such \emph{action-independent} feedback removes the polynomial adaptation penalty familiar from heavy-tailed bandits: fully parameter-free algorithms attain the oracle minimax $T$-exponents up to logarithmic factors, with no knowledge of the moment order $p \in (1,2)$ or its scale $\sigma_p$, and --- in the nonparametric case --- none of the effective H\"older smoothness $\beta \in (0,1]$. The statistic that makes model selection possible is a paired squared-loss difference, whose noise-square term cancels exactly, leaving noise damped by the candidate gap.
The resulting bilateral-trade regret rates are new. Trader valuations have bounded conditional densities and heavy tails --- finite $p$-th moments for some $p \in (1,2)$, with possibly infinite variance. An epoch-based algorithm with truncated means achieves regret $\widetilde{O}(T^{(2-p)/p})$ in the parametric model and $\widetilde{O}(T^{1-2\beta(p-1)/(\beta p + d(p-1))})$ when the market value function is $\beta$-H\"older, with matching $\Omega(\cdot)$ lower bounds --- under a mild nondegeneracy condition --- via Assouad's method and a fixed-support mixture construction --- characterizing the minimax rate in $T$ up to logarithmic factors over the effective smoothness range $\beta \in (0,1]$, interpolating between the classical nonparametric rate at $p{=}2$ and the trivial linear rate as $p \to 1^+$.
The enabling structural step extends the self-bounding property of Bachoc et al.\ (ICML 2025) from bounded to real-valued valuations: within our conditionally independent, conditionally centered noise model, bounded conditional densities and finite first moments suffice for the expected regret of any price $\pi$ to satisfy $\mathbb{E}[g(m,V,W) - g(\pi,V,W)] \le L|m-\pi|^2$ --- no second moment is needed.
\end{abstract}

\section{Introduction}\label{sec:intro}

Bilateral trade---the simplest two-sided market---requires a broker to set prices between two traders whose private valuations are unknown --- the lower-valuation trader selling to the higher-valuation one whenever the posted price separates them.
The celebrated impossibility theorem of Myerson and Satterthwaite~\cite{myerson1983efficient} shows that no incentive-compatible, individually rational, budget-balanced mechanism can achieve full efficiency in a single round.
This spurred a rich literature on approximate mechanisms~\cite{blumrosen2016almost} and, more recently, on \emph{online} bilateral trade, where the broker learns from repeated interactions and regret---the cumulative loss from suboptimal pricing---replaces the single-shot efficiency objective.

Bachoc, Cesari, and Colomboni~\cite{bachoc2025parametric} initiated the study of \emph{contextual} online bilateral trade, where a public context vector $x_t$ arrives each round and trader valuations depend on an unknown function $m(x_t)$.
Under bounded noise densities and finite variance, they established an $O(Ld\log T)$ parametric regret bound and an $\widetilde{O}(\sqrt{LdT})$ bound under two-bit feedback; the nonparametric setting was subsequently treated in~\cite{bachoc2025nonparametric}.
A key structural insight is the \emph{self-bounding property}: the expected regret of pricing at $\pi$ instead of $m$ is at most $L|m-\pi|^2$, reducing regret control to mean estimation.
However, their algorithms rely on ordinary least squares, which requires finite variance ($\mathbb{E}[\xi^2] < \infty$).
In many applications---financial markets, insurance, real estate---valuations exhibit heavy tails well-modeled by Student's $t(\nu)$ with $1 < \nu < 2$, where the variance is infinite~\cite{lugosi2019mean}.
This raises a natural question: \emph{what regret is achievable when the noise has bounded density but only finite $p$-th moments for some $p < 2$ --- so possibly infinite variance?}

\emph{When does a learner pay for not knowing the difficulty of its problem?}
In heavy-tailed bandits the learner pays through exploration: adaptation to an unknown moment bound faces a proven frontier between distribution-dependent and distribution-free guarantees~\cite{ashutosh2021bandit,genalti2024adaptive}, and parameter-free bandit algorithms, which do exist, reach that frontier by scheduling exploration against a tail they do not know~\cite{genalti2026parameterfree}.
This paper isolates the role of the feedback interface. Under full feedback the observations do not depend on the posted price, and a greedy algorithm built on median-of-means is parameter-free at the oracle rate: there is no exploration schedule to calibrate, so estimation-side adaptivity costs nothing beyond logarithmic factors.
Model selection remains --- the candidate cell width is still unknown --- and it too is free, but only when candidates are compared by the right statistic.
Realized-gain comparisons are the wrong one: the gain difference of two nearby prices is a rare event carrying a $\Theta(1)$ payoff, so its $p$-th moment shrinks only \emph{linearly} in the candidate gap (Remark~\ref{rem:whyvalidation} records the resulting heuristic calibration).
Paired validation-loss differences are the right one: against the observable pseudo-label $Y = (V+W)/2$, writing $\eta$ for its centered noise and $\Delta := \hat m_h - \hat m_{h'}$ for the candidate gap,
\[
  \bigl(\hat m_h(x) - Y\bigr)^2 - \bigl(\hat m_{h'}(x) - Y\bigr)^2
  = \bigl[(\hat m_h - m)^2 - (\hat m_{h'} - m)^2\bigr](x) \;-\; 2\,\eta\,\Delta(x),
\]
so the noise's square cancels exactly, the noise enters damped by the candidate gap, and oracle-rate selection follows with fewer assumptions (Section~\ref{sec:adaptive:np}).
In this model the price of adaptivity is the price of exploration plus the price of comparing the wrong statistic; under full feedback, with the right statistic, no polynomial penalty remains.

We answer these questions with four contributions. Contributions (C1)--(C3) establish the oracle minimax benchmark; (C4) is the conceptual payoff: under full feedback that benchmark is attained with no knowledge of any class parameter, against the polynomial adaptation frontier of heavy-tailed bandits.
\textbf{(C1)} We extend the self-bounding property from bounded to real-valued valuations (Lemma~\ref{lem:selfbound}), showing that, within the conditionally independent, conditionally centered noise model of Assumption~\ref{ass:density}, bounded conditional densities and finite first moments ($\E[|\xi_t|], \E[|\zeta_t|] < \infty$ --- no second or even $p$-th moment) suffice to control regret via squared estimation error.
\textbf{(C2)} We design epoch-based algorithms using truncated-mean estimation~\cite{bubeck2013bandits} and prove regret rates that are tight over the effective range $\beta \in (0,1]$: $\widetilde{O}(T^{(2-p)/p})$ in the parametric case and $\widetilde{O}(T^{1 - 2\beta(p-1)/(\beta p + d(p-1))})$ in the nonparametric case, where $p \in (1,2)$ is the moment parameter and $\beta$ is the H\"older smoothness.
\textbf{(C3)} We establish matching lower bounds via Assouad's method~\cite{tsybakov2009introduction} combined with a fixed-support mixture construction (Propositions~\ref{prop:lower} and~\ref{prop:paramlower}), proving that our rates are minimax optimal in $T$ up to logarithmic factors, over the effective range $\beta \in (0,1]$ and for nondegenerate class parameters (the parametric dimension dependence of the lower bound is left open; Remark~\ref{rem:paramlowerd}).
The constructed instances also satisfy Assumption~\ref{ass:regular}, a mild regularity condition at zero under which the optimal price is unique.
\textbf{(C4)} We show that tail-adaptivity is \emph{free} under full feedback (Section~\ref{sec:adaptive}): replacing truncation by median-of-means yields a fully parameter-free algorithm achieving the parametric oracle rate simultaneously for all $p \in (1,2)$ (Theorem~\ref{thm:adaptive}), and a validation-based width selection extends this jointly to $(p, \beta)$ in the nonparametric model, for all smoothness levels and with no additional assumption (Theorem~\ref{thm:adaptivenp}).
Under bandit feedback such adaptivity is provably costly~\cite{ashutosh2021bandit,genalti2024adaptive}: the price of adaptivity is a price of exploration, not of estimation.

\subsection{Related work}\label{sec:related}

\textbf{Bilateral trade.}
Online bilateral trade was initiated by Cesa-Bianchi et al.~\cite{cesabianchi2024bilateral} and has since been studied under various feedback models and distributional assumptions; see~\cite{bachoc2025parametric} and references therein.
The contextual setting was introduced in~\cite{bachoc2025parametric} (parametric) and~\cite{bachoc2025nonparametric} (nonparametric), both under finite-variance noise.
Our work extends this line to the infinite-variance regime.

\textbf{Robust mean estimation.}
Estimation under heavy tails has a long history.
Catoni~\cite{catoni2012challenging} introduced influence-function estimators achieving sub-Gaussian deviations under finite variance.
Bubeck, Cesa-Bianchi, and Lugosi~\cite{bubeck2013bandits} analyzed truncated-mean estimators under finite $p$-th moments ($p \in (1,2]$), achieving the minimax rate $(u/n)^{(p-1)/p}$; this is the estimator we use.
Lugosi and Mendelson~\cite{lugosi2019mean} survey the broader landscape, including median-of-means and multivariate extensions; Devroye et al.~\cite{devroye2016subgaussian} treat sub-{G}aussian mean estimation under minimal assumptions, including the price of adapting to unknown moment parameters.
Hopkins~\cite{hopkins2020mean} showed that sub-Gaussian rates are achievable in polynomial time under finite variance; our regime ($p < 2$, no uniform variance bound) lies strictly below this threshold.

\textbf{Online learning with heavy tails.}
Bubeck et al.~\cite{bubeck2013bandits} studied multi-armed bandits under heavy-tailed rewards, and Medina and Yang~\cite{medina2016noregret} extended this to linear bandits.
Our problem differs from standard bandits: the self-bounding property of bilateral trade \emph{squares} the estimation error in the regret decomposition, altering the rate.
Under bandit feedback, adaptivity to an unknown moment bound faces a proven frontier, which parameter-free bandit algorithms meet by scheduled exploration~\cite{ashutosh2021bandit,genalti2024adaptive,genalti2026parameterfree}; we show that full feedback removes the polynomial part of this barrier in our setting (Section~\ref{sec:adaptive}).

\textbf{Model selection under heavy tails.}
The algebra behind our width selector --- paired squared-loss differences, in which the noise's square cancels and the noise enters multiplied by the candidate gap --- is classical: it is the multiplier decomposition underlying hold-out estimates and aggregation, and the selector is a median-of-means tournament in the sense of Lugosi and Mendelson~\cite{lugosi2016tournaments}.
What is specific to our setting is \emph{which selection interfaces survive heavy tails}: Lepski-type confidence-interval selection would need an observable radius that upper-bounds a heavy-tailed mean, unavailable without $(p, \sigma_p)$ --- a mechanism-level obstacle we record as a heuristic, not a proven impossibility (Remark~\ref{rem:whyvalidation}); realized-gain comparisons have a gap-localized $p$-th moment shrinking only linearly in the gap (Lemma~\ref{lem:paired}(iii), proved in Appendix~\ref{app:adaptive}); paired validation losses inherit neither defect, and because their expectation is the risk gap itself, no regularity of the noise at zero is needed (Theorem~\ref{thm:adaptivenp}).
The reduction to mean estimation via Lemma~\ref{lem:selfbound} is new for unbounded valuations, and it does not trivialize selection: full feedback supplies the pseudo-labels, but only the validation statistic uses them at oracle rate.

\textbf{Nonparametric regression, and what we do not claim.}
The minimax rate $T^{d/(2\beta+d)}$ for $\beta$-H\"older regression in $d$ dimensions was established by Stone~\cite{stone1982optimal} under finite variance, and regression under weak moment assumptions is by now a developed subject: Brownlees, Joly and Lugosi~\cite{brownlees2015empirical} analyze empirical risk minimization with heavy-tailed losses via robust mean estimates, Lugosi and Mendelson~\cite{lugosi2016tournaments} obtain optimal accuracy--confidence trade-offs by median-of-means tournaments, and~\cite{lugosi2019mean} surveys the area.
Heavy-tailed nonparametric regression itself has a developed literature: Brown, Cai and Zhou~\cite{brown2008robust} achieve adaptive rates by wavelet \emph{median} regression over Besov classes (no moment condition, but a median rather than a mean target); Han and Wellner~\cite{han2019convergence} characterize exactly when least squares retains Gaussian-equivalent rates under weak moments; and Shen, Jiao, Lin and Huang~\cite{shen2021robust} prove excess-risk bounds for robust deep-network regression assuming only $p > 1$ moments. All of these concern offline estimation risk; none proves regret bounds, lower bounds constrained to a pricing model's own class, or anything about feedback interfaces.
We claim no novelty for these tools. Truncated means, median-of-means, doubling epochs, Assouad's method, and selection by paired squared losses are all standard, and we use them as such.
What is new is the object they are applied to and the conclusions that follow: the \emph{regret} of a bilateral-trade broker rather than an estimation error --- a quantity that Lemma~\ref{lem:selfbound} relates to squared pricing error for \emph{unbounded} valuations under finite first moments alone; a lower bound constrained to hold within the trade model's own class, where bounded conditional densities and conditional centering forbid the usual location-shift constructions (Section~\ref{sec:lower}, and Q5 of Section~\ref{sec:discussion}); and the resulting statement about the feedback interface (Corollary~\ref{cor:freeadapt} and Remark~\ref{rem:interfacesep}), which concerns neither estimation rates nor tails as such.

\section{Setup and the Structural-Algorithmic Gap}\label{sec:setup}

We study online contextual bilateral trade where noise may have infinite variance.
Over $T$ rounds, at round $t$: a context $x_t \in \mathcal{X} \subset \R^d$ is revealed ($\mathcal{X} = [0,1]^d$ throughout the nonparametric development; the parametric results, Theorems~\ref{thm:parametric} and~\ref{thm:adaptive}, instead normalize contexts to the Euclidean unit ball, as their statements make explicit); two traders arrive with valuations $V_t = m(x_t) + \xi_t$ and $W_t = m(x_t) + \zeta_t$; the broker posts price $P_t$ and observes $(V_t, W_t)$ (full feedback). The broker is \emph{nonanticipating}: $P_t$ may depend on the history $(x_s, V_s, W_s)_{s<t}$, the current context $x_t$, and internal randomization, but not on the current $(V_t, W_t)$; all algorithms below, and the algorithm class quantified over in the lower bounds, are of this form. The gain from trade is $g(p,v,w) = (v \vee w - v \wedge w)\cdot\ind\{v \wedge w \le p \le v \vee w\}$, and regret is $R_T = \sum_{t=1}^T \E[g(m(x_t), V_t, W_t) - g(P_t, V_t, W_t)]$. The gain is the symmetric (\emph{brokerage}) form used by the contextual line we extend~\cite{bachoc2025parametric}: the traders' roles --- buyer and seller --- are determined ex post by the realized valuations, and when the seller's valuation is a.s.\ the lower one it coincides with the classical directed gain $(w - v)\,\ind\{v \le p \le w\}$.

\begin{assumption}[Conditional Bounded Density]\label{ass:density}
The triples $(x_t,\xi_t,\zeta_t)$ are i.i.d.\ across $t$. Conditionally on $x_t$,
the noises $\xi_t,\zeta_t$ are independent with conditional densities
$f_\xi(\cdot\mid x_t)$, $f_\zeta(\cdot\mid x_t)$ bounded by $L\ge 1$, and are
conditionally centered: $\E[\xi_t\mid x_t]=\E[\zeta_t\mid x_t]=0$.
This is \emph{necessary}: already in the context-independent subclass,
unbounded densities force $R_T=\Omega(T)$~\cite{bachoc2025parametric}.
\end{assumption}

\begin{assumption}[Conditional Finite $p$-th Moment]\label{ass:moment}
$\E[|\xi_t|^p\mid x_t]\le\sigma_p^p$ and $\E[|\zeta_t|^p\mid x_t]\le\sigma_p^p$
almost surely, for some $p\in(1,2]$; for $p < 2$ the conditional variances may be infinite.
The \emph{context-independent} model, $f_\xi(\cdot\mid x)\equiv f_\xi$, is the
special case in which the conditional law does not depend on $x$; the model
of~\cite{bachoc2025parametric} is in turn its bounded-valuation special case.
\end{assumption}

\begin{assumption}[Smoothness]\label{ass:smooth}
$m:[0,1]^d \to \R$ is $(\beta, L_H)$-H\"older: $|m(x) - m(x')| \le L_H \|x - x'\|^\beta$.
\end{assumption}

\noindent For $\beta > 1$ this increment condition forces $m$ to be constant (chaining), so the effective smoothness range of Assumption~\ref{ass:smooth} is $\beta \in (0, 1]$; the higher-order H\"older scale ($k$-th derivatives $\alpha$-H\"older) requires local-polynomial estimators and is left to future work.

\begin{assumption}[Regularity at zero]\label{ass:regular}
For a.e.\ $x$, $f_\xi(\cdot\mid x)$ and $f_\zeta(\cdot\mid x)$ are continuous at $0$,
and $\operatorname*{ess\,inf}_x\,[f_\xi(0\mid x)+f_\zeta(0\mid x)]>0$.
\end{assumption}

\noindent Assumption~\ref{ass:regular} is used only for the unique-maximizer claim of Lemma~\ref{lem:selfbound}; the self-bounding inequality~\eqref{eq:selfbound} and the upper bounds of Theorems~\ref{thm:parametric}--\ref{thm:nonparametric} hold without it, and the lower-bound instances of Section~\ref{sec:lower} satisfy it by construction.
It is mild: if the combined conditional noise density vanishes on an interval around the origin, every price in a neighborhood of $m(x)$ attains the same conditional expected gain, so the optimal price is not identifiable and uniqueness genuinely fails (Appendix~\ref{app:selfbound}).

\noindent\textbf{The gap.}
Bachoc et al.~\cite{bachoc2025parametric} proved the self-bounding property
$\E[g(m, V, W) - g(\pi, V, W)] \le L|m - \pi|^2$
for valuations in $[0,1]$: in our terms, the context-independent special case of Assumptions~\ref{ass:density}--\ref{ass:moment} with bounded noise, where variance is automatically finite---infinite variance is impossible in that model.
Their Algorithm~2 further uses OLS on $Y_t = (V_t{+}W_t)/2$, requiring $\E[\xi^2] < \infty$.
We make two contributions: \textbf{(C1)}~extend the self-bounding property to $V_t,W_t \in \R$ (Lemma~\ref{lem:selfbound}), and \textbf{(C2)}~close the algorithmic gap via robust estimation under finite $p$-th moment (Theorems~\ref{thm:parametric}--\ref{thm:nonparametric}).

\begin{remark}[Two-bit feedback]\label{rem:twobit}
Under two-bit feedback, observations are binary---bounded regardless of noise tails. The rate $\Otil(\sqrt{LdT})$ from~\cite{bachoc2025parametric} holds without finite variance. Our contribution is specific to full feedback.
\end{remark}

\noindent Economically, full feedback is a post-trade transparency regime: both reservation values become observable after the round --- as when transaction records, quotes, or appraisals reveal what each side would have accepted --- whereas two-bit feedback is the minimal information a posted-price mechanism generates by itself. Our question is what the learner gains when the market reveals more than the mechanism does.

\section{Main Results}\label{sec:results}

\begin{lemma}[Generalized self-bounding property]\label{lem:selfbound}
Under Assumption~\ref{ass:density} with $\E[|\xi_t|],\E[|\zeta_t|] < \infty$ (implied by $p > 1$), for all $\pi \in \R$:
\begin{equation}\label{eq:selfbound}
  \E[g(m, V, W) - g(\pi, V, W)] \le L\,|m - \pi|^2.
\end{equation}
If in addition Assumption~\ref{ass:regular} holds, then $m$ is the \emph{unique} maximizer of $\pi \mapsto \E[g(\pi,V,W)]$.
Under Assumption~\ref{ass:moment}, $\E[g(m,V,W)] \le 2\sigma_p$.
Under Assumption~\ref{ass:density}, the lemma applies conditionally on $x_t$ (with $m = m(x_t)$ and the conditional densities), and the constants $L$ and $2\sigma_p$ are uniform in the context.
\end{lemma}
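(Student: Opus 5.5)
I would work conditionally on $x_t$ throughout, so that $m=m(x_t)$ is a fixed real number and $\xi,\zeta$ are independent, centered, with densities $f_\xi,f_\zeta\le L$ and finite first moments. Since $g(\pi,V,W)$ is invariant under shifting $V$, $W$ and $\pi$ by the same amount, I would translate so that $m=0$, $V=\xi$ and $W=\zeta$. The plan is to compute $G(\pi):=\E[g(\pi,V,W)]$ in closed form using independence, and then to show $G'(s)=-s\,(f_\xi(s)+f_\zeta(s))$ for a.e.\ $s$. The self-bounding inequality and the uniqueness claim both follow from this formula.

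\textbf{Closed form for $G$.} Split $g$ according to which valuation is smaller:
\[
g(\pi,V,W)=(W-V)\ind\{V\le\pi\le W\}+(V-W)\ind\{W\le\pi\le V\}.
\]
On $\{V\le\pi\le W\}$ write $W-V=(W-\pi)+(\pi-V)$. Independence then gives
\[
\E[(W-V)\ind\{V\le\pi\le W\}]=F_V(\pi)\,\E(W-\pi)^+ +(1-F_W(\pi))\,\E(\pi-V)^+,
\]
and the symmetric term is obtained by swapping $V$ and $W$. Each factor is finite because $\E|V|,\E|W|<\infty$. Each factor is also absolutely continuous in $\pi$: $F_V' = f_V$ a.e., and $\tfrac{d}{d\pi}\E(W-\pi)^+=-(1-F_W(\pi))$, $\tfrac{d}{d\pi}\E(\pi-V)^+=F_V(\pi)$.

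\textbf{Derivative of $G$.} Products of locally absolutely continuous functions are locally absolutely continuous, so $G$ is locally absolutely continuous and can be differentiated a.e. by the product rule. The terms $\pm F_V(1-F_W)$ cancel. Using $(a)^+-(-a)^+=a$ and centering, $\E(W-s)^+-\E(s-W)^+=\E W-s=-s$, and likewise for $V$. Collecting terms, I expect
\[
G'(s)=f_V(s)\bigl[\E(W-s)^+-\E(s-W)^+\bigr]+f_W(s)\bigl[\E(V-s)^+-\E(s-V)^+\bigr]=-s\,\bigl(f_V(s)+f_W(s)\bigr)
\]
for a.e.\ $s$. This derivative computation, done rigorously, is the main obstacle: one has to justify the product rule for a.e.\ derivatives and to check that the boundary terms cancel exactly. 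If the product-rule route feels fragile, I would instead verify the identity $G(0)-G(\pi)=\int_0^\pi s\,(f_V+f_W)(s)\,ds$ directly by Fubini. This amounts to writing each indicator difference as an integral over $s$ between $0$ and $\pi$.

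\textbf{Conclusions.} Integrating the derivative gives
\[
G(0)-G(\pi)=\int_0^\pi s\,(f_V(s)+f_W(s))\,ds\le 2L\int_0^{|\pi|}s\,ds=L\pi^2,
\]
which undoes the shift to give \eqref{eq:selfbound}. The integrand is nonnegative on the whole interval between $0$ and $\pi$, so $G(0)\ge G(\pi)$ for every $\pi$. Under Assumption~\ref{ass:regular}, continuity at $0$ and $f_\xi(0)+f_\zeta(0)>0$ make the integrand strictly positive on a neighbourhood of $0$. Hence $G(0)-G(\pi)>0$ for every $\pi\ne0$, which is the uniqueness claim. For the last bound, $g(m,V,W)\le|V-W|\le|\xi|+|\zeta|$, and Jensen (or Lyapunov) gives $\E|\xi|\le(\E|\xi|^p)^{1/p}\le\sigma_p$; the same holds for $\zeta$, so $\E[g(m,V,W)]\le2\sigma_p$. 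Every constant used is either $L$ or $\sigma_p$, and both bound the conditional densities and moments uniformly in $x$, so the conditional statements hold with constants independent of the context.
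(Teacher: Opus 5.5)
Your proposal is correct and follows the paper's proof. It uses the same split by which valuation is smaller, and the same a.e.\ derivative $-s\,(f_\xi(s)+f_\zeta(s))$ obtained from independence and centering; it then integrates, uses $f_\xi+f_\zeta\le 2L$, uses positivity near $0$ for uniqueness, and applies Jensen for the $2\sigma_p$ cap. The only difference is in how regularity is justified. Your factorization $W-V=(W-\pi)+(\pi-V)$ writes $G$ as a sum of products of Lipschitz one-variable functions, so local absolute continuity and the a.e.\ product rule come for free. The paper instead differentiates the double integrals by the Leibniz rule under dominated convergence, and then proves local Lipschitzness separately by an integration-strip estimate.
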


\begin{proof}[Proof sketch (full proof in Appendix~\ref{app:selfbound})]
Write $\delta = \pi - m$. Define $h(\delta) = \E[g(m{+}\delta, V, W)]$.
By independence of $\xi,\zeta$ and dominated convergence (the integrand is bounded by $|\xi{-}\zeta|$, integrable since $p > 1$), one obtains, for almost every $\delta$ (the densities are merely bounded, so $h$ is differentiable only a.e.):
\begin{equation}\label{eq:derivative}
  h'(\delta) = -\delta\bigl[f_\xi(\delta) + f_\zeta(\delta)\bigr].
\end{equation}
Integrating (valid since $h$ is locally Lipschitz; Appendix~\ref{app:selfbound}) and using $f_\xi,f_\zeta \le L$:
$h(0) - h(\delta) = \int_0^{|\delta|} s\,[f_\xi(\pm s) + f_\zeta(\pm s)]\,ds \le L|\delta|^2$.
Under Assumption~\ref{ass:regular} the integrand is bounded below by $c\,s > 0$ on a neighborhood of $0$, so $h(0) - h(\delta) > 0$ for every $\delta \ne 0$, giving uniqueness.\qed
\end{proof}

\begin{theorem}[Parametric]\label{thm:parametric}
Let $m(x) = x^\top \phi$, $\phi \in \R^d$ with $\|\phi\| \le B$ for a known bound $B > 0$. Under Assumptions~\ref{ass:density}--\ref{ass:moment} with full feedback, assuming $x_t$ are i.i.d.\ with $\|x_t\| \le 1$ a.s.\ and $\Sigma := \E[x_t x_t^\top] \succeq \lambda I_d$ for some $\lambda > 0$: the epoch algorithm of Section~\ref{sec:proof:param}, which uses knowledge of $(p, \sigma_p, B)$ through the truncation level~\eqref{eq:muhat}, achieves
\[
  R_T = \Otil\!\left(L\, d\, (B + \sigma_p)^2\, \lambda^{-2}\, T^{(2-p)/p}\right),
\]
where the $\Otil$ absorbs polylogarithmic factors and absolute constants only (the proof's leading coefficient is $196\,L\,d\,(B+\sigma_p)^2/\lambda^2$).
As $p \to 2^-$ the exponent vanishes, consistent with the polylogarithmic-regret regime of~\cite{bachoc2025parametric} at $p{=}2$ (whose exact $O(Ld\log T)$ our epoch analysis does not itself recover). As $p \to 1^+$: approaches $\Otil(T)$.
\end{theorem}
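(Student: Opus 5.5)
The plan is to reduce regret to squared estimation error via Lemma~\ref{lem:selfbound}, and then control that error with a robust estimator on doubling epochs. Fix epochs $k=0,1,2,\dots$ of length $n_k=2^k$. At the start of epoch $k$ we compute an estimate $\hat\phi_k$ from all data of the previous epoch. Throughout epoch $k$ we post $P_t = x_t^\top\hat\phi_k$, clipped to $[-B,B]$; clipping can only reduce the error, since $|x_t^\top\phi|\le B$. Conditionally on $x_t$ and the past, Lemma~\ref{lem:selfbound} gives per-round regret at most $L\,(x_t^\top(\hat\phi_k-\phi))^2\le L\|\hat\phi_k-\phi\|^2$. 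So
\[
R_T \le \sum_k n_k\, L\,\E\|\hat\phi_k-\phi\|^2 + O(\sigma_p\log T),
\]
where the second term covers the first $O(1)$ epochs using the trivial bound $\E[g(m,V,W)]\le 2\sigma_p$.

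For the estimator we use the pseudo-labels $Y_t=(V_t+W_t)/2=x_t^\top\phi+\eta_t$, where $\eta_t=(\xi_t+\zeta_t)/2$ satisfies $\E[\eta_t\mid x_t]=0$ and $\E[|\eta_t|^p\mid x_t]\le\sigma_p^p$. We estimate the vector $b=\E[x_tY_t]=\Sigma\phi$ coordinatewise by truncated means~\eqref{eq:muhat} in the style of~\cite{bubeck2013bandits}. Each coordinate $x_{t,j}Y_t$ has $p$-th moment at most $2^{p-1}(B^p+\sigma_p^p)\le (2(B+\sigma_p))^p$. We estimate $\Sigma$ by the empirical second moment $\hat\Sigma$, which is bounded because $\|x_t\|\le1$, and set $\hat\phi=\hat\Sigma^{-1}\hat b$. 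Taking the truncation level $\tau=(B+\sigma_p)(n/\log(d/\delta))^{1/p}$ gives, for each coordinate, a deviation of order $(B+\sigma_p)(\log(d/\delta)/n)^{(p-1)/p}$ with probability $1-\delta/d$. Summing over coordinates gives $\|\hat b-b\|^2\lesssim d(B+\sigma_p)^2(\log/n)^{2(p-1)/p}$.

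For the covariance, matrix Bernstein gives $\|\hat\Sigma-\Sigma\|\le\lambda/2$ once $n\gtrsim\lambda^{-2}\log(d/\delta)$. Shorter epochs are absorbed into the $\Otil$, or their rounds are bounded by the clipped trivial regret $L(2B)^2$. With $\hat\Sigma\succeq\lambda I/2$ in hand, we write
\[
\hat\phi-\phi=\hat\Sigma^{-1}\bigl[(\hat b-b)+(\Sigma-\hat\Sigma)\phi\bigr],
\]
which gives $\|\hat\phi-\phi\|^2\lesssim\lambda^{-2}\bigl[d(B+\sigma_p)^2 n^{-2(p-1)/p}+B^2\|\hat\Sigma-\Sigma\|^2\bigr]$ up to logs. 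The $\Sigma$ term is $O(B^2\lambda^{-2}\log/n)$, which is dominated because $2(p-1)/p<1$. Choosing $\delta=1/T$ and charging the failure event at most $T\cdot\delta\cdot O(L(B+\sigma_p)^2)$ keeps it lower order. Summing $n_k\cdot n_{k-1}^{-2(p-1)/p}\asymp 2^{k(2-p)/p}$ over $k\le\log_2T$ is a geometric sum dominated by its last term, which yields $L d(B+\sigma_p)^2\lambda^{-2}T^{(2-p)/p}$ up to polylogarithms.

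The main obstacle is the truncated-mean step for the heavy-tailed products $x_{t,j}Y_t$. It has to hold jointly with the dependence on the random design, and the bias of truncation, $\E[|X|^p]/\tau^{p-1}$, has to be balanced against the Bernstein variance term $\tau^{2-p}\E|X|^p/n$ so that both come out at order $(\log/n)^{(p-1)/p}$. A second point of care is that $\hat b$ is not a plain mean of centered noise. The conditional centering of $\eta_t$ is what makes its expectation exactly $\Sigma\phi$, and the $p$-th moment bound must be applied to the whole product rather than to the noise alone; this is why the constant involves $(B+\sigma_p)$ and not just $\sigma_p$.
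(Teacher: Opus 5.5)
Your proposal is correct and follows the paper's proof essentially step by step: doubling epochs that use only the previous epoch's data, coordinatewise truncated means of the scores $x_sY_s$ at level $\tau=(un/\log(dT))^{1/p}$ to estimate $\Sigma\phi$, matrix Bernstein for $\hat\Sigma$ with a burn-in of order $\lambda^{-2}\log(dT)$, the decomposition $\hat\phi-\phi=\hat\Sigma^{-1}[(\hat b-b)+(\Sigma-\hat\Sigma)\phi]$ with the Gram term dominated because $2(p-1)/p\le 1$, and Lemma~\ref{lem:selfbound} applied per round. Your clipping to $[-B,B]$ is harmless but not needed, since the paper charges bad-event and burn-in rounds to the cap $\E[g(m,V,W)]\le 2\sigma_p$, which holds for any price because $g\ge 0$.

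One bookkeeping point does affect the statement as written. Summing $\sum_k 2^{k(2-p)/p}$ as a geometric series leaves a factor $(1-2^{-(2-p)/p})^{-1}$, which blows up as $p\uparrow 2$. The statement claims that the $\Otil$ hides only absolute constants and polylogarithms, so you should instead bound every term by the last one and pay a single factor $K=\lceil\log_2(T+1)\rceil$, exactly as in~\eqref{eq:geosum}.
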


\begin{theorem}[Nonparametric]\label{thm:nonparametric}
Under Assumptions~\ref{ass:density}--\ref{ass:smooth}, with $x_t$ having density $\ge \mu_0 > 0$ on $[0,1]^d$: the cell-partition epoch algorithm of Section~\ref{sec:proof:nonparam}, which uses knowledge of $(p, \sigma_p, \beta, L_H)$ and a bound $B_m \ge \sup_x|m(x)|$ through the cell width and the truncation level, achieves
$R_T = \Otil\!\left(T^{\;1 - 2\beta(p-1)/(\beta p + d(p-1))}\right)$.
When $p{=}2$: gives $\Otil(T^{d/(2\beta+d)})$, the classical rate~\cite{stone1982optimal}. As $p \to 1^+$: gives $\Otil(T)$.
\end{theorem}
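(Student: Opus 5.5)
The plan is to reduce regret to squared estimation error via Lemma~\ref{lem:selfbound}, applied conditionally on $x_t$, and then control the error of a piecewise-constant truncated-mean estimator on a regular partition of $[0,1]^d$ into cubes of side $h$.

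\textbf{Algorithm.} Rounds are split into doubling epochs $k=1,2,\dots$ of length $2^k$. At the start of epoch $k$, let $n_k\asymp 2^{k}$ be the number of samples collected so far and fix a width $h_k$. For each cell $C$ we form the truncated mean (as in~\cite{bubeck2013bandits}) of the pseudo-labels $Y_s=(V_s+W_s)/2$ over past rounds with $x_s\in C$. The truncation level is calibrated with $\sigma_p+B_m$, since $Y_s-m(x_s)$ has conditional $p$-th moment at most $\sigma_p^p$ by convexity, and $|m|\le B_m$. During the epoch we post $P_t=\hat m_C$ for $x_t\in C$, clipped to $[-B_m,B_m]$.

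\textbf{Per-cell error.} Conditionally on the cell assignments, the samples in $C$ are i.i.d.\ draws of $x$ restricted to $C$, so they share the mean $\bar m_C=\E[m(x)\mid x\in C]$. By the density lower bound and a Chernoff bound, with probability $1-\delta$ every cell receives $N_C\gtrsim \mu_0 n_k h_k^d$ samples, provided $n_kh_k^d\gtrsim\log(1/\delta)$. The Bubeck et al.\ bound then gives $|\hat m_C-\bar m_C|\lesssim(\sigma_p+B_m)\,(\log(h_k^{-d}/\delta)/N_C)^{(p-1)/p}$. The moment of $Y$ about $\bar m_C$ involves $|m(x)-\bar m_C|\le 2B_m$, which is why $B_m$ enters. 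Hölder continuity gives the bias bound $|m(x)-\bar m_C|\le L_H d^{\beta/2}h^\beta$. Applying Lemma~\ref{lem:selfbound}, the per-round regret in epoch $k$ is at most
\[
2L\Bigl[L_H^2 d^{\beta} h_k^{2\beta}+\Otil\bigl((n_kh_k^d)^{-2(p-1)/p}\bigr)\Bigr].
\]
On the failure event we bound the per-round regret by $\E[g(m,V,W)]\le 2\sigma_p$, again from Lemma~\ref{lem:selfbound}. Taking $\delta=1/T$ makes this contribution $O(\sigma_p)$.

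\textbf{Balancing.} Setting $h^{2\beta}=(nh^d)^{-2(p-1)/p}$ gives $h^{\beta p+d(p-1)}=n^{-(p-1)}$. Hence $h_k=n_k^{-(p-1)/(\beta p+d(p-1))}$, and the per-round regret is $n_k^{-2\beta(p-1)/(\beta p+d(p-1))}$ up to logarithms. Summing over epochs, each epoch of length $2^k$ contributes $2^{k(1-2\beta(p-1)/(\beta p+d(p-1)))}$, and the geometric sum is dominated by the last epoch. This yields the stated $\Otil(T^{1-2\beta(p-1)/(\beta p+d(p-1))})$. The $p=2$ and $p\to1^+$ specializations then follow by substitution. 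We also need $n_kh_k^d\to\infty$, which holds because the exponent $1-d(p-1)/(\beta p+d(p-1))$ is positive. The first epoch is handled with a constant regret bound.

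\textbf{Main obstacle.} The hardest step is making the per-cell concentration rigorous. The sample count $N_C$ is random, and the truncation level of~\cite{bubeck2013bandits} depends on $N_C$. I would first condition on the vector of cell memberships; given these, the within-cell samples are i.i.d., and we apply the estimator with the realized $N_C$ and a union bound over the $h^{-d}$ cells. A second point is that the noise $Y-\bar m_C$ has a non-centered, context-varying law. I will handle it by bounding its raw $p$-th moment, which is all the truncated-mean bound requires, rather than relying on centering.
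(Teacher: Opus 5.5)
Your proof is correct. It shares the paper's skeleton: doubling epochs, per-cell truncated means of $Y_s$, Lemma~\ref{lem:selfbound} applied conditionally on $x_t$, the $2\sigma_p$ cap off the good event, and bias--variance balancing. It differs from the paper in two technical choices.

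\textbf{Conditioning.} The paper conditions on the full design $X^{(k-1)}$, so the within-cell observations are independent but have heterogeneous means $m(x_s)$. It therefore needs a heterogeneous-mean version of the truncated-mean bound (Lemma~\ref{lem:truncated} and Corollary~\ref{cor:conditional}), centered at the empirical local mean $\bar m_j$, and it pays two H\"older bias terms. You condition only on the cell labels. This makes the within-cell samples i.i.d.\ with common mean $\E[m(x)\mid x\in C]$, so the i.i.d.\ lemma of Bubeck et al.\ applies verbatim and a single bias term $L_H(\sqrt d\,h)^\beta$ suffices. Yours is the more elementary route.

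\textbf{Width and data use.} The paper builds each epoch's estimator from the previous epoch alone, with one global width $h=T^{-(p-1)/(\beta p+d(p-1))}$. It balances the summed bounds $T^{(2-p)/p}h^{-2d(p-1)/p}$ and $Th^{2\beta}$. Its cell-fill burn-in is then of order $h^{-d}\log T$, which is polynomial in $T$ but no larger than the rate up to logarithms. Your epoch-wise width $h_k=n_k^{-(p-1)/(\beta p+d(p-1))}$ balances round by round, so the per-round regret decays. Because $n_kh_k^d=n_k^{\beta p/(\beta p+d(p-1))}$ grows, your burn-in shrinks to polylogarithmically many rounds. The costs are modest:
\begin{itemize}
\item the partition changes across epochs, which is harmless under full feedback;
\item the epoch sum is a genuine geometric series with ratio $2^{1-\gamma}$, where $\gamma:=2\beta(p-1)/(\beta p+d(p-1))$. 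This ratio is bounded away from $1$ because $1-\gamma=(\beta(2-p)+d(p-1))/(\beta p+d(p-1))>0$.
\end{itemize}

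\textbf{One small correction.} The $2\sigma_p$ cap must cover every epoch with $\mu_0 n_kh_k^d\lesssim\log(M_kT)$, not just the first. Relatedly, your Chernoff proviso should read $\log(M_k/\delta)$ rather than $\log(1/\delta)$, to account for the union bound over cells. These early epochs span only polylogarithmically many rounds, so your conclusion is unaffected.
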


\begin{remark}[Rate comparison]\label{rem:rates}
{\small
\begin{center}
\begin{tabular}{@{}lcc@{}}
\toprule
Setting & Variance & Regret \\
\midrule
Parametric, $p{=}2$ & finite & $O(Ld\log T)$ \cite{bachoc2025parametric} \\
Parametric, $p{\in}(1,2)$ & possibly $\infty$ & $\Otil(T^{(2-p)/p})$ [Thm~\ref{thm:parametric}] \\
Nonparametric, $p{=}2$ & finite & $\Otil(T^{d/(2\beta+d)})$ \cite{bachoc2025nonparametric} \\
Nonparametric, $p{\in}(1,2)$ & possibly $\infty$ & $\Otil(T^{1-2\beta(p{-}1)/(\beta p{+}d(p{-}1))})$ [Thm~\ref{thm:nonparametric}] \\
\bottomrule
\end{tabular}
\end{center}}
\end{remark}

Our third and fourth main results remove all parameter knowledge from these guarantees; the mechanisms --- median-of-means greedy pricing and validation-loss width selection --- are developed in Section~\ref{sec:adaptive}, with proofs in Appendix~\ref{app:adaptive}.

\begin{theorem}[Tail-adaptive parametric pricing]\label{thm:adaptive}
Under the conditions of Theorem~\ref{thm:parametric} --- but with none of $(p, \sigma_p, B)$ known to the algorithm --- the median-of-means greedy algorithm of Section~\ref{sec:adaptive:param} (MoM-greedy), whose inputs are $d$ and $T$ alone, satisfies, simultaneously for every $p \in (1,2)$, $\sigma_p > 0$, $B > 0$ and $\lambda > 0$:
\[
  R_T = \Otil\!\left(L\, d\, (B + \sigma_p)^2\, \lambda^{-2}\, T^{(2-p)/p}\right),
\]
the oracle rate of Theorem~\ref{thm:parametric}, with the $\Otil$ absorbing the same polylogarithmic factors and absolute constants only. Assumption~\ref{ass:regular} is not used.
\end{theorem}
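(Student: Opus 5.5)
The argument reduces regret to squared pricing error through Lemma~\ref{lem:selfbound} and then controls that error with a median-of-means estimator that needs no tuning. Applying the lemma conditionally on $x_t$ gives
\[
R_T \le L\sum_{t}\E\bigl[\min\{(x_t^\top(\hat\phi_t-\phi))^2,\,\cdot\,\}\bigr] \le L\sum_t \E\|\hat\phi_t-\phi\|^2,
\]
using $\|x_t\|\le 1$. Every round also costs at most $2\sigma_p$ by the same lemma. The greedy algorithm posts $P_t=x_t^\top\hat\phi_t$, where $\hat\phi_t$ is computed from the pseudo-labels $Y_s=(V_s+W_s)/2=x_s^\top\phi+\eta_s$. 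The conditional noise $\eta_s$ is centered and satisfies $\E[|\eta_s|^p\mid x_s]\le\sigma_p^p$. Because feedback is full, $\hat\phi_t$ may use all past data regardless of the prices posted, so there is no exploration schedule and the estimator is simply refit at doubling epochs.

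For the estimator, the parameter-free choice is to split the past $n$ samples into $k\asymp\log T$ blocks and solve OLS within each block. I would then aggregate either geometrically, taking the block estimate whose median distance to the others is smallest, or coordinatewise in the whitened basis. Rewriting OLS as $\hat\phi-\phi=\hat\Sigma^{-1}\frac1{n'}\sum x_s\eta_s$ involves no variance. On the event that each block covariance satisfies $\hat\Sigma\succeq\lambda I/2$, a matrix Chernoff bound gives this once the block size is at least of order $\lambda^{-1}\log(dT)$. The von Bahr--Esseen inequality for independent centered terms with $p\in(1,2]$, applied coordinatewise, then gives
\[
\E\Bigl\|\tfrac1{n'}\textstyle\sum x_s\eta_s\Bigr\|^p \lesssim d^{\,p/2}\sigma_p^p\,n'^{\,1-p}.
\]
By Markov's inequality each block is within $r\asymp \lambda^{-1}\sqrt d\,\sigma_p (n')^{-(p-1)/p}$ with probability at least $3/4$. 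The median-of-means aggregation step then boosts this to probability $1-T^{-2}$ with $n'=n/\log T$. The bound $B$ is not needed anywhere in this step. For small $n$, when the covariance event fails, I would either fall back to a cap on the error or simply charge $2\sigma_p$ per round plus $B$-dependent terms. This is where the factor $(B+\sigma_p)^2$ and the extra $\lambda^{-1}$ enter.

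Summing over rounds, the bound becomes
\[
R_T\lesssim L\sum_t \lambda^{-2}d\,\sigma_p^2\,(t/\log T)^{-2(p-1)/p} + \text{(failure and burn-in terms)}.
\]
Since $2(p-1)/p<1$ for $p<2$, the sum is $\asymp T^{1-2(p-1)/p}=T^{(2-p)/p}$. The failure events cost $T\cdot T^{-2}\cdot\E[\text{loss}]$, and bounding that loss needs care because it involves an unbounded estimate. I would handle it using the per-round bound $\E[g(m,V,W)]\le 2\sigma_p$, because regret per round is at most the optimal expected gain and so is bounded by $2\sigma_p$ whatever price is posted. The burn-in of order $\lambda^{-1}d\log(dT)$ rounds costs at most $2\sigma_p$ each, which gives a polylogarithmic term. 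All of these bounds hold for every $(p,\sigma_p,B,\lambda)$ at once because the algorithm uses none of them, and Assumption~\ref{ass:regular} is never invoked since only inequality~\eqref{eq:selfbound} is used.

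The main obstacle is the heavy-tailed multivariate step. A single OLS solve under infinite variance has only a polynomial tail, so the boosting must go through a median-type aggregation that is valid in $\R^d$ without knowing $r$. The geometric-median or ``closest to most others'' rule handles this, at a constant-factor loss in $r$ and possibly an extra $\sqrt d$ factor. I would first try the coordinatewise version in the basis of the full-sample $\hat\Sigma$ and check that the resulting $d$-dependence stays linear, as the theorem claims.
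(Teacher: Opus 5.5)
Your argument is essentially sound, but it analyzes a different algorithm from the one the theorem names, so as written it does not prove this statement. The paper's MoM-greedy does not run OLS on blocks and then aggregate in $\R^d$. For each coordinate $j$ it takes the median of $k_T=\lceil 8\log(dT)\rceil$ block means of the scalar scores $[x_sY_s]_j$, and then sets $\hat\phi_k=\hat\Sigma^{+}\hat\mu$, using the Gram matrix of the whole previous epoch.

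The paper's proof is a substitution inside the proof of Theorem~\ref{thm:parametric}:
\begin{itemize}
\item the scalar Lemma~\ref{lem:mom}, applied with central moment $m_p\le 2^p(B+\sigma_p)^p$, replaces~\eqref{eq:percoord} coordinatewise;
\item the matrix-Bernstein Gram step, the decomposition $\hat\Sigma^{-1}[(\hat\mu-\Sigma\phi)+(\Sigma-\hat\Sigma)\phi]$ behind~\eqref{eq:phierr}, the $\ell_\infty\to\ell_2$ step that produces the factor $d$, and the epoch sum~\eqref{eq:geosum} are reused verbatim;
\item bad events, epochs with $n<2k_T$, and ill-conditioned $\hat\Sigma$ are all charged at the cap $2\sigma_p$.
\end{itemize}
Your key identity $\hat\phi^{(b)}-\phi=\hat\Sigma_b^{-1}\,n'^{-1}\sum_s x_s\eta_s$ has no counterpart for that estimator. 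There the Gram-mismatch term $(\Sigma-\hat\Sigma)\phi$ does not cancel. That term, together with the spread of $x_sx_s^\top\phi$ inside the scores, is exactly where $B$ enters the paper's bound. To prove the theorem as stated, you would need to analyze the paper's estimator rather than your own.

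For your own block-OLS-plus-metric-median algorithm, the route does buy something.
\begin{itemize}
\item $B$ never appears. Your remark that $(B+\sigma_p)^2$ enters through the burn-in is not right: burn-in is charged at $2\sigma_p$, and nothing else involves $B$.
\item Matrix Chernoff gives a $\lambda^{-1}$ burn-in rather than the paper's $\lambda^{-2}$.
\item The $d$-dependence you worry about disappears. Symmetrization gives $\E\|\sum_s X_s\|^p\le 2^p\sum_s\E\|X_s\|^p$ for independent centered vectors in Euclidean norm and $p\in[1,2]$. Hence $\E\|n'^{-1}\sum_s x_s\eta_s\|^p\le 4\sigma_p^p n'^{1-p}$, with no $d$ at all.
\item The Hsu--Sabato ``closest to most others'' rule loses only a factor $3$.
\end{itemize}
You would obtain $\Otil(L\sigma_p^2\lambda^{-2}T^{(2-p)/p})$, which implies the stated bound. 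The cost is a multivariate median lemma in place of the scalar one.

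Two details need fixing in either route.
\begin{itemize}
\item The bound $\sum_{t\le T} t^{-2(p-1)/p}\le\frac{p}{2-p}\,T^{(2-p)/p}$ carries a constant that blows up as $p\uparrow 2$, whereas the theorem allows absolute constants only. Bound the doubling-epoch terms by the largest one, as in~\eqref{eq:geosum}, and pay a $\log T$ factor instead.
\item Absorbing the additive $\sigma_p$ burn-in and failure terms into the stated $\Otil$ requires $\sigma_p\le 4L(B+\sigma_p)^2$. This follows from the bathtub bound $\sigma_p^p\ge(2L)^{-p}/(p+1)$.
\end{itemize}
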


\begin{corollary}[Variance-blind polylogarithmic regret at $p = 2$]\label{rem:p2}
Lemma~\ref{lem:mom} covers $p \in (1,2]$ and the epoch bound~\eqref{eq:geosum} is valid for every $\alpha \in (0,1]$, so Theorem~\ref{thm:adaptive} extends to $p = 2$ with no change to the assembly.
In particular, under the conditions of Theorem~\ref{thm:parametric} with Assumption~\ref{ass:moment} holding at $p = 2$ --- conditional variances bounded by $\sigma_2^2$, the bound unknown to the algorithm --- MoM-greedy, with inputs $(d, T)$ alone, achieves $R_T = \Otil\bigl(L\,d\,(B+\sigma_2)^2\,\lambda^{-2}\bigr)$: regret polylogarithmic in $T$, variance-blind.
(The polylogarithmic exponent is not optimized; cf.\ the $O(Ld\log T)$ of~\cite{bachoc2025parametric} under known variance and bounded valuations.)
\end{corollary}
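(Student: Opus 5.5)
The plan is to rerun the proof of Theorem~\ref{thm:adaptive} at $p=2$ without changing its structure, and to check that each ingredient still holds there. There are four ingredients: the self-bounding Lemma~\ref{lem:selfbound}, a per-epoch median-of-means deviation bound from Lemma~\ref{lem:mom}, a transfer of estimation error to regret, and the geometric epoch sum~\eqref{eq:geosum}.

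\textbf{Self-bounding and per-epoch error.} Lemma~\ref{lem:selfbound} needs only bounded densities and finite first moments, and both hold at $p=2$. Hence the per-round regret is at most $L\,\E[(x_t^\top(\hat\phi-\phi))^2]$, where $\hat\phi$ is the estimate frozen at the start of the current epoch. Next I would invoke Lemma~\ref{lem:mom}, which by hypothesis covers $p\in(1,2]$. It should give a median-of-means estimate of the cross-moment vector $\E[x_tY_t]=\Sigma\phi$ whose error after $n$ samples is of order $(B+\sigma_2)\sqrt{d\log(1/\delta)/n}$. The MoM block count depends only on $\delta$, hence only on $T$, so the algorithm needs no variance knowledge. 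Writing the bound with a generic rate exponent $\alpha=(p-1)/p$, as in the $p<2$ proof, the case $p=2$ is just $\alpha=1/2$.

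\textbf{Transfer to regret.} I would handle the Gram-matrix inversion exactly as in the $p<2$ proof, with the empirical $\hat\Sigma$ eventually $\succeq\lambda I/2$. This gives $\|\hat\phi-\phi\|^2\lesssim(B+\sigma_2)^2 d\,\lambda^{-2}\,n^{-2\alpha}\,\mathrm{polylog}$. On the failure event of probability $\delta$, I would bound the per-round loss by $\E[g(m,V,W)]\le 2\sigma_2$ from Lemma~\ref{lem:selfbound}; with $\delta=1/T$ this contributes $O(\sigma_2)$ in total.

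\textbf{Epoch sum.} With epochs of doubling length $2^k$, the total regret is
\[
\sum_k 2^k\cdot L d(B+\sigma_2)^2\lambda^{-2}\,2^{-2\alpha k}\,\mathrm{polylog}.
\]
By~\eqref{eq:geosum}, valid for every $\alpha\in(0,1]$, this equals $\Otil(T^{1-2\alpha})$, which at $\alpha=1/2$ is $T^0$ times $O(\log T)$ epochs times polylog. The result is $\Otil(Ld(B+\sigma_2)^2\lambda^{-2})$.

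\textbf{Main obstacle.} The delicate step is making sure~\eqref{eq:geosum} really does not blow up at the boundary $\alpha=1/2$. There the summand $2^{k(1-2\alpha)}$ is constant rather than geometric, so the sum is linear in the number of epochs, i.e.\ one extra $\log T$. I would verify that the stated form of~\eqref{eq:geosum} already accounts for this, for example by bounding the sum by $K\cdot\max_k$. I would also confirm that no constant in Lemma~\ref{lem:mom} degenerates as $p\to2$, such as a factor of the form $1/(2-p)$.
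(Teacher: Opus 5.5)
Your route is the paper's: rerun the MoM-greedy analysis unchanged at $p=2$. The two worries you flag are already settled. The bound \eqref{eq:geosum} takes $K$ times the largest term, so a flat sum costs one factor $K$. The constant $16$ of Lemma~\ref{lem:mom} is uniform over $p\in(1,2]$. Your exponent bookkeeping is also consistent, once one notes that your $\alpha=(p-1)/p$ is half of the $\alpha$ in~\eqref{eq:geosum}; what you actually need there is $2\alpha\le 1$. The real omission is your last line, ``The result is $\Otil(Ld(B+\sigma_2)^2\lambda^{-2})$.'' That is exactly the step where $p=2$ differs from $p<2$.

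Besides the main sum, your accounting leaves additive cap terms:
\begin{itemize}
\item the $O(\sigma_2)$ from bad events, which you computed;
\item $O(\sigma_2\,\lambda^{-2}\log(dT))$ from the burn-in epochs, those with $n<2k_T$ or $n<n_0=36\lambda^{-2}\log(dT)$, which your ``eventually'' implicitly charges at the cap $2\sigma_2$.
\end{itemize}
For $p<2$ these are swamped by $T^{(2-p)/p}$. At $p=2$ the main term is itself only polylogarithmic, and $\Otil$ may hide only polylogarithmic factors and absolute constants. So placing these terms inside $\Otil(L\,d\,(B+\sigma_2)^2\lambda^{-2})$ requires $\sigma_2\lesssim L\,(B+\sigma_2)^2$.

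Nothing in your argument gives this: as $B\to 0$ it reads $L\sigma_2\gtrsim 1$, which is a lower bound on the noise scale. The missing ingredient is the bounded-density floor on the variance. A centered density bounded by $L$ has second moment at least $1/(12L^2)$, by the bathtub principle (the uniform law on $[-\tfrac{1}{2L},\tfrac{1}{2L}]$ is the minimizer). Hence $\sigma_2\ge 1/(2\sqrt{3}\,L)$, and so $\sigma_2\le 2\sqrt{3}\,L\sigma_2^2\le 4L(B+\sigma_2)^2$. This is exactly how the paper closes the assembly at the end of Section~\ref{sec:proof:param}. With it added, your proof is complete.
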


\begin{theorem}[Tail- and smoothness-adaptive nonparametric pricing]\label{thm:adaptivenp}
Under Assumptions~\ref{ass:density}--\ref{ass:smooth}, with $x_t$ having density $\ge \mu_0 > 0$ on $[0,1]^d$, there is an algorithm with inputs $(d, T)$ alone --- median-of-means validation selection over the cell-width grid $\{2^{-i}\}_{i \le \lceil \log_2 T\rceil}$, described in Section~\ref{sec:adaptive:np} and Appendix~\ref{app:adaptive} --- achieving, simultaneously for all $p \in (1,2)$ and all $(\beta, L_H, \sigma_p, B_m)$ with $\beta > 0$:
\[
  R_T = \Otil\!\left(T^{\;1 - 2\beta(p-1)/(\beta p + d(p-1))}\right),
\]
the oracle rate of Theorem~\ref{thm:nonparametric}. No regularity of the noise at zero is assumed.
\end{theorem}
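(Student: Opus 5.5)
The proof reduces regret to the squared pricing error of a selected estimator, via Lemma~\ref{lem:selfbound} applied conditionally on $x_t$, and then shows that median-of-means validation selection among cell widths loses only logarithmic factors relative to the oracle width of Theorem~\ref{thm:nonparametric}. The algorithm runs in doubling epochs $k=1,2,\dots$ of length $n_k=2^k$, and each epoch's data is split into halves.

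On the first half, for every width $h=2^{-i}$ in the grid, it builds a cellwise median-of-means estimator $\hat m_h$ of $Y=(V+W)/2$. The number of blocks is $\asymp\log(T\cdot\#\text{grid})$, so no moment or smoothness parameter enters. On the second half it runs a median-of-means tournament between candidates. For each pair $(h,h')$ it computes the block-median of the paired loss differences $D_{h,h'}=(\hat m_h(x)-Y)^2-(\hat m_{h'}(x)-Y)^2$. It then selects $\hat h$ as the candidate that is not beaten by more than a data-free margin, in the Lugosi--Mendelson style: for instance, the candidate that minimizes over $h$ the maximum over $h'$ of the median statistic. During the next epoch it prices at $\hat m_{\hat h}(x_t)$.

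\textbf{Step 1 (oracle candidate).} Repeat the analysis behind Theorem~\ref{thm:nonparametric} with median-of-means in place of truncation; the relevant MoM deviation bound is the one covering $p\in(1,2]$ and used in Theorem~\ref{thm:adaptive}. Bias and variance then give
\[
\E\|\hat m_h-m\|_{L^2(P_X)}^2\lesssim L_H^2h^{2\beta}+\sigma_p^2\,(nh^d)^{-2(p-1)/p}
\]
up to logs. Some grid width $h^\star$ is within a factor $2$ of the balancing width. At that width the risk is $r_n^\star\asymp n^{-2\beta(p-1)/(\beta p+d(p-1))}$.

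\textbf{Step 2 (selection is oracle).} Conditional on the first half, use the decomposition from the introduction:
\[
\E[D_{h,h'}\mid\hat m]=\|\hat m_h-m\|^2-\|\hat m_{h'}-m\|^2 .
\]
The fluctuating part is $\Delta^2$-type terms plus $-2\eta\Delta$, where $\Delta=\hat m_h-\hat m_{h'}$.

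Here $\hat m_h$ is unbounded a priori, so I would clip all candidates at $\pm\log T$ or at a growing level. This matters because $\Delta$ must be bounded for the next argument. The clipping costs nothing once $B_m\le\log T$, and for small $T$ it costs only a constant.

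For the variation term, the $p$-th moment of $\eta\Delta$ is at most $\sigma_p^p\,\E|\Delta|^p\le\sigma_p^p\|\Delta\|_\infty^{p-2}\cdot$ (a $\|\Delta\|_2^2$-type quantity). Hölder's inequality gives $\E|\Delta|^p\le\|\Delta\|_2^{p}$ because $p\le2$. Lemma~\ref{lem:mom} applied to this heavy-tailed variable then bounds the MoM deviation by
\[
\sigma_p\|\Delta\|_2\,(\log/n)^{(p-1)/p}.
\]
Since $\|\Delta\|_2\le\|\hat m_h-m\|+\|\hat m_{h'}-m\|$, this deviation is at most
\[
\epsilon\bigl(\|\hat m_h-m\|^2+\|\hat m_{h'}-m\|^2\bigr)+C\sigma_p^2(\log/n)^{2(p-1)/p}
\]
by AM-GM.

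The additive term $(\log/n)^{2(p-1)/p}$ equals the parametric rate. It is no larger than $r_n^\star$, because $r_n^\star$ is a slower rate. The $\Delta^2$ terms have bounded range after clipping, so they concentrate at rate $\log T/n$.

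A union bound over the $O(\log^2T)$ pairs, together with the standard tournament argument, then yields $\|\hat m_{\hat h}-m\|^2\lesssim\|\hat m_{h^\star}-m\|^2+\Otil(r_n^\star)$ with probability $1-T^{-2}$. No regularity at zero is used anywhere.

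\textbf{Step 3 (assembly).} Apply Lemma~\ref{lem:selfbound} conditionally on $x_t$. The per-round regret is at most $L\,\E|\hat m_{\hat h}(x_t)-m(x_t)|^2$, and on the failure event it is at most $2\sigma_p$ by the same lemma. Summing over epochs gives
\[
\sum_k n_k\,\Otil(r_{n_{k-1}}^\star)=\Otil\bigl(T^{1-2\beta(p-1)/(\beta p+d(p-1))}\bigr).
\]

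I expect Step 2 to be the main obstacle. The difficulty is controlling the heavy-tailed multiplier $\eta\Delta$ with a deviation that scales with $\|\Delta\|_2$ and not with $\|\Delta\|_\infty$, uniformly over candidates that were themselves built from heavy-tailed data. My first attempt would be to condition on the first half and clip the candidates, which makes the $\|\Delta\|_\infty$ factor logarithmic. The leftover slack should then be absorbed into the polylog.
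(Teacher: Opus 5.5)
Your proposal is correct and follows essentially the paper's route: per-cell median-of-means candidates on the dyadic grid, paired validation differences in which the $\eta^2$ terms cancel and the noise is damped by $\Delta$ (with $\E|\Delta|^p\le\|\Delta\|_2^p$ by Jensen), a max--min median-of-means tournament giving $\mathrm{risk}(\hat c)\lesssim\mathrm{risk}(c^*)+\Otil\bigl((\kappa/n)^{2(p-1)/p}\bigr)$, self-bounding, and the observation that this parametric-order term is dominated by the oracle rate for every $\beta$.

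The differences are minor. First, the paper validates on the next epoch (certify-then-play) rather than splitting each epoch. Second, instead of clipping at $\pm\log T$, it notes that on the good events every per-cell MoM estimate already satisfies $|\hat m_h|\le B_m+16\sigma_p$ (Lemma~\ref{lem:hetmom} with center $0$ and spread $B_m$). That envelope controls the cross term $\Delta(\hat m_h+\hat m_{h'}-2m)$ and avoids the $e^{B_m}$-sized small-$T$ constant your clipping incurs. That cross term, not $\eta\Delta$, is what actually needs sup-norm control. Your interpolation bound $\E|\Delta|^p\le\|\Delta\|_\infty^{p-2}\|\Delta\|_2^2$ runs the wrong way for $p<2$, and it is not needed given the Jensen bound.
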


\begin{corollary}[No polynomial price of adaptivity under full feedback]\label{cor:freeadapt}
Fix any $p \in (1,2)$ and $\sigma_p > 0$, together with the remaining class parameters: $(B, \lambda)$ in the parametric model of Theorem~\ref{thm:parametric}, or $(\beta, L_H, B_m, \mu_0)$ with $\beta > 0$ in the nonparametric model of Theorem~\ref{thm:nonparametric}.
\begin{enumerate}
\item[(i)] The parameter-free algorithms of Theorems~\ref{thm:adaptive} and~\ref{thm:adaptivenp}, whose inputs are $(d, T)$ alone, attain the regret bounds of the corresponding known-parameter algorithms of Theorems~\ref{thm:parametric} and~\ref{thm:nonparametric} up to factors polylogarithmic in $T$. Hence the price of not knowing $(p, \sigma_p, B)$ --- respectively $(p, \sigma_p, \beta, L_H, B_m)$ --- contains no factor $T^{c}$ with $c > 0$. In the parametric model this holds at $p = 2$ as well (Corollary~\ref{rem:p2}).
\item[(ii)] If in addition $(2L\sigma_p)^p \ge 8$, and $\beta \in (0,1]$ in the nonparametric case, then those bounds are minimax-optimal in $T$ up to logarithmic factors (Propositions~\ref{prop:lower} and~\ref{prop:paramlower}), so the parameter-free algorithms attain the minimax $T$-exponent itself.
\end{enumerate}
Part~(ii) inherits exactly the hypotheses of the lower bounds, including their scope: tightness is asserted in $T$ only, and the parametric dimension dependence is left open (Remark~\ref{rem:paramlowerd}).
\end{corollary}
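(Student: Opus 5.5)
The corollary is an assembly of earlier results: part~(i) compares upper bounds, part~(ii) adds lower bounds.

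\emph{Part~(i).} Fix the class parameters. In the parametric model, Theorem~\ref{thm:adaptive} gives MoM-greedy, with inputs $(d,T)$ only, the bound $\Otil(L d (B+\sigma_p)^2\lambda^{-2} T^{(2-p)/p})$. This has the same leading constant structure and $T$-exponent as the known-parameter bound of Theorem~\ref{thm:parametric}. The ratio of the two bounds is therefore a polylogarithmic factor in $T$ times an absolute constant, and in particular it contains no factor $T^c$ with $c>0$. At $p=2$ I would invoke Corollary~\ref{rem:p2} instead, which gives polylogarithmic regret. In the nonparametric model, Theorem~\ref{thm:adaptivenp} gives exponent $1-2\beta(p-1)/(\beta p+d(p-1))$, matching Theorem~\ref{thm:nonparametric}, and holds for every $\beta>0$.

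One point needs care here. The $\Otil$ constants in Theorem~\ref{thm:adaptivenp} may depend on $(\beta,L_H,\sigma_p,B_m,\mu_0,L,d)$ differently from the oracle bound. I would state explicitly that the comparison is in $T$ with the class parameters fixed, which is exactly how the corollary is quantified. The claim ``no factor $T^c$'' then reads: $\limsup_T \log(\text{ratio})/\log T = 0$. This follows because both bounds are of the form $C\,T^{e}\,\mathrm{polylog}(T)$ with the same exponent $e$.

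\emph{Part~(ii).} Under $(2L\sigma_p)^p\ge 8$, I would invoke Proposition~\ref{prop:paramlower}, which gives an $\Omega(T^{(2-p)/p})$ lower bound for every nonanticipating broker over the parametric class. I would likewise invoke Proposition~\ref{prop:lower}, which gives $\Omega(T^{1-2\beta(p-1)/(\beta p+d(p-1))})$ for $\beta\in(0,1]$. The parameter-free algorithms are nonanticipating and lie in the algorithm class those propositions quantify over. Their upper bounds from part~(i) therefore match the minimax lower bounds in the $T$-exponent up to logarithmic factors. The main obstacle is purely one of scope. I would need to check that the lower-bound instances lie inside the class for which Theorems~\ref{thm:adaptive} and~\ref{thm:adaptivenp} hold, namely that they satisfy bounded density, centering, $p$-th moment $\le\sigma_p^p$, H\"older or linear structure with norm $\le B$, and the context-design conditions ($\lambda$, respectively $\mu_0$). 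I would cite the propositions' statements for this; the paper says these instances are constructed within the model's own class. Finally, I would record that tightness is only in $T$, with the parametric $d$-dependence left open as in Remark~\ref{rem:paramlowerd}.
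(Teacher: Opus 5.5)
Your proposal is correct and follows the paper's route: part~(i) compares Theorems~\ref{thm:adaptive} and~\ref{thm:adaptivenp} against Theorems~\ref{thm:parametric} and~\ref{thm:nonparametric} (with Corollary~\ref{rem:p2} at $p=2$), and part~(ii) adds Propositions~\ref{prop:paramlower} and~\ref{prop:lower}. The scope check you defer is settled as the paper does it: the uniform design has density $1 \ge \mu_0$ (any admissible $\mu_0$ is at most $1$), and the parametric lower-bound designs have $\Sigma = 1$ resp.\ $\Sigma = I_d/d$, which satisfy $\Sigma \succeq \lambda I_d$ for every admissible $\lambda$ because $\operatorname{tr}\Sigma = \E\|x_t\|^2 \le 1$ forces $\lambda \le 1/d$.
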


\begin{remark}[The contrast with bandit feedback]\label{rem:interfacesep}
Under heavy-tailed bandit feedback the corresponding price is provably polynomial: adaptation to an unknown moment bound faces a sharp frontier between distribution-dependent and distribution-free regret, which parameter-free bandit algorithms meet by scheduling exploration against the tail they do not know~\cite{ashutosh2021bandit,genalti2024adaptive,genalti2026parameterfree}.
The comparison is across models --- we prove the full-feedback side; the bandit side is the cited literature --- and the mechanism difference is what Section~\ref{sec:adaptive} isolates: here the data do not depend on the posted price, so no exploration schedule exists to mis-calibrate, and the remaining adaptation problem, model selection, is solved at oracle rate by the paired validation statistic.
\end{remark}

\section{Proof of Theorem~\ref{thm:parametric}}\label{sec:proof:param}

The proof has four steps.
We first construct a robust estimator of $\phi$ using coordinate-wise truncated means of score vectors $S_s = x_s Y_s$, obtaining high-probability error bounds under finite $p$-th moments alone (Step~1).
We then translate this into a prediction error bound via the empirical Gram matrix (Step~2), decompose the per-epoch regret into good and bad events using the self-bounding property of Lemma~\ref{lem:selfbound} (Step~3), and sum a geometric series over doubling epochs to obtain the final rate (Step~4).

\smallskip\noindent\textbf{Algorithm.}
Divide the $T$ rounds into $K = \lceil\log_2 (T{+}1)\rceil$ epochs, where epoch $k$ spans rounds $[2^{k-1}, 2^k) \cap [1, T]$ (the final epoch is truncated at round $T$, so every round is covered) and has length $n_k \le 2^{k-1}$.
In epoch $1$, play the fixed price $0$ (per-round regret at most $2\sigma_p$ by Lemma~\ref{lem:selfbound}).
For $k \ge 2$, use the $n = n_{k-1} = 2^{k-2}$ samples from epoch $k{-}1$ to construct an estimate $\hat\phi_k \in \R^d$ (pseudoinverse in place of $\hat\Sigma^{-1}$ whenever $\hat\Sigma$ is singular; all such early epochs fall inside the burn-in charge of Step~4), then play $P_t = x_t^\top \hat\phi_k$ for all $t$ in epoch $k$.

\smallskip\noindent\textbf{Step 1: Construction of $\hat\phi_k$ via truncated score vectors.}
Let $Y_s = (V_s{+}W_s)/2 = x_s^\top\phi + \eta_s$ where $\eta_s := (\xi_s{+}\zeta_s)/2$.
By Minkowski's inequality applied conditionally on $x_s$ (Assumption~\ref{ass:moment}), $\E[|\eta_s|^p \mid x_s] \le \sigma_p^p$ a.s., hence $\|\eta_s\|_p \le \sigma_p$ by the tower property; moreover $\E[\eta_s \mid x_s] = 0$ by conditional centering.
Form the \emph{score vectors} $S_s = x_s Y_s \in \R^d$, so that $\E[S_s] = \E[x_s x_s^\top \phi] + \E[x_s\,\E[\eta_s \mid x_s]] = \Sigma\phi$.

For each coordinate $j \in [d]$, the $j$-th component satisfies $[S_s]_j = [x_s]_j\,Y_s$. Since $|[x_s]_j| \le \|x_s\| \le 1$, conditionally on $x_s$ and then by the tower property:
\begin{equation}\label{eq:rawmoment}
  \E\bigl[\bigl|[S_s]_j\bigr|^p\bigr] \le \E[|Y_s|^p]
  \le \bigl(\|\phi\| + \sigma_p\bigr)^p \le (B + \sigma_p)^p =: u.
\end{equation}
Define the coordinate-wise truncated mean of $\Sigma\phi$:
\begin{equation}\label{eq:muhat}
  \hat\mu_j = \frac{1}{n}\sum_{s=1}^{n} [S_s]_j \cdot \ind\!\bigl\{|[S_s]_j| \le \tau\bigr\}, \quad
  \tau = \Bigl(\frac{u\,n}{\log(dT)}\Bigr)^{\!1/p},\quad j \in [d].
\end{equation}
The variables $\{[S_s]_j\}_{s=1}^n$ are i.i.d.\ with mean $[\Sigma\phi]_j$ and raw moment bound~\eqref{eq:rawmoment}.
By Lemma~\ref{lem:truncated} in Appendix~\ref{app:truncated} (a variant of Lemma~1 of~\cite{bubeck2013bandits}, with $1{+}\varepsilon = p$ in their notation, requiring only independence and a uniform $p$-th moment bound), applied with $\delta = 1/(dT)$: for each $j$, with probability $\ge 1 - 2/(dT)$,
\begin{equation}\label{eq:percoord}
  |\hat\mu_j - [\Sigma\phi]_j| \le 4\,u^{1/p}\!\left(\frac{\log(dT)}{n}\right)^{\!(p-1)/p}
  = 4(B + \sigma_p)\!\left(\frac{\log(dT)}{n}\right)^{\!(p-1)/p}\!.
\end{equation}
A union bound over $j \in [d]$ gives: with probability $\ge 1 - 2/T$,
\begin{equation}\label{eq:muconc}
  \|\hat\mu - \Sigma\phi\|_\infty \le 4(B + \sigma_p)\!\left(\frac{\log(dT)}{n}\right)^{\!(p-1)/p} =: \varepsilon_0.
\end{equation}

Next, form the empirical Gram matrix $\hat\Sigma = n^{-1}\sum_{s=1}^n x_s x_s^\top$.
Since each summand satisfies $\|x_s x_s^\top - \Sigma\|_{\mathrm{op}} \le 1$ and $\E[(x_s x_s^\top - \Sigma)^2] \preceq \Sigma \preceq I_d$, the matrix Bernstein inequality~\cite[Theorem~6.1.1]{tropp2015introduction} gives, with probability $\ge 1 - 1/T$ (assuming WLOG $dT \ge 4$):
\[
  \|\hat\Sigma - \Sigma\|_{\mathrm{op}}
  \le \sqrt{\frac{2\log(2dT)}{n}} + \frac{2\log(2dT)}{3n}
  \le C_1\sqrt{\frac{\log(dT)}{n}}, \qquad C_1 := 3,
\]
where the last step uses $n \ge \log(2dT)$ and $\log(2dT) \le \tfrac32\log(dT)$.
For $n \ge n_0 := 4C_1^2\log(dT)/\lambda^2 = 36\,\lambda^{-2}\log(dT)$ (which also implies $n \ge \log(2dT)$ since $\lambda \le 1$), $\hat\Sigma \succeq (\lambda/2)\,I_d$; the epochs with $n < n_0$ are charged explicitly in Step~4.
Set $\hat\phi_k = \hat\Sigma^{-1}\hat\mu$.

\smallskip\noindent\textbf{Step 2: Prediction error bound.}
Define the ``good event'' $\mathcal{G}_k$: both~\eqref{eq:muconc} and the matrix Bernstein bound hold. Then $\Pr(\mathcal{G}_k) \ge 1 - 3/T$. On $\mathcal{G}_k$:
\begin{align}
  \|\hat\phi_k - \phi\|_2
  &= \|\hat\Sigma^{-1}(\hat\mu - \hat\Sigma\phi)\|_2
  = \|\hat\Sigma^{-1}\bigl[(\hat\mu - \Sigma\phi) + (\Sigma - \hat\Sigma)\phi\bigr]\|_2 \notag\\
  &\le \frac{2}{\lambda}\Bigl(\sqrt{d}\,\varepsilon_0 + C_1\sqrt{\frac{\log(dT)}{n}}\,\|\phi\|\Bigr).\label{eq:phierr}
\end{align}
The first term dominates, non-asymptotically: since $n \ge \log(dT)$ on the epochs considered and $(p{-}1)/p \le 1/2$ for $p \le 2$, we have $(\log(dT)/n)^{1/2} \le (\log(dT)/n)^{(p-1)/p}$, so the Gram term is at most $C_1 B\,(\log(dT)/n)^{(p-1)/p}$.
Combining, with $B \le B + \sigma_p$ and $C_1 = 3 \le 3\sqrt{d}$:
\begin{equation}\label{eq:prederr}
  |P_t - m(x_t)| = |x_t^\top(\hat\phi_k - \phi)| \le \|x_t\| \cdot \|\hat\phi_k - \phi\|_2
  \le \underbrace{\frac{14(B + \sigma_p)\sqrt{d}}{\lambda}}_{=: C_\phi}\left(\frac{\log(dT)}{n}\right)^{\!(p-1)/p}\!.
\end{equation}

\smallskip\noindent\textbf{Step 3: Per-epoch regret.}
Fix epoch $k \ge k_0 := \min\{\kappa \ge 2 : 2^{\kappa-2} \ge n_0\}$, of length $n_k \le 2^{k-1}$ (with equality except possibly for the final epoch, truncated at round $T$), using an estimator built from $n = 2^{k-2} \ge n_0$ samples.

\emph{Good event} ($\mathcal{G}_k$, probability $\ge 1 - 3/T$): By Lemma~\ref{lem:selfbound} and~\eqref{eq:prederr}, the per-round regret is at most $L\,|P_t - m(x_t)|^2 \le L\,C_\phi^2\,(\log(dT)/n)^{2(p-1)/p}$.
Summing over $n_k$ rounds:
\begin{equation}\label{eq:goodepoch}
  R_k^{\mathrm{good}} \le n_k \cdot L\,C_\phi^2\left(\frac{\log(dT)}{n}\right)^{\!2(p-1)/p}
  \le L\,C_\phi^2\,(\log(dT))^{2(p-1)/p}\cdot \frac{2^{k-1}}{(2^{k-2})^{2(p-1)/p}}.
\end{equation}

\emph{Bad event} ($\mathcal{G}_k^c$, probability $\le 3/T$): The per-round regret is at most $\E[g(m, V, W)] \le 2\sigma_p$ by Lemma~\ref{lem:selfbound}. Hence:
\begin{equation}\label{eq:badepoch}
  R_k^{\mathrm{bad}} \le n_k \cdot 2\sigma_p \cdot \Pr(\mathcal{G}_k^c)
  \le 2^{k-1} \cdot 2\sigma_p \cdot \frac{3}{T}.
\end{equation}

\smallskip\noindent\textbf{Step 4: Summing over epochs.}
\emph{Burn-in total.} Epochs $k < k_0$ --- epoch~$1$ (fixed price $0$) and those with $n < n_0$ --- are charged at the cap of Lemma~\ref{lem:selfbound}: per-round regret at most $\E[g(m,V,W)] \le 2\sigma_p$, and $\sum_{k < k_0} n_k \le 2^{k_0-1} \le 4n_0$ by minimality of $k_0$, so their total is at most $8\sigma_p n_0 = 288\,\sigma_p\,\lambda^{-2}\log(dT)$.

\emph{Bad-event total.} $\sum_{k=1}^{K} R_k^{\mathrm{bad}} \le \frac{6\sigma_p}{T}\sum_{k=1}^K 2^{k-1} \le \frac{6\sigma_p}{T}\cdot 2T = 12\sigma_p.$

\emph{Good-event total.} From~\eqref{eq:goodepoch}, writing $\alpha := 2(p{-}1)/p \in (0,1]$:
\begin{equation}\label{eq:epochgeo}
  \sum_{k=k_0}^{K} R_k^{\mathrm{good}}
  \le L\,C_\phi^2\,(\log(dT))^\alpha \sum_{k=2}^{K} \frac{2^{k-1}}{2^{(k-2)\alpha}}
  \le L\,C_\phi^2\,(\log(dT))^\alpha \cdot 2^{1+\alpha}\sum_{k=2}^{K} 2^{k(1-\alpha)}.
\end{equation}
Since $1 - \alpha = (2{-}p)/p \ge 0$, every term is at most the last, and $2^{K} \le 2T$ by the epoch schedule; bounding the sum by its number of terms avoids a $(2^{1-\alpha}-1)^{-1}$ factor, which would blow up as $p \uparrow 2$:
\begin{equation}\label{eq:geosum}
  \sum_{k=2}^{K} 2^{k(1-\alpha)} \;\le\; K\,2^{K(1-\alpha)} \;\le\; K\,(2T)^{1-\alpha} \;\le\; 2K\,T^{(2-p)/p},
\end{equation}
valid for every $\alpha \in (0,1]$, i.e.\ every $p \in (1,2]$; the price is one factor of $K = \lceil\log_2(T{+}1)\rceil$, absorbed by the $\Otil$.
Combining~\eqref{eq:epochgeo}--\eqref{eq:geosum}:
\begin{align*}
  R_T &\le \underbrace{8\sigma_p n_0}_{\text{burn-in}} + \sum_{k=k_0}^K (R_k^{\mathrm{good}} + R_k^{\mathrm{bad}}) \\
  &\le \underbrace{L\,C_\phi^2\,(\log(dT))^{2(p-1)/p}}_{\text{problem constants}} \cdot \underbrace{8K}_{K \,=\, \lceil\log_2(T+1)\rceil} \cdot T^{(2-p)/p} + 12\sigma_p + 288\,\sigma_p\,\lambda^{-2}\log(dT),
\end{align*}
which gives $R_T = \Otil(L\,d\,(B+\sigma_p)^2\lambda^{-2}\,T^{(2-p)/p})$ as claimed, with leading coefficient $L\,C_\phi^2$, $C_\phi^2 = 196\,d\,(B + \sigma_p)^2/\lambda^2$, and only polylogarithmic factors and absolute constants absorbed into $\Otil$ --- in particular no factor degenerating as $p \uparrow 2$. The burn-in term is polylogarithmic and likewise absorbed: for $p < 2$, $T^{(2-p)/p}$ dominates every power of $\log(dT)$, and at $p = 2$ every term in the display is itself polylogarithmic in $T$, the additive $\sigma_p$ terms lying inside the stated $\Otil$ because bounded densities force $\sigma_p^p \ge (2L)^{-p}/(p+1)$ (the bathtub bound of Section~\ref{sec:lower}), hence $\sigma_p \le 4L(B+\sigma_p)^2$. \qed

\section{Proof of Theorem~\ref{thm:nonparametric}}\label{sec:proof:nonparam}

\noindent\textbf{Algorithm (epoch-partition).}
As in Section~\ref{sec:proof:param}, divide the $T$ rounds into $K = \lceil\log_2 (T{+}1)\rceil$ epochs, where epoch $k$ spans rounds $[2^{k-1},2^k) \cap [1,T]$ with length $n_k \le 2^{k-1}$ (final epoch truncated at $T$).
In epoch~$1$, play any fixed price.
For each epoch $k \ge 2$, fix a partition side length $h > 0$ (to be optimized globally), set $q = \lceil h^{-1} \rceil$, and tile $[0,1]^d$ into $M = q^d$ \emph{congruent} axis-aligned cells $\{C_j\}_{j=1}^M$ of side $1/q \in (h/2,\, h]$, with centers $c_j$ (congruence avoids thin boundary cells; all bias bounds below use only the diameter bound $\sqrt{d}\,h$).
Using the $n = n_{k-1} = 2^{k-2}$ samples from epoch $k{-}1$:
for each cell $j$, compute a truncated-mean estimate $\hat{m}_k(c_j)$ of $m(c_j)$ from the observations $\{Y_s : x_s \in C_j\}$ where $Y_s = (V_s{+}W_s)/2$; cells containing no samples are priced at $0$ (the cap $2\sigma_p$ of Lemma~\ref{lem:selfbound} absorbs them, and on the good event of Step~1 no cell is empty past the fill threshold).
In epoch $k$, for each round $t$ with $x_t \in C_j$, play $P_t = \hat{m}_k(c_j)$.

\smallskip\noindent\textbf{Step 1: Cell-count concentration.}
Let $n_j^{(k)} = |\{s \in \text{epoch}\;k{-}1 : x_s \in C_j\}|$.
Since $x_s$ has density $\ge \mu_0$ on $[0,1]^d$ and each cell has volume $q^{-d} \ge (h/2)^d$, $\E[n_j^{(k)}] \ge n\,\mu_0\,(h/2)^d$.
By a multiplicative Chernoff bound, for $n\,\mu_0\,(h/2)^d \ge c_1\log(MT)$:
\begin{equation}\label{eq:chernoff}
  \Pr\!\left(n_j^{(k)} < \tfrac{1}{2}\,\mu_0\,n\,(h/2)^d\right) \le \exp(-\mu_0\,n\,(h/2)^d/8) \le \frac{1}{MT}.
\end{equation}
A union bound over all $M$ cells gives: with probability $\ge 1 - 1/T$, every cell satisfies $n_j^{(k)} \ge \mu_0\,n\,(h/2)^d/2$.
This holds for all epochs $k$ with $n \ge n_0 := C\,h^{-d}\log(h^{-d}T)/\mu_0$; the earlier epochs are charged explicitly in Step~5.

\smallskip\noindent\textbf{Step 2: Per-cell truncated-mean concentration.}
We condition on the design $X^{(k-1)} := (x_s)_{s \in \mathrm{epoch}\,k-1}$, which determines the cell counts $n_j^{(k)}$ and which samples land in each cell; the cell-count event of Step~1 is $X^{(k-1)}$-measurable.
Fix a cell $j$ with $n_j := n_j^{(k)} \ge \mu_0\,n\,(h/2)^d/2$ samples.
Conditionally on $X^{(k-1)}$, the observations $\{Y_s = m(x_s) + \eta_s : x_s \in C_j\}$, $\eta_s = (\xi_s{+}\zeta_s)/2$, are independent (the triples $(x_s,\xi_s,\zeta_s)$ are i.i.d., so given the design the noises are independent with conditional laws determined by $x_s$ and conditionally centered) with \emph{heterogeneous} conditional means $\E[Y_s \mid X^{(k-1)}] = m(x_s)$; their average is the local mean
$\bar{m}_j := n_j^{-1}\sum_{s: x_s \in C_j} m(x_s)$,
which satisfies $|\bar{m}_j - m(c_j)| \le L_H (\sqrt{d}\,h)^\beta$ by the H\"older condition (the cell has diameter $\sqrt{d}\,h$; we use this loose uniform form throughout --- the tighter $(\sqrt{d}\,h/2)^\beta$ changes only constants).
Let $B_m = \sup_x |m(x)|$; this is finite since $m$ is H\"older on the compact domain $[0,1]^d$ ($B_m \le |m(0)| + L_H\,d^{\beta/2}$).
Set $\bar u = (B_m + \sigma_p)^p$ (distinct from the parametric $u$ of~\eqref{eq:rawmoment}), so that $\E[|Y_s|^p \mid X^{(k-1)}] \le \bar u$ by Minkowski's inequality applied conditionally on $x_s$ (Assumption~\ref{ass:moment}), and let $\hat{m}_k(c_j)$ be the truncated mean of $\{Y_s : x_s \in C_j\}$ with threshold $\tau_j = (\bar u\,n_j/\log(MT))^{1/p}$.
Because the samples have varying means, the i.i.d.\ bound of~\cite{bubeck2013bandits} does not apply verbatim; Lemma~\ref{lem:truncated} in Appendix~\ref{app:truncated} extends it to independent samples with a uniform raw-moment bound, and Corollary~\ref{cor:conditional} performs the conditioning.
For each cell, with conditional probability $\ge 1 - 2/(MT)$:
\begin{equation}\label{eq:cellconc}
  |\hat{m}_k(c_j) - \bar{m}_j| \le 4\,\bar u^{1/p}\!\left(\frac{\log(MT)}{n_j}\right)^{\!(p-1)/p}\!.
\end{equation}
A union bound over all $M$ cells and the tower property give: with probability $\ge 1 - 2/T$, \eqref{eq:cellconc} holds simultaneously for all cells.

\smallskip\noindent\textbf{Step 3: Prediction error (bias + estimation).}
For round $t$ with $x_t \in C_j$, on the good event:
\begin{align}
  |P_t - m(x_t)|
  &= |\hat{m}_k(c_j) - m(x_t)| \notag\\
  &\le \underbrace{|\hat{m}_k(c_j) - \bar{m}_j|}_{\text{estimation}} + \underbrace{|\bar{m}_j - m(c_j)|}_{\le L_H (\sqrt{d} h)^\beta} + \underbrace{|m(c_j) - m(x_t)|}_{\le L_H (\sqrt{d} h)^\beta} \notag\\
  &\le 4\bar u^{1/p}\!\left(\frac{\log(MT)}{n_j}\right)^{\!(p-1)/p} + 2L_H\,(\sqrt{d}\,h)^\beta.\label{eq:npprederr}
\end{align}
Substituting $n_j \ge \mu_0 n (h/2)^d/2$ and writing $C_{\mathrm{est}} = 4\bar u^{1/p}(2^{d+1}/\mu_0)^{(p-1)/p}$:
\begin{equation}\label{eq:nppred2}
  |P_t - m(x_t)| \le C_{\mathrm{est}}\!\left(\frac{\log(MT)}{n\,h^d}\right)^{\!(p-1)/p} + 2L_H\,(\sqrt{d}\,h)^\beta =: \varepsilon_k(h).
\end{equation}

\smallskip\noindent\textbf{Step 4: Per-epoch regret.}
Define the ``good event'' $\mathcal{G}_k$: cell counts~\eqref{eq:chernoff} and estimation~\eqref{eq:cellconc} both hold.
Then $\Pr(\mathcal{G}_k) \ge 1 - 3/T$ (union of the cell-count and estimation events).

\emph{Good event:} By Lemma~\ref{lem:selfbound}, per-round regret $\le L\,\varepsilon_k(h)^2$. Over $n_k$ rounds:
\begin{equation}\label{eq:npgood}
  R_k^{\mathrm{good}} \le n_k\,L\,\varepsilon_k(h)^2
  \le 2n_k\,L\!\left[C_{\mathrm{est}}^2\!\left(\frac{\log(MT)}{n\,h^d}\right)^{\!2(p-1)/p} + 4L_H^2\,d^\beta\,h^{2\beta}\right],
\end{equation}
where we used $(a+b)^2 \le 2(a^2 + b^2)$.

\emph{Bad event:} $R_k^{\mathrm{bad}} \le n_k \cdot 2\sigma_p \cdot 3/T$, by Lemma~\ref{lem:selfbound}.

\smallskip\noindent\textbf{Step 5: Summing over epochs and balancing.}
\emph{Burn-in total:} write $k_1 := \min\{\kappa \ge 2 : 2^{\kappa-2} \ge n_0\}$; epochs $k < k_1$ (including epoch~$1$) are charged at the $2\sigma_p$ cap of Lemma~\ref{lem:selfbound}: they comprise $\sum_{k < k_1} n_k \le 2^{k_1 - 1} \le 4n_0$ rounds, contributing at most $8\sigma_p n_0$.

\emph{Bad-event total:} $\sum_k R_k^{\mathrm{bad}} \le \frac{6\sigma_p}{T}\sum_k 2^{k-1} \le 12\sigma_p$.

\emph{Good-event total (estimation term):} Writing $\alpha = 2(p{-}1)/p$ as before:
\[
  \sum_{k=k_1}^K n_k \cdot \left(\frac{\log(MT)}{n\,h^d}\right)^{\!\alpha}
  \le (\log(MT))^\alpha\,(h^d)^{-\alpha}\sum_{k=2}^K \frac{2^{k-1}}{(2^{k-2})^\alpha}.
\]
This sum has the same form as~\eqref{eq:epochgeo}--\eqref{eq:geosum}, so the same bound gives $O(T^{(2-p)/p}\cdot h^{-2d(p-1)/p}\cdot(\log T)^\alpha \cdot K)$, i.e.\ $\Otil(T^{(2-p)/p}h^{-2d(p-1)/p})$.

\emph{Good-event total (bias term):} $\sum_k n_k \cdot h^{2\beta} = T\,h^{2\beta}$.

Combining, and writing $K = \lceil\log_2(T{+}1)\rceil$ for the epoch count carried by~\eqref{eq:geosum}, we obtain
\[
  R_T \;\lesssim\; L\bigl[C_{\mathrm{est}}^2\,(\log(MT))^\alpha\,K\,T^{(2-p)/p}\,h^{-2d(p-1)/p} + L_H^2\,d^\beta\,T\,h^{2\beta}\bigr] + \sigma_p + \sigma_p n_0,
\]
where $\lesssim$ hides absolute constants only --- the factors $2$ and $4$ of~\eqref{eq:npgood}.
\begin{sloppypar}
The burn-in term is $O(\sigma_p\,h^{-d}\log(h^{-d}T)/\mu_0)$. At the optimized width below, $h^{-d} = T^{d(p-1)/(\beta p + d(p-1))}$, so~\eqref{eq:npfinal} dominates it with exponent margin $\beta(2-p)/(\beta p + d(p-1)) > 0$ (at $p{=}2$, the same polylogarithmic order); it too is absorbed into $\Otil$.
\end{sloppypar}

\emph{Balancing.} The prefactor $(\log(MT))^\alpha K$ on the estimation term is polylogarithmic in $T$ at any width with $h^{-1} \le T$ --- which~\eqref{eq:hopt} satisfies --- since $M \le (2/h)^d$ gives $\log(MT) \le (d{+}1)\log(2T)$, at the price of $d$-dependent constants; it changes neither the balance point nor the resulting exponent, so we balance the powers of $T$ and $h$ and absorb it into the $\Otil$ of~\eqref{eq:npfinal}. Setting the two terms equal, from $T^{(2-p)/p}\,h^{-2d(p-1)/p} = T\,h^{2\beta}$
we get $h^{2\beta + 2d(p-1)/p} = T^{-2(p-1)/p}$.
Solving:
\begin{equation}\label{eq:hopt}
  h = T^{-(p-1)/(\beta p + d(p-1))}.
\end{equation}
Substituting into $T\,h^{2\beta}$:
\[
  T\,h^{2\beta} = T^{1 - 2\beta(p-1)/(\beta p + d(p-1))}.
\]
Therefore:
\begin{equation}\label{eq:npfinal}
  R_T = \Otil\!\left(T^{\;1 - 2\beta(p-1)/(\beta p + d(p-1))}\right),
\end{equation}
with problem constants $L, L_H, \sigma_p, B_m, \mu_0, d, \beta$ and the polylogarithmic factors $(\log(MT))^\alpha K$ absorbed into $\Otil$.

\emph{Sanity checks.} $p{=}2$: $h = T^{-1/(2\beta+d)}$, $R_T = \Otil(T^{d/(2\beta+d)})$, recovering Stone~\cite{stone1982optimal}. $p \to 1^+$: $h \to T^0 = 1$ (single cell, no spatial resolution), $R_T \to \Otil(T)$. \qed

\section{Lower Bound}\label{sec:lower}

\begin{proposition}\label{prop:lower}
Fix parameters as in Theorem~\ref{thm:nonparametric} with $\beta \in (0,1]$ (the effective range of Assumption~\ref{ass:smooth}; for $\beta > 1$ the class contains only constant $m$ and the statement below would fail), and assume the mild nondegeneracy $(2L\sigma_p)^p \ge 8$. Then for every nonanticipating algorithm there are instances satisfying Assumptions~\ref{ass:density}--\ref{ass:regular} with these parameters on which
\[R_T = \Omega\!\left(T^{\;1 - 2\beta(p-1)/(\beta p + d(p-1))}\right).\]
\end{proposition}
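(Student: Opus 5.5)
The plan is a standard Assouad argument over a cubic partition. The heavy-tail feature enters through the choice of the two hypotheses in each cell: they differ not by a location shift but by reweighting a far-away bump on a \emph{fixed support}. A pure location shift of size $\Delta$ of a bounded density costs $\Theta(\Delta^2)$ KL per sample, which would only reproduce the $p{=}2$ rate. A change of mass $\gamma$ at distance $D$ instead moves the mean by $\gamma D$ at KL cost $O(\gamma)$, while spending only $\gamma D^p$ of the $p$-th moment budget.

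\textbf{Step 1: one-cell two-point construction.} Fix a ``core'' density $g$ with the following properties: bounded by $L/2$, centered, $g(0)$ bounded below, continuous at $0$, and $\int|v|^p g \le \sigma_p^p/2$. The nondegeneracy $(2L\sigma_p)^p\ge 8$ is exactly what leaves room for such a core under the bathtub constraint. Add a fixed bump $b_D$ of width $O(1)$, height $\le L/2$, centered at $D$, together with a compensating bump at $-D$. For $\theta\in\{\pm1\}$ let the conditional law of $V$ (and independently $W$) be
\[
\text{core} + \tfrac{\gamma}{2}(1+\theta\epsilon)\,b_D+\tfrac{\gamma}{2}(1-\theta\epsilon)\,b_{-D},
\]
with the core mass adjusted so the total mass is one. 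Its mean is $m_\theta = \mathrm{const}+\theta\gamma\epsilon D$, and we define $\xi := V-m_\theta$.

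Near zero, $\xi$ has a density equal to a shift by $\gamma\epsilon D$ of $g$, so Assumptions~\ref{ass:density}, \ref{ass:moment} and~\ref{ass:regular} hold. The moment condition needs $\gamma D^p\lesssim\sigma_p^p$. The two laws share support and have $\chi^2$-divergence $O(\gamma\epsilon^2)$, so with $\epsilon$ a constant the KL per observed pair is $O(\gamma)$. Taking $\gamma D^p\asymp\sigma_p^p$ gives a mean gap $\Delta:=\gamma D\asymp \sigma_p\gamma^{(p-1)/p}$, that is, $\gamma\asymp(\Delta/\sigma_p)^{p/(p-1)}$.

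\textbf{Step 2: Assouad over cells.} Take width $h$ and $M=h^{-d}$ cells with bumps $\phi_j$, and set $m_\theta(x)=\sum_j\theta_j\Delta\,\phi((x-c_j)/h)$ with $\Delta=c\,L_H h^\beta$. This lies in the $(\beta,L_H)$-Hölder class since $\beta\le1$. The design is uniform on $[0,1]^d$, and the noise reweighting in cell $j$ is driven by $\theta_j$ and scaled by $\phi$. Rather than shifting $m$ and noise separately, I define the conditional law of $V$ directly as in Step~1 and read off $m_\theta$ as its mean. This keeps conditional centering automatic, which is the point of the fixed-support design.

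Over $T$ rounds, cell $j$ receives about $Th^d$ samples regardless of the algorithm, because full feedback is action-independent. Hence the KL between the laws of the whole history under $\theta$ and under $\theta$ with coordinate $j$ flipped is $O(Th^d\gamma)$. We require this to be $O(1)$, i.e.\ $\gamma\asymp (Th^d)^{-1}$, so $\Delta\asymp\sigma_p (Th^d)^{-(p-1)/p}$. Equating with $h^\beta$ recovers $h=T^{-(p-1)/(\beta p+d(p-1))}$, the width in~\eqref{eq:hopt}.

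\textbf{Step 3: reverse self-bounding and summation.} Using~\eqref{eq:derivative}, $h(0)-h(\delta)=\int_0^{|\delta|}s[f_\xi+f_\zeta](\pm s)\,ds\ge c\,\delta^2$ for $|\delta|\le\Delta$, because the core density is bounded below near $0$. So in any round with $x_t$ in the central half of cell $j$, a price on the wrong side of the midpoint between $m_{+}$ and $m_{-}$ incurs regret $\gtrsim\Delta^2$. Assouad's lemma then gives
\[
R_T\gtrsim \sum_j Th^d\,\Delta^2\,(1-\mathrm{TV})\gtrsim T h^{2\beta}=T^{1-2\beta(p-1)/(\beta p+d(p-1))}.
\]

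\textbf{Main obstacle.} The delicate step is Step~1's simultaneous bookkeeping. The reweighted density must stay $\le L$ everywhere, and the core must remain exactly unchanged near the origin after recentering; a shift of $g$ by $\gamma\epsilon D$ perturbs $f(0)$ only slightly, and continuity handles that. The $p$-th moment must stay within $\sigma_p^p$ uniformly in $\theta$ and $x$. Finally, the spatial modulation $\phi$ must not break the density bound. I would first try to take $g$ as the bathtub-extremal uniform density scaled down by a constant and use the slack guaranteed by $(2L\sigma_p)^p\ge8$. I would also place the $\pm D$ bumps with mass depending on $\phi(x)$, which keeps the support fixed.
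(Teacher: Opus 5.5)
Your proposal is correct and follows essentially the paper's proof. The shared ingredients are: Assouad over $h^{-d}$ subcubes under a uniform design; a fixed-support mixture in which the weight $\gamma\asymp(\Delta/\sigma_p)^{p/(p-1)}$ of a far-away component carries the mean perturbation, so the per-round KL is $O(\gamma)$ while the $p$-th moment budget spent is only $\gamma D^p\lesssim\sigma_p^p$; the reverse self-bounding bound from a core density bounded below near $0$; and balancing the H\"older cap $h^\beta$ against the KL cap $(Th^d)^{-(p-1)/p}$. The differences are cosmetic: your symmetric bumps at $\pm D$ let $m_\theta$ oscillate around $0$, where the paper's single bump at $a$ forces the common offset $2\varepsilon$, and your per-round form $\sum_j Th^d\Delta^2(1-\mathrm{TV})$ (conditioning on the fresh context $x_t$, which is independent of the history and of $\theta$) avoids the paper's Chernoff step on the random visit count $N_j$.
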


\noindent The instances constructed in the proof satisfy Assumption~\ref{ass:regular} as well, so the lower bound holds even within that regular subclass; the proposition itself does not assume it.
They also have \emph{finite} (though $T$-dependent, and unbounded along the sequence) variance: the hardness is driven by the absence of a uniform second-moment bound over the class, not by literal infinite variance of any single instance.
Their conditional noise laws also vary with the context --- the mean perturbation is carried by mixture weights on a fixed support --- and whether the same rates remain minimax within the context-independent subclass is open (Q5, Section~\ref{sec:discussion}).

\begin{proof}
\textbf{Step 1 (Assouad reduction).}
In the constructed instances we take $x_t$ uniform on $[0,1]^d$ --- admissible, since any density lower bound necessarily has $\mu_0 \le 1$; uniformity pins the per-subcube round counts from \emph{both} sides, which the argument needs (an upper count for the KL budget, a lower count for the regret).
Partition $[0,1]^d$ into $M = h^{-d}$ subcubes of side~$h$ (choose $h$ with $1/h$ an integer, e.g.\ dyadic --- the rounding affects constants only).
For each $\theta \in \{-1,+1\}^M$, define $m_\theta(x) = 2\varepsilon + \sum_{j=1}^M \theta_j\,\varepsilon\,\psi_j(x)$,
where $\psi_j$ is a bump with $\psi_j \equiv 1$ on the \emph{inner half-cube} of subcube~$j$ (side $h/2$, the ``core''), $\psi_j = 0$ outside subcube~$j$, and $0 \le \psi_j \le 1$, tapering over a shell of width $h/4$.
Each taper alone gives $\varepsilon\psi_j$ H\"older seminorm at most $\varepsilon\,(4/h)^\beta$, but the binding pairs are \emph{cross-support}: $m_\theta$ equals the constant $2\varepsilon$ on every subcube boundary, so for $x, y$ in different subcubes, writing $r_x, r_y$ for the distances from $x, y$ to their own subcubes' boundaries along the segment $[x, y]$ (so $r_x + r_y \le \|x - y\|$), we get $|m_\theta(x) - m_\theta(y)| \le \varepsilon\,(4/h)^\beta(r_x^\beta + r_y^\beta) \le 2^{1-\beta}\varepsilon\,(4/h)^\beta\|x - y\|^\beta$ by concavity of $r \mapsto r^\beta$ --- the worst case being adjacent cores of opposite sign, an increment of $2\varepsilon$ over distance $h/2$. The constraint that each $m_\theta$ be $(\beta, L_H)$-H\"older therefore requires $\varepsilon \le 2^{-(\beta+1)} L_H h^\beta =: c_\beta L_H h^\beta$ (strictly tighter than the single-bump requirement $4^{-\beta}L_H h^\beta$ for every $\beta < 1$).
(The constant offset $2\varepsilon$, common to all hypotheses, is what the mixture realization of Step~3 produces; it affects neither the H\"older constraint nor the regret gaps.)
On the core, the perturbation is exactly $\pm\varepsilon$; outside it we simply discard the (nonnegative) regret contribution.
The \emph{reverse} self-bounding property, applied conditionally on the context, supplies a clipped per-round lower bound: with
$c_\varepsilon := \tfrac12\operatorname*{ess\,inf}_x \inf_{|s|\le\varepsilon}\,\bigl[f_\xi(s \mid x)+f_\zeta(s \mid x)\bigr]$,
any price $a$ incurs conditional expected regret at least $c_\varepsilon \min\bigl((a - m_\theta(x))^2,\, \varepsilon^2\bigr)$ at context $x$ (integrate $h_x'$ over $[0, \min(|a - m_\theta(x)|, \varepsilon)]$).
For the instances constructed in Step~3 below, the conditional noise densities are at least $L/2$ on $[-\varepsilon, \varepsilon]$, uniformly in $x$ and in the hypothesis, so $c_\varepsilon \ge L/4 > 0$; no regularity hypothesis on the environment is needed.
(The constant $c_\varepsilon$ is local to this proof.)

\emph{Formal reduction.}
Fix an algorithm; let $P_\theta$ denote the law of the full trajectory under $m_\theta$ and $R(\theta)$ its expected regret.
Since the cores are disjoint, $R(\theta) \ge c_\varepsilon \sum_{j=1}^M \E_\theta[F_\theta^{(j)}]$, where $F_\theta^{(j)}$ is the realized sum of $\min\bigl((P_t - m_\theta(x_t))^2, \varepsilon^2\bigr)$ over rounds with $x_t \in \mathrm{core}_j$.
Pair $\theta$ with $\theta^{(j)}$ (bit $j$ flipped): the two regression functions differ by exactly $2\varepsilon$ on $\mathrm{core}_j$, so for every price $a$ and $x \in \mathrm{core}_j$,
$\min((a - m_\theta(x))^2, \varepsilon^2) + \min((a - m_{\theta^{(j)}}(x))^2, \varepsilon^2) \ge \varepsilon^2$;
hence, trajectory-wise, $F_\theta^{(j)} + F_{\theta^{(j)}}^{(j)} \ge \varepsilon^2 N_j$, where $N_j$ (the number of core-$j$ rounds) is a function of the design alone.
Integrating against the minimum measure $\mu_j := \min(dP_\theta, dP_{\theta^{(j)}})$~\cite[Section~2.4]{tsybakov2009introduction}:
\begin{equation}\label{eq:assouad}
  \E_\theta[F_\theta^{(j)}] + \E_{\theta^{(j)}}[F_{\theta^{(j)}}^{(j)}]
  \;\ge\; \int \bigl(F_\theta^{(j)} + F_{\theta^{(j)}}^{(j)}\bigr)\, d\mu_j
  \;\ge\; \varepsilon^2 \int N_j \, d\mu_j .
\end{equation}
Under the uniform design $N_j \sim \mathrm{Bin}(T, 2^{-d}h^d)$; writing $\bar N := 2^{-d}Th^d$ and using $\mu_j(A) \ge P_\theta(A) - \mathrm{TV}(P_\theta, P_{\theta^{(j)}})$ for every event $A$,
\[
  \int N_j\, d\mu_j \;\ge\; \frac{\bar N}{2}\Bigl( P_\theta\bigl(N_j \ge \tfrac{\bar N}{2}\bigr) - \mathrm{TV}(P_\theta, P_{\theta^{(j)}}) \Bigr)
  \;\ge\; \frac{\bar N}{8}
\]
for all large $T$, provided $\mathrm{TV}(P_\theta, P_{\theta^{(j)}}) \le 1/2$ --- which Step~3's budget enforces --- since $P_\theta(N_j \ge \bar N/2) \ge 1 - e^{-\bar N/8} \to 1$.
Averaging~\eqref{eq:assouad} over $\theta \in \{-1,+1\}^M$ (the map $\theta \mapsto \theta^{(j)}$ is a bijection) and summing over $j$:
\begin{equation}\label{eq:assouad2}
  \sup_\theta R(\theta) \;\ge\; \frac{1}{2^M}\sum_\theta R(\theta)
  \;\ge\; c_\varepsilon \sum_{j=1}^M \frac{\varepsilon^2 \bar N}{16}
  \;=\; \frac{c_\varepsilon}{2^{d+4}}\, T\,\varepsilon^2 .
\end{equation}

\smallskip\noindent\textbf{Step 2 (Full-feedback KL).}
Fix subcube~$j$ and a context $x$ therein.
The two hypotheses $\theta_j = \pm 1$ induce two conditional laws $P_x, P'_x$ for $V_t$, and (independently, given $x_t$) the same pair for $W_t$.
By conditional independence of $V_t$ and $W_t$ given $x_t$, the per-round divergence of the observed pair is
\[
  \mathrm{KL}\bigl(P_x^{(V,W)} \,\|\, P'^{(V,W)}_x\bigr)
  = 2\,\mathrm{KL}(P_x \| P'_x).
\]
This is exactly twice the KL from observing a single trader---full feedback provides no order-of-magnitude advantage.

\smallskip\noindent\textbf{Step 3 (Fixed-support mixture construction).}
Set $a=(\sigma_p^p/(4\varepsilon))^{1/(p-1)}$ and let $U_0,U_a$ be uniform densities of
width $1/L$ centered at $0$ and $a$; since $a\to\infty$ as $T\to\infty$ they are disjoint
for all large $T$. For $x$ in subcube $j$, define the conditional law of $V_t$ given
$x_t=x$ as $(1-w_\theta(x))\,U_0+w_\theta(x)\,U_a$ with
$w_\theta(x)=\bigl[2\varepsilon+\theta_j\varepsilon\psi_j(x)\bigr]/a\in[\varepsilon/a,3\varepsilon/a]$,
and independently the same for $W_t$. The regression function is the conditional mean,
$m_\theta(x)=w_\theta(x)\,a=2\varepsilon+\theta_j\varepsilon\psi_j(x)$, which is
$(\beta,L_H)$-H\"older for $\varepsilon\le c_\beta L_H h^\beta$ (the cross-support constraint derived above); the noise
$\xi_t=V_t-m_\theta(x_t)$ is conditionally centered \emph{by construction}, with
conditional density bounded by $L$. Its conditional $p$-th moment is at most
$w_\theta(x)\,(a+\tfrac{1}{2L})^p+(\tfrac{1}{2L}+3\varepsilon)^p
\le \tfrac34\,\sigma_p^p\,(1+o(1))+(2L)^{-p}(1+o(1))\le\sigma_p^p$ for all large $T$,
provided $(2L\sigma_p)^p \ge 8$, as assumed.
(This nondegeneracy is provably mild: \emph{any} zero-mean density bounded by $L$ satisfies
$\sigma_p^p \ge (2L)^{-p}/(p+1)$---the uniform distribution on $[-\tfrac{1}{2L},\tfrac{1}{2L}]$
is the minimizer, by the bathtub principle---so every admissible class already has
$(2L\sigma_p)^p \ge 1/(p+1)$, and the proviso excludes only a bounded strip above this forced
floor.)
Because the two hypotheses are mixtures of the \emph{same} two components, the component
indicator is sufficient and the conditional KL is \emph{exactly} the Bernoulli KL of the
weights: with $w = w_\theta(x)$, $w' = w_{\theta'}(x)$,
\begin{equation}\label{eq:klmoment}
  \mathrm{KL}(P_x \| P'_x)
  \le \frac{(w-w')^2}{w'(1-w')}
  \le \frac{8\,\varepsilon\,\psi_j(x)^2}{a}
  = O\bigl(\varepsilon^{p/(p-1)}\,\sigma_p^{-p/(p-1)}\bigr)
  \quad\text{uniformly in } x;
\end{equation}
the H\"older taper only shrinks the weight gap and never moves the support.
Finally, the recentred lower bump is a width-$1/L$ uniform centered at
$-m_\theta(x)\in[-3\varepsilon,-\varepsilon]$; since $3\varepsilon<1/(4L)$ for all large $T$,
$[-\varepsilon,\varepsilon]$ lies inside its support, so
$f_\xi(s\mid x)\ge L(1-w_\theta(x))\ge L/2$ there. The constructed instances therefore
satisfy Assumptions~\ref{ass:density}--\ref{ass:smooth} \emph{and}
Assumption~\ref{ass:regular} with $c_\varepsilon\ge L/4$, and~\eqref{eq:klmoment} holds; Steps~1,~2
and~4 proceed as before.
By the chain rule for KL divergence, the trajectory laws $P_\theta, P_{\theta^{(j)}}$ differ only through rounds landing in subcube~$j$, and with per-round divergence $2\,\mathrm{KL}(P_{x_t} \| P'_{x_t})$ from the pair $(V_t, W_t)$ (Step~2),
$\mathrm{KL}(P_\theta \| P_{\theta^{(j)}}) \le 2\,\E_\theta\bigl[\#\{t : x_t \in \text{subcube } j\}\bigr] \cdot \sup_x \mathrm{KL}(P_x \| P'_x) = 2Th^d \sup_x \mathrm{KL}(P_x \| P'_x)$ under the uniform design --- the chain rule handles the random visit count exactly, in expectation.
By Pinsker, $\mathrm{TV}(P_\theta, P_{\theta^{(j)}}) \le \sqrt{\mathrm{KL}/2} \le 1/2$ whenever $Th^d \sup_x \mathrm{KL} \le 1/4$; by~\eqref{eq:klmoment} (uniform in $x$) this holds whenever $\varepsilon \le c'(Th^d)^{-(p-1)/p}$, enforcing the budget required by Step~1's formal reduction.
We therefore \emph{choose} the perturbation as large as both constraints allow,
\begin{equation}\label{eq:epsbarrier}
  \varepsilon \asymp \min\bigl\{\, c_\beta L_H h^\beta,\; c'\,(Th^d)^{-(p-1)/p} \,\bigr\}.
\end{equation}

\smallskip\noindent\textbf{Step 4 (Optimization).}
Balance the two caps in~\eqref{eq:epsbarrier}: $(Th^d)^{-(p-1)/p} = h^\beta$, i.e.,
$h^{\beta + d(p-1)/p} = T^{-(p-1)/p}$, giving $h = T^{-(p-1)/(\beta p + d(p-1))}$.
Then $\varepsilon = h^\beta = T^{-\beta(p-1)/(\beta p + d(p-1))}$.
From~\eqref{eq:assouad2}:
\[
  R_T \ge \frac{c_\varepsilon}{2^{d+4}}\,T\,\varepsilon^2
  = \Omega\!\left(T^{\;1 - 2\beta(p-1)/(\beta p + d(p-1))}\right).
\]
\emph{Sanity checks.} $p{=}2$: exponent $= d/(2\beta{+}d)$, matching Stone~\cite{stone1982optimal}. $p \to 1^+$: exponent $\to 1$, matching trivial barrier. \qed
\end{proof}

\begin{proposition}[Parametric lower bound in $T$]\label{prop:paramlower}
Fix $p \in (1,2)$, $L \ge 1$, $\sigma_p$ with $(2L\sigma_p)^p \ge 8$, and $B > 0$.
For $d = 1$, every nonanticipating algorithm admits instances of the parametric model of Theorem~\ref{thm:parametric} (with $\lambda = 1$) on which, for all $T$ large enough (depending on $p, L, \sigma_p, B$),
\[
  R_T \;\ge\; c(p)\,L\,\sigma_p^2\,T^{(2-p)/p}, \qquad c(p) > 0.
\]
For fixed $d \ge 2$ the same $T$-rate holds with constants depending on $d$.
\end{proposition}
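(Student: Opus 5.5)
I would prove Proposition~\ref{prop:paramlower} by a two-point (Le Cam) reduction, reusing the fixed-support mixture construction of Step~3 in the proof of Proposition~\ref{prop:lower}. For $d=1$, take $x_t \equiv 1$ (so $\|x_t\|\le 1$ and $\Sigma = 1$, hence $\lambda = 1$) and $m(x) = \phi x$. The two hypotheses are $\phi_\pm = 2\varepsilon \pm \varepsilon$, so $|\phi_\pm| \le 3\varepsilon \le B$ for large $T$.

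\textbf{Construction.} Under each hypothesis, $V_t$ and $W_t$ are i.i.d.\ with law $(1-w_\pm)U_0 + w_\pm U_a$. Here $U_0, U_a$ are uniform densities of width $1/L$ centred at $0$ and at $a = (\sigma_p^p/(4\varepsilon))^{1/(p-1)}$, and $w_\pm = \phi_\pm / a$. Exactly as in Step~3, this gives the following:
\begin{itemize}
\item the mean is $\phi_\pm$;
\item the noise is centred with density at most $L$;
\item the $p$-th moment is at most $\sigma_p^p$ for large $T$, using $(2L\sigma_p)^p \ge 8$;
\item Assumption~\ref{ass:regular} holds, with density at least $L/2$ on $[-\varepsilon,\varepsilon]$.
\end{itemize}
The per-observation KL is the Bernoulli KL of the weights, namely
\[
(w_+-w_-)^2/(w_-(1-w_-)) \lesssim \varepsilon/a \asymp \varepsilon^{p/(p-1)}\sigma_p^{-p/(p-1)}.
\]
Full feedback doubles this, as in Step~2. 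The trajectory KL is therefore at most $2T$ times this quantity. Choosing $\varepsilon = c'\sigma_p T^{-(p-1)/p}$ keeps the total KL at most $1/2$, so $\mathrm{TV}(P_+,P_-) \le 1/2$ by Pinsker.

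\textbf{Regret lower bound.} Use the clipped reverse self-bounding bound from Step~1 of Proposition~\ref{prop:lower}. Each round, at any price $a'$, has regret at least $c_\varepsilon \min((a'-\phi)^2,\varepsilon^2)$ with $c_\varepsilon \ge L/4$. Since $|\phi_+-\phi_-| = 2\varepsilon$, the two clipped losses sum to at least $\varepsilon^2$ for every price. Integrating against $\min(dP_+,dP_-)$, whose mass is at least $1/2$, gives
\[
R(+)+R(-) \ge \tfrac{L}{4}\cdot \tfrac12\, T\varepsilon^2 .
\]
The design is deterministic, so no count concentration is needed. Substituting $\varepsilon$ yields
\[
\max_\pm R \gtrsim L\,c'^2\sigma_p^2\,T^{1-2(p-1)/p} = c(p)\,L\,\sigma_p^2\,T^{(2-p)/p}.
\]

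\textbf{Main obstacle: the $\sigma_p$ scaling.} The main obstacle is tracking the constants so that the bound is $\sigma_p^2$ times a $p$-dependent constant, rather than depending only on $\varepsilon$. The KL bound $8\varepsilon/a$ gives $c' = c_0(p)$ with $\varepsilon^{p/(p-1)} \asymp \sigma_p^{p/(p-1)}/T$. This yields exactly $\varepsilon \propto \sigma_p T^{-(p-1)/p}$, which I would verify carefully, including the $w_-(1-w_-) \ge \varepsilon/(2a)$ step.

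\textbf{Largeness conditions and $d \ge 2$.} I would also check the largeness conditions: $a \to \infty$ so that the two components are disjoint, $3\varepsilon < 1/(4L)$, and $3\varepsilon \le B$. These determine how large $T$ must be in terms of $p, L, \sigma_p, B$.

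For fixed $d \ge 2$, embed the one-dimensional instance. Let $x_t$ be uniform on $\{\pm e_1,\dots,\pm e_d\}$, scaled so that $\Sigma = I_d/d$; alternatively, put mass on $e_1$ and the remaining coordinates to keep $\Sigma \succeq \lambda I$. Perturb only $\phi_1$, and let the noise law vary with the context through the mixture weight, as the model allows. The same two-point argument applies on the rounds with $x_t = \pm e_1$, which are a constant fraction of all rounds, giving the same $T$-rate with $d$-dependent constants.
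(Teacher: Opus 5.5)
Your proposal is correct and follows the paper's proof essentially step for step. It uses the same two hypotheses $\phi_\pm\in\{\varepsilon,3\varepsilon\}$ at $x_t\equiv 1$, the same fixed-support mixture with Bernoulli KL at most $8\varepsilon/a$, the factor two from full feedback, Pinsker, and the clipped reverse self-bounding bound integrated against $\min(dP_+,dP_-)$. The paper takes $\varepsilon=\sigma_p(16\kappa_pT)^{-(p-1)/p}$ to get $\mathrm{TV}\le 1/4$ instead of your $1/2$, which changes only $c(p)$.

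For $d\ge 2$ there is one small repair to the $\{\pm e_1,\dots,\pm e_d\}$ design. On rounds with $x_t=-e_1$ the conditional mean $-\phi_1$ is negative, so the weight $m/a$ must go on a mirrored far component at $-a$. Alternatively, use the paper's choice: $x_t$ uniform on $\{e_1,\dots,e_d\}$, with a Chernoff bound on the $\mathrm{Bin}(T,1/d)$ visit count and $\varepsilon$ scaled with $T/d$.
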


\begin{proof}
Let $x_t \equiv 1$, so $\|x_t\| \le 1$ and $\Sigma = 1 \succeq \lambda = 1$, and consider the two linear hypotheses $\phi_+ := 3\varepsilon$ and $\phi_- := \varepsilon$, so $m_\pm \equiv \phi_\pm$: a gap of $2\varepsilon$, with $|\phi_\pm| \le B$ for all large $T$.
Under hypothesis $\pm$, give $V_t$ (and independently $W_t$) the conditional law of Step~3 of the proof of Proposition~\ref{prop:lower} with a single site: $(1 - w_\pm)\,U_0 + w_\pm\,U_a$, where $a = (\sigma_p^p/(4\varepsilon))^{1/(p-1)}$ and $w_\pm := m_\pm/a \in [\varepsilon/a,\,3\varepsilon/a]$.
Exactly as there, for all large $T$: the noise is conditionally centered with density bounded by $L$ and at least $L/2$ on $[-\varepsilon, \varepsilon]$ (so the clipped bound of that proof's Step~1 applies with $c_\varepsilon \ge L/4$), its $p$-th moment is at most $\sigma_p^p$ under the proviso $(2L\sigma_p)^p \ge 8$, and the per-trader per-round KL is the Bernoulli KL of the weights:
\[
  \mathrm{KL} \;\le\; \frac{(w_+ - w_-)^2}{w_-(1 - w_-)} \;\le\; \frac{8\varepsilon}{a}
  \;=\; 8 \cdot 4^{1/(p-1)}\Bigl(\frac{\varepsilon}{\sigma_p}\Bigr)^{\!p/(p-1)} \;=:\; \kappa_p\Bigl(\frac{\varepsilon}{\sigma_p}\Bigr)^{\!p/(p-1)}.
\]
Full feedback doubles this (Step~2 there), and the chain rule over $T$ rounds gives $\mathrm{KL}(P_+ \| P_-) \le 2\kappa_p\,T\,(\varepsilon/\sigma_p)^{p/(p-1)}$.
Choose $\varepsilon := \sigma_p\,(16\kappa_p T)^{-(p-1)/p}$, so that $\mathrm{KL}(P_+ \| P_-) \le 1/8$ and, by Pinsker, $\mathrm{TV}(P_+, P_-) \le 1/4$.
Since $m_+ - m_- = 2\varepsilon$ everywhere, pointwise $\min((P_t - m_+)^2, \varepsilon^2) + \min((P_t - m_-)^2, \varepsilon^2) \ge \varepsilon^2$ for every price; summing over the $T$ rounds and integrating against the minimum measure $\min(dP_+, dP_-)$, of total mass at least $1 - \mathrm{TV} \ge 3/4$:
\[
  R_T(+) + R_T(-) \;\ge\; c_\varepsilon\,\varepsilon^2\,T\,(1 - \mathrm{TV}) \;\ge\; \tfrac{3}{16}\,L\,\varepsilon^2\,T,
\]
so $\max_\pm R_T(\pm) \ge \tfrac{3}{32}\,L\,\varepsilon^2\,T = c(p)\,L\,\sigma_p^2\,T^{(2-p)/p}$ with $c(p) = \tfrac{3}{32}(16\kappa_p)^{-2(p-1)/p}$.
For $d \ge 2$: draw $x_t$ uniformly from $\{e_1, \dots, e_d\}$ (so $\|x_t\| = 1$ and $\Sigma = I_d/d \succeq \lambda I_d$ with $\lambda = 1/d$), set $\phi_+ = (3\varepsilon, 0, \dots, 0)$, $\phi_- = (\varepsilon, 0, \dots, 0)$, use the pair above at $x = e_1$, and at $x = e_i$, $i \ge 2$, the centered component $U_0$ (identical under both hypotheses, hence zero KL contribution; its $p$-th moment is $\le (2L)^{-p} \le \sigma_p^p$ by the proviso).
Only the $\mathrm{Bin}(T, 1/d)$ rounds at $e_1$ carry information or a regret gap; repeating the argument with $\varepsilon = \sigma_p(16\kappa_p T/d)^{-(p-1)/p}$ and a Chernoff bound on the visit count gives $R_T = \Omega_d\bigl(L\,\sigma_p^2\,(T/d)^{(2-p)/p}\bigr)$.
\end{proof}

\begin{remark}[No dimension claim]\label{rem:paramlowerd}
We do not claim a matching dimension dependence: in the conditional model the per-round divergence of the mixture construction is of order $\varepsilon^{p/(p-1)}\sigma_p^{-p/(p-1)}$ in the local mean gap $\varepsilon$ (super-quadratic for $p<2$; quadratic only in the limit $p \to 2^-$), and the optimal $d$-dependence of the parametric lower bound is left open.
(The exponent of Proposition~\ref{prop:lower} also tends to $(2-p)/p$ as $\beta \to \infty$, consistent with Proposition~\ref{prop:paramlower}; we do not use that limit as a proof device, since bump functions with $\beta > 1$ increment-H\"older regularity do not exist.)
\end{remark}

\section{Experiments}\label{sec:experiments}

We validate the rates of Theorems~\ref{thm:parametric} and~\ref{thm:nonparametric} in simulation.
The acceptance criteria were fixed in advance of the runs; Appendix~\ref{app:experiments} gives the full protocol --- every estimator, hyperparameter, seed, and acceptance rule (simulations are \texttt{numpy}-only, and the full pre-specified run takes under a minute on a laptop).

\smallskip\noindent\textbf{Setup.}
Noise: Student-$t(\nu)$ with $\nu \in \{1.2, 1.5, 1.8\}$, plus a symmetrized Lomax$(1.5)$ (Pareto type~II, which unlike a pure Pareto has positive density at zero, per Assumption~\ref{ass:regular}).
Geometries: parametric with $d = 5$, contexts uniform on the sphere, $\|\phi\| = 1$; nonparametric with $d \in \{1,2\}$ and a fixed Lipschitz $m$ with $B_m = 0.3$.
All instances use context-independent noise---the special case of Assumptions~\ref{ass:density}--\ref{ass:moment} in which the conditional law does not depend on $x$.
The epoch-truncated algorithm runs at $p_{\mathrm{alg}} = \nu - 0.05$ with oracle $(p, \sigma_p, B)$; the baseline uses the identical epoch schedule with untruncated least squares (parametric) or plain cell means (nonparametric); $T = 10^6$, $20$ seeds.
Per-round \emph{expected} regret is computed exactly through the closed form of Lemma~\ref{lem:selfbound} (Appendix~\ref{app:experiments}), which removes the realized-gain variance that heavy tails would otherwise inject into the curves.

\begin{figure}[t]
  \centering
  \includegraphics[width=\textwidth]{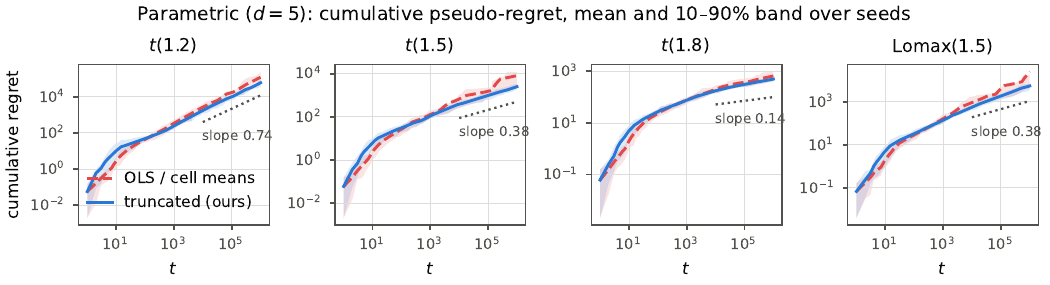}
  \caption{Parametric setting ($d = 5$): cumulative expected regret (mean and 10--90\% band over $20$ seeds, log-log). Dotted guides show the exponent of Theorem~\ref{thm:parametric} at $p_{\mathrm{alg}}$.}
  \label{fig:param}
\end{figure}

\begin{figure}[t]
  \centering
  \includegraphics[width=\textwidth]{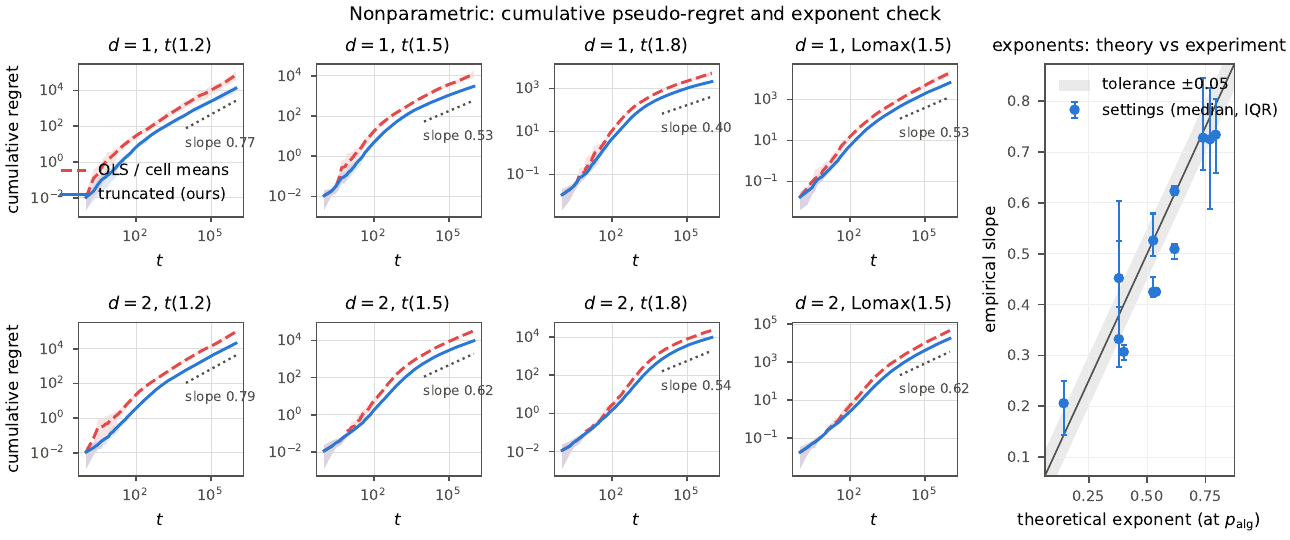}
  \caption{Cumulative expected regret per noise model in the nonparametric settings ($d \in \{1,2\}$); the right panel shows fitted last-decade slopes for all twelve settings (parametric and nonparametric) against the exponents of Theorems~\ref{thm:parametric} and~\ref{thm:nonparametric} at $p_{\mathrm{alg}}$, with the pre-specified $+0.05$ acceptance line (band drawn at $\pm 0.05$ for reference).}
  \label{fig:nonparam}
\end{figure}

\smallskip\noindent\textbf{Rate check (Figures~\ref{fig:param}--\ref{fig:nonparam}).}
In $10$ of $12$ settings the fitted last-decade slope of the truncated algorithm is at most the exponent of Theorems~\ref{thm:parametric} and~\ref{thm:nonparametric} at $p_{\mathrm{alg}}$ plus the pre-specified $0.05$ tolerance (slopes below the line are compatible with the worst-case theory and pass), and in every setting the truncated algorithm's mean final regret is below the untruncated baseline's.
The baseline's failure mode is upper-tail instability rather than a shifted median: under $t(1.5)$ its worst-seed final regret is $16\times$ that of the truncated algorithm.

\smallskip\noindent\textbf{The two exceptions, at longer horizons (Figure~\ref{fig:horizon}).}
Both exceptions are parametric---$t(1.8)$ and Lomax$(1.5)$---the two cells where the asymptotic exponent $\theta$ is smallest relative to the bound's lower-order terms.
Extending these two cells (and $t(1.5)$ as a control) to $T = 10^7$, the mean-curve local log-log slope declines over the measured windows---$0.206 \to 0.186$ for $t(1.8)$ (against $\theta = 0.143$) and $0.499 \to 0.480$ for Lomax (against $\theta = 0.379$)---while the control sits on its exponent ($0.384 \to 0.411$ around $\theta = 0.379$); per-seed slope changes are heterogeneous ($12/20$ and $10/20$ seeds decline, respectively), so we attach no statistical significance to the decline itself.
A decaying finite-horizon excess is consistent with the bound's lower-order terms; the data alone cannot exclude asymptotic exponents between $\theta$ and the last measured slope, and we make no such claim.
We also deliberately do not attribute the decline to a specific mechanism --- the implementation tunes $\tau$ with $\log(dT)$ fixed at the full horizon, so within a run the bound's logarithmic factor is a constant multiplier, and additive epoch transients contribute as well.

\begin{figure}[t]
  \centering
  \includegraphics[width=\textwidth]{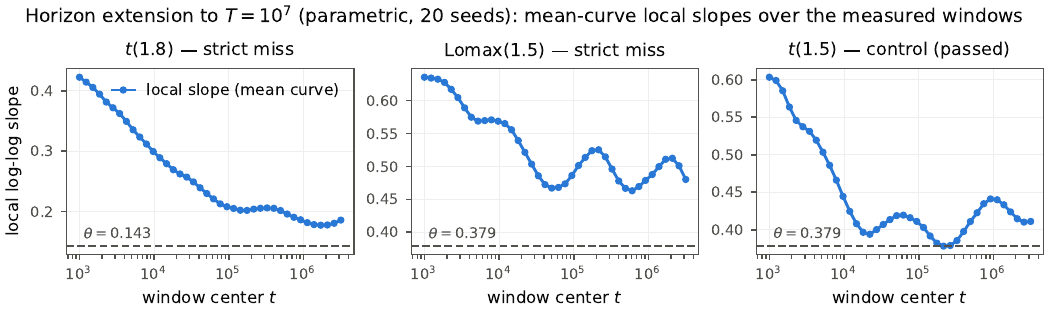}
  \caption{Horizon extension to $T = 10^7$ (parametric, $20$ seeds): local log-log slope of the mean cumulative regret over sliding one-decade windows. In the two settings that miss the strict finite-$T$ acceptance line, the mean-curve slope declines over the measured windows (dashed: $\theta$); the control sits on its $\theta$.}
  \label{fig:horizon}
\end{figure}

\section{Parameter-Free Adaptivity}\label{sec:adaptive}

The algorithms of Theorems~\ref{thm:parametric} and~\ref{thm:nonparametric} consume the class parameters: the truncation levels need $(p, \sigma_p)$ and $B$ (resp.\ $B_m$), and the cell width needs $(p, \beta)$.
Under bandit feedback, adaptivity to unknown moment parameters is provably costly~\cite{ashutosh2021bandit,genalti2024adaptive}.
We now show that under full feedback the parametric problem admits a \emph{fully parameter-free} algorithm at the oracle rate: \textbf{the price of tail-adaptivity is a price of exploration, not of estimation}.
The bandit impossibilities are information-theoretic consequences of partial feedback: exploration is unavoidable there, and tail-blind exploration is provably wasteful for \emph{every} algorithm.
Full feedback removes the need to explore; once the estimator itself is parameter-free, a greedy scheme has nothing parameter-dependent left to compute.

\subsection{The parametric case: adaptivity is free}\label{sec:adaptive:param}

\smallskip\noindent\textbf{Algorithm (MoM-greedy).}
Epochs as in Section~\ref{sec:proof:param}.
Set $k_T := \lceil 8\log(dT) \rceil$.
For epoch $k \ge 2$, using the $n = 2^{k-2}$ samples of the previous epoch: for each coordinate $j$, split $\{[S_s]_j\}_{s=1}^n$ (scores $S_s = x_s Y_s$) into $k_T$ contiguous blocks and let $\hat\mu_j$ be the \emph{median of the block means}; form $\hat\Sigma = n^{-1}\sum_s x_s x_s^\top$ and $\hat\phi_k = \hat\Sigma^{+}\hat\mu$ (pseudoinverse if singular); play $P_t = x_t^\top \hat\phi_k$.
Epochs with $n < 2k_T$ retain the previous estimate ($\hat\phi = 0$ initially); their rounds are absorbed by the regret cap.
The algorithm's inputs are $d$ and $T$ alone: no truncation level, no confidence radius, no projection, and no bound on $\|\phi\|$.

The guarantee is Theorem~\ref{thm:adaptive}, stated in Section~\ref{sec:results}; its proof rests on three observations.

\begin{proof}[Proof sketch of Theorem~\ref{thm:adaptive} (full proof in Appendix~\ref{app:adaptive})]
Three observations. (i)~The median-of-means estimator is parameter-free: with $k_T$ blocks, Lemma~\ref{lem:mom} gives per-coordinate deviation $O\bigl(m_p^{1/p}(\log(dT)/n)^{(p-1)/p}\bigr)$ with $m_p \le 2^p(B+\sigma_p)^p$ --- the same shape as~\eqref{eq:percoord}; $p$ and $\sigma_p$ appear only in the analysis.
(ii)~The algorithm is greedy, so no confidence radius is ever computed --- the parameter-dependence of exploration bonuses, which drives the bandit impossibilities, has no analogue here.
(iii)~Since $g \ge 0$, the per-round regret of \emph{any} price is at most $\E[g(m,V,W)] \le 2\sigma_p$; wild prices caused by an ill-conditioned $\hat\Sigma$ or by epochs shorter than $2k_T$ are absorbed by this cap in the same slot as the existing bad-event accounting --- this is what allows the known-$B$ assumption to be dropped rather than avoided.
The proof of Theorem~\ref{thm:parametric} then goes through with~\eqref{eq:percoord} replaced by Lemma~\ref{lem:mom}.
\end{proof}

\smallskip\noindent\textbf{Empirically, the parameter-free algorithm is also the better algorithm (Figure~\ref{fig:adaptive}).}
Running MoM-greedy against the oracle-tuned truncation of Theorem~\ref{thm:parametric} and the OLS baseline \emph{on identical sample paths} (common random numbers; protocol in Appendix~\ref{app:experiments}), its mean cumulative regret is below the oracle-tuned algorithm's at every last-decade checkpoint in all four noise settings, with final-regret ratios from $0.61$ down to $0.05$.
This is not a strawman effect but a boundary phenomenon of the tuned algorithm itself: at $p_{\mathrm{alg}} = \nu - 0.05$ the closed-form moment carries $\Gamma((\nu - p_{\mathrm{alg}})/2) = \Gamma(0.025) \approx 40$, so the theory-prescribed truncation level $\tau = (un/\log(dT))^{1/p}$ is enormous and the truncated estimator degenerates toward OLS --- visible in the panels where the two curves track each other.
Median-of-means has no constant to mis-set: the tuned algorithm inherits the conservatism of its worst-case constants near the moment boundary, and the parameter-free algorithm does not.

\begin{figure}[t]
  \centering
  \includegraphics[width=\textwidth]{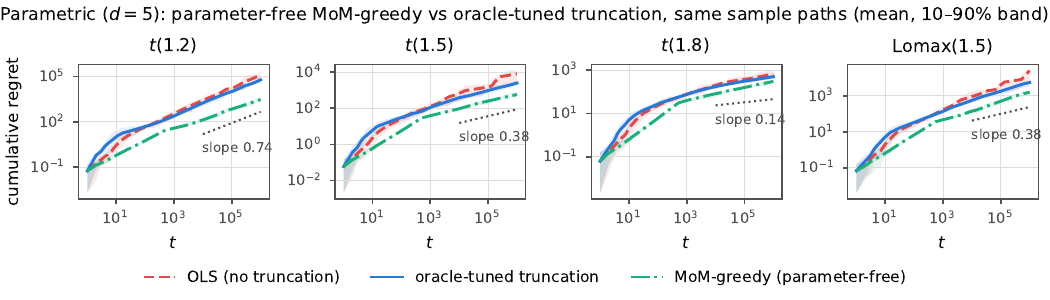}
  \caption{Parametric setting ($d=5$): the parameter-free MoM-greedy algorithm of Theorem~\ref{thm:adaptive} vs.\ the oracle-tuned algorithm of Theorem~\ref{thm:parametric} and OLS, on identical sample paths ($20$ seeds; mean and 10--90\% band). MoM-greedy dominates the oracle-tuned truncation pointwise over the last decade in all four settings.}
  \label{fig:adaptive}
\end{figure}
\begin{figure}[!t]
  \centering
  \includegraphics[width=0.8\textwidth]{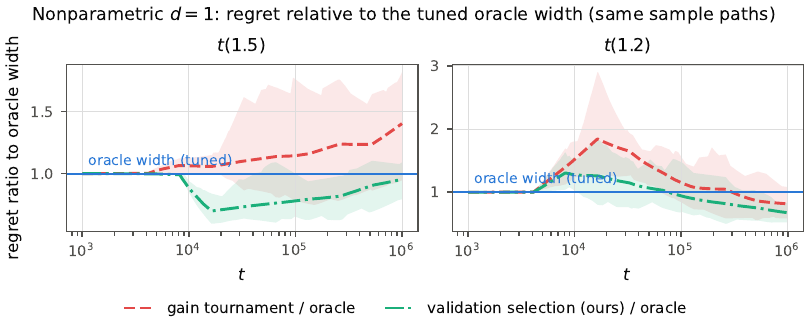}
  \caption{Nonparametric $d = 1$: cumulative regret relative to the tuned oracle width, per-seed paired ratios under common random numbers ($20$ seeds; mean and 10--90\% band). The validation selector of Theorem~\ref{thm:adaptivenp} finishes below the oracle line in both cells (after a transient excursion above it around $t \approx 10^4$ at $t(1.2)$); the gain-based tournament of Remark~\ref{rem:whyvalidation} finishes above it at $t(1.5)$ (ratio $1.40$) but below it at $t(1.2)$ (ratio $0.82$; ratios are endpoints of the plotted mean-of-per-seed-ratio curves), the validation selector improving on the tournament in both cells.}
  \label{fig:validation}
\end{figure}

\subsection{The nonparametric case: validation-based width selection}\label{sec:adaptive:np}

In the nonparametric model the genuinely parameter-dependent object is the cell width $h = h(p, \beta, d)$.
Adaptivity is obtained by running all widths on a dyadic grid and selecting by median-of-means comparisons of \emph{validation losses} against the observable pseudo-label $Y_t = (V_t + W_t)/2$: for two candidates $\hat m_h, \hat m_{h'}$ the per-round difference is
\[
  \bigl(\hat m_h(x_t) - Y_t\bigr)^2 - \bigl(\hat m_{h'}(x_t) - Y_t\bigr)^2
  = \bigl[(\hat m_h - m)^2 - (\hat m_{h'} - m)^2\bigr](x_t) \;-\; 2\,\eta_t\,\Delta(x_t),
  \qquad \Delta := \hat m_h - \hat m_{h'},
\]
so the noise's square cancels exactly, the noise enters \emph{linearly, damped by the candidate gap}, and the expectation is the exact excess risk.

This yields Theorem~\ref{thm:adaptivenp} (stated in Section~\ref{sec:results}); Appendix~\ref{app:adaptive} gives the algorithm and full proof.

\begin{remark}[Why validation losses, and not realized gains]\label{rem:whyvalidation}
Selecting by comparisons of realized gains is weaker in our analysis.
The gain difference of two prices $\Delta$ apart is $\pm Z\,\ind\{S\}$ with $\Pr(S \mid x) \le 4L\Delta(x)$ (Lemma~\ref{lem:paired}): a rare event of probability $\sim \Delta$ carrying a $\Theta(1)$-scale payoff, so its gap-localized $p$-th central moment shrinks only \emph{linearly} in the price gap (Lemma~\ref{lem:paired}(iii); at $p = 2$, a Bernstein condition with exponent $\tfrac12$), and equating the resulting comparison radius with the candidate gap yields a smallest resolvable price gap of order $T^{-(p-1)/(2p-1)}$, hence a cumulative selection cost of order $T^{1/(2p-1)}$: the tournament built on gain comparisons that we analyzed in full in an earlier version of this manuscript incurred an $\Otil(T^{1/(2p-1)})$ selection cost. That analysis is not reproduced here, and the rate should be read as a heuristic calibration of the gain interface (its moment ingredients, Lemma~\ref{lem:paired}(i)--(iv), \emph{are} proved in Appendix~\ref{app:adaptive}), not as a theorem of this paper.
Within the increment-H\"older scale of Assumption~\ref{ass:smooth} (effectively $\beta \le 1 \le d$) that cost happens to be dominated by the oracle rate; in the higher-order smoothness scale that local-polynomial extensions would inhabit, it would exceed the oracle rate whenever $\beta > d$, and the validation selector avoids it.
The paired validation difference cancels $\eta^2$ exactly, leaving the noise damped by the candidate gap: its $p$-th central moment shrinks like $|\Delta|^p$ (at $p=2$, Bernstein exponent $1$), enabling oracle-rate selection; and because its expectation is the risk gap itself, no lower bound on regret gaps --- hence no regularity of the noise at zero --- is ever needed.
In mechanism-selection terms: scoring candidate pricing rules by realized gains scores them on rare, lumpy trade events; scoring them by squared distance to the revealed valuations scores them on every round.
Confidence-interval-based selection (Lepski-type) faces a further mechanism-level obstacle, which we record as a heuristic rather than a proven impossibility: a valid observable radius would have to upper-bound a heavy-tailed mean, whose mass is invisible from below without knowledge of $(p, \sigma_p)$.
\end{remark}

Empirically, the validation selector outperforms the gain-based tournament and finishes at or below the tuned oracle width on identical sample paths (Figure~\ref{fig:validation}; protocol in Appendix~\ref{app:experiments}).

\section{Discussion}\label{sec:discussion}

\noindent\textbf{Established:} Lemma~\ref{lem:selfbound} extends the self-bounding property to $\R$-valued valuations under bounded conditional densities and finite first moments, within the conditionally independent, conditionally centered noise model (no variance needed); truncated mean optimality under $p$-th moment~\cite{bubeck2013bandits,catoni2012challenging}; epoch-based upper bounds (parametric and nonparametric); matching lower bounds in $T$ over the effective range $\beta \in (0,1]$ via Assouad + fixed-support mixtures (Propositions~\ref{prop:lower} and~\ref{prop:paramlower}); parameter-free adaptivity at the oracle rate --- parametric (Theorem~\ref{thm:adaptive}) and nonparametric for all $\beta$ (Theorem~\ref{thm:adaptivenp}).
\textbf{Remaining:} optimality of epoch-based approach vs.\ online alternatives; extension to $p \ge 2$ with sub-Gaussian tails.

\smallskip\noindent\textbf{Open questions.}
\textbf{(Q1)}~Can an online robust estimator eliminate the epoch-based $O(\log T)$ overhead?
\textbf{(Q2)}~Under Assumptions~\ref{ass:density} and~\ref{ass:regular} the regret-gap exponent is exactly $2$ (Lemma~\ref{lem:selfbound} and~\eqref{eq:app:unique}); without Assumption~\ref{ass:regular} only the upper direction holds. Can specific tail shapes (e.g., sub-Gaussian) improve the \emph{constant}?
\textbf{(Q3)}~\textit{Heteroskedastic extension:} if $\sigma_p(x)$ depends on context, can regret scale with $\E[\sigma_p(x_t)^2]$ rather than $\max_x \sigma_p(x)^2$?
\textbf{(Q4)}~\textit{Dimension dependence of the lower bound:} the rates are tight in $T$ (Propositions~\ref{prop:lower} and~\ref{prop:paramlower}); whether the parametric upper bound's $d$-dependence is optimal in the conditional model remains open.
\textbf{(Q5)}~\textit{Context-independent noise:}
our lower-bound instances let the conditional noise law vary with the context---the mean
perturbation is carried by mixture weights on a fixed support. Whether the rates of
Theorems~\ref{thm:parametric}--\ref{thm:nonparametric} remain minimax when $f_\xi,f_\zeta$
are a single context-independent pair is open, and we do not expect the present
construction to settle it. For any \emph{fixed} density $f$ with $\E|X|^p\le\sigma_p^p$ and
finite Fisher information, taking $\psi(x)=\operatorname{sgn}(x)|x|^{p-1}$ in the
Cauchy--Schwarz step of Stam's argument gives $I(f)\ge(p-1)^2/\sigma_p^2$. For each such
fixed, sufficiently regular location family this suggests a root-$n$ obstruction to reaching
the moment-matching exponent through a pure location-shift construction: local testing
between shifts at mean distance $\delta$ then needs only $O(\sigma_p^2/\delta^2)$
observations, polynomially fewer than the $\delta^{-p/(p-1)}$ the moment-matching barrier
demands. We stress that this is a heuristic for fixed $f$, not a uniform non-asymptotic
statement over families varying with $T$. A lower bound in that subclass would plausibly
need non-location structure, or the minimax rate there is genuinely faster; we leave the
dichotomy open.

\bibliographystyle{plainnat}
\bibliography{references}

\appendix
\section{Proof of Lemma~\ref{lem:selfbound}}\label{app:selfbound}

We prove all three claims: the self-bounding inequality~\eqref{eq:selfbound}, the unique maximizer property, and the bound $\E[g(m,V,W)] \le 2\sigma_p$.
Throughout we argue conditionally on the context: fix $x$, and write $f_\xi(\cdot) := f_\xi(\cdot \mid x)$, $f_\zeta(\cdot) := f_\zeta(\cdot \mid x)$, $m := m(x)$.
The self-bounding constant $L$ and the gain bound $2\sigma_p$ are uniform in $x$; the uniqueness constants $r_0, c$ below may depend on $x$ (Assumption~\ref{ass:regular} holds for a.e.\ $x$).
Unconditional statements follow by the tower property.

\smallskip\noindent\textbf{Step 1: Noise-coordinate representation.}
Write $\delta = \pi - m$.
Substituting $V = m + \xi$, $W = m + \zeta$:
\begin{equation}\label{eq:app:gnoise}
  g(m{+}\delta,\;m{+}\xi,\;m{+}\zeta)
  = (\xi{-}\zeta)\,\ind\{\zeta \le \delta \le \xi\}
  + (\zeta{-}\xi)\,\ind\{\xi \le \delta \le \zeta\}.
\end{equation}
Define $h(\delta) = \E[g(m{+}\delta,\, V,\, W)]$.
We must show $h(0) - h(\delta) \le L\,|\delta|^2$.

\smallskip\noindent\textbf{Step 2: Decomposition by case.}
Define the one-sided expectations
$\Psi_\xi(\delta) = \E[(\xi - \delta)^+]$ and
$\Phi_\xi(\delta) = \E[(\delta - \xi)^+]$,
and analogously for $\zeta$.
Split $h = I_1 + I_2$ according to the two cases in~\eqref{eq:app:gnoise}:
\begin{align}
  I_1(\delta) &= \iint_{\zeta \le \delta \le \xi} (\xi - \zeta)\,f_\xi(\xi)\,f_\zeta(\zeta)\,d\xi\,d\zeta
  = \int_{-\infty}^{\delta}\!f_\zeta(\zeta)\!\int_{\delta}^{\infty}\!(\xi - \zeta)\,f_\xi(\xi)\,d\xi\,d\zeta, \label{eq:app:I1}\\
  I_2(\delta) &= \iint_{\xi \le \delta \le \zeta} (\zeta - \xi)\,f_\xi(\xi)\,f_\zeta(\zeta)\,d\xi\,d\zeta
  = \int_{\delta}^{\infty}\!f_\zeta(\zeta)\!\int_{-\infty}^{\delta}\!(\zeta - \xi)\,f_\xi(\xi)\,d\xi\,d\zeta. \label{eq:app:I2}
\end{align}

\smallskip\noindent\textbf{Step 3: Differentiation via Leibniz rule.}
We differentiate each integral with respect to $\delta$.
Dominated convergence justifies passing the derivative inside:
the integrand in~\eqref{eq:app:I1} is bounded by $|\xi - \zeta| \le |\xi| + |\zeta|$, which is integrable since $\E[|\xi|] < \infty$ (implied by $p > 1$).

Applying the Leibniz integral rule to~\eqref{eq:app:I1} (differentiating the upper limit of the inner integral and the upper limit of the outer integral):
\begin{equation}\label{eq:app:I1prime}
  I_1'(\delta) = f_\zeta(\delta)\!\int_{\delta}^{\infty}\!(\xi - \delta)\,f_\xi(\xi)\,d\xi
  - f_\xi(\delta)\!\int_{-\infty}^{\delta}\!(\delta - \zeta)\,f_\zeta(\zeta)\,d\zeta
  = f_\zeta(\delta)\,\Psi_\xi(\delta) - f_\xi(\delta)\,\Phi_\zeta(\delta).
\end{equation}
Similarly, from~\eqref{eq:app:I2}:
\begin{equation}\label{eq:app:I2prime}
  I_2'(\delta) = -f_\zeta(\delta)\,\Phi_\xi(\delta) + f_\xi(\delta)\,\Psi_\zeta(\delta).
\end{equation}
Combining:
\begin{equation}\label{eq:app:hprime_raw}
  h'(\delta) = f_\zeta(\delta)\bigl[\Psi_\xi(\delta) - \Phi_\xi(\delta)\bigr]
  + f_\xi(\delta)\bigl[\Psi_\zeta(\delta) - \Phi_\zeta(\delta)\bigr].
\end{equation}

\smallskip\noindent\textbf{Step 4: Simplification to the key formula.}
The pointwise identity $(\xi - \delta)^+ - (\delta - \xi)^+ = \xi - \delta$ yields, upon taking expectations (valid since $\E[|\xi|] < \infty$):
\begin{equation}\label{eq:app:psiphi}
  \Psi_\xi(\delta) - \Phi_\xi(\delta) = \E[\xi] - \delta = -\delta,
\end{equation}
and similarly $\Psi_\zeta(\delta) - \Phi_\zeta(\delta) = -\delta$.
Substituting into~\eqref{eq:app:hprime_raw}:
\begin{equation}\label{eq:app:hprime}
  h'(\delta) = -\delta\bigl[f_\xi(\delta) + f_\zeta(\delta)\bigr].
\end{equation}
\emph{Verification:} $h'(0) = 0$ (consistent with $m$ being a critical point), $h'(\delta) \ge 0$ for $\delta < 0$, and $h'(\delta) \le 0$ for $\delta > 0$ wherever the derivative exists (consistent with $m$ being a maximizer; the inequalities are weak, as the densities may vanish).

Since $f_\xi, f_\zeta$ are merely bounded (not assumed continuous), \eqref{eq:app:I1prime}--\eqref{eq:app:hprime} hold for almost every $\delta$.
Local Lipschitzness of $h$ must be established \emph{directly from the integral representations} (an a.e.-bounded derivative alone would not suffice): changing $\delta$ to $\delta + \eta$ in~\eqref{eq:app:I1} moves only two integration strips of width $|\eta|$ --- the outer variable through $(\delta, \delta+\eta]$ and the inner limit likewise --- and each strip's contribution is at most $L|\eta| \cdot \sup_{s \in [\delta, \delta+\eta]}\E[|\xi| + |\zeta| + |s|] < \infty$; the same holds for~\eqref{eq:app:I2}.
Hence $h = I_1 + I_2$ is locally Lipschitz, therefore absolutely continuous on compacts, its derivative exists a.e.\ and equals~\eqref{eq:app:hprime} wherever the Leibniz computation applies, and the integration in Step~5 is valid.

\smallskip\noindent\textbf{Step 5: Integration and the self-bounding inequality.}
For $\delta > 0$ (the case $\delta < 0$ is symmetric by the substitution $s \mapsto -s$):
\begin{equation}\label{eq:app:integrate}
  h(0) - h(\delta) = -\int_0^{\delta} h'(s)\,ds
  = \int_0^{\delta} s\bigl[f_\xi(s) + f_\zeta(s)\bigr]\,ds.
\end{equation}
Since $f_\xi(s) + f_\zeta(s) \le 2L$ for all $s$:
\begin{equation}\label{eq:app:bound}
  h(0) - h(\delta) \le 2L\int_0^{|\delta|} s\,ds = 2L\cdot\frac{|\delta|^2}{2} = L\,|\delta|^2.
\end{equation}
This is precisely~\eqref{eq:selfbound} since $h(0) - h(\delta) = \E[g(m,V,W) - g(\pi,V,W)]$ and $|\delta| = |\pi - m|$.

\emph{Unique maximizer (under Assumption~\ref{ass:regular}):}
Since $f_\xi$ and $f_\zeta$ are continuous at $0$ with $f_\xi(0) + f_\zeta(0) > 0$, there exist $r_0 > 0$ and $c > 0$ such that
$f_\xi(s) + f_\zeta(s) \ge c$ for all $|s| \le r_0$.
Fix $\delta \ne 0$. The representation~\eqref{eq:app:integrate} (for $\delta > 0$; its symmetric counterpart for $\delta < 0$) has a nonnegative integrand, so dropping the part of the integration range beyond $r_0$,
\begin{equation}\label{eq:app:unique}
  h(0) - h(\delta) \ge \int_0^{\min(|\delta|,\,r_0)} s\,c\,ds
  = \frac{c}{2}\,\min(|\delta|,r_0)^2 > 0.
\end{equation}
Hence $h(\delta) < h(0)$ for every $\delta \ne 0$: the price $m$ is the unique global maximizer.

Assumption~\ref{ass:regular} cannot be dropped: if $f_\xi + f_\zeta \equiv 0$ on $[0, a]$ for some $a > 0$, then~\eqref{eq:app:integrate} gives $h(\delta) = h(0)$ for all $\delta \in [0, a]$, so every price in $[m, m{+}a]$ maximizes the expected gain and uniqueness fails.
The self-bounding inequality~\eqref{eq:app:bound} is unaffected, as it requires only the upper bound $f_\xi, f_\zeta \le L$.

\emph{Gain bound:} $g(m, m{+}\xi, m{+}\zeta) = |\xi - \zeta|\,\ind\{\min(\xi,\zeta) \le 0 \le \max(\xi,\zeta)\} \le |\xi| + |\zeta|$.
Hence $\E[g(m,V,W)] \le \E[|\xi|] + \E[|\zeta|] \le 2\sigma_p$ by Jensen's inequality applied conditionally on $x$ ($\E[|\xi| \mid x] \le (\E[|\xi|^p \mid x])^{1/p} \le \sigma_p$ for $p \ge 1$), followed by the tower property. \qed

\section{A Truncated-Mean Bound with Heterogeneous Means}\label{app:truncated}

This appendix isolates the concentration inequality used in Sections~\ref{sec:proof:param} and~\ref{sec:proof:nonparam}.
It extends Lemma~1 of~\cite{bubeck2013bandits} from i.i.d.\ samples to independent samples whose means may differ, which is what the nonparametric algorithm actually faces: within a cell, the observations $Y_s = m(x_s) + \eta_s$ have means $m(x_s)$ varying from sample to sample.

\begin{lemma}[Truncated mean, heterogeneous means]\label{lem:truncated}
Let $p \in (1,2]$ and let $X_1,\dots,X_n$ be independent real-valued random variables with $\E[|X_s|^p] \le u$ for all $s \in [n]$ and some $u > 0$.
Write $\mu_s = \E[X_s]$ and $\bar\mu = n^{-1}\sum_{s=1}^n \mu_s$.
For $\delta \in (0,1)$ set
\[
  \tau = \Bigl(\frac{u\,n}{\log(1/\delta)}\Bigr)^{\!1/p},
  \qquad
  \hat\mu = \frac{1}{n}\sum_{s=1}^n X_s\,\ind\{|X_s| \le \tau\}.
\]
Then, with probability at least $1 - 2\delta$,
\begin{equation}\label{eq:app:trunc}
  |\hat\mu - \bar\mu| \le 4\,u^{1/p}\Bigl(\frac{\log(1/\delta)}{n}\Bigr)^{\!(p-1)/p}.
\end{equation}
\end{lemma}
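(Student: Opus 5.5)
The plan is the standard bias--variance argument for truncated means, as in the proof of Lemma~1 of~\cite{bubeck2013bandits}. The one change is to centre every sample at its own truncated mean instead of at a common mean. Heterogeneity costs nothing, because the only tools used, a per-sample tail bound and Bernstein's inequality, need only independence and per-sample moment bounds.

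Write $\varepsilon := \log(1/\delta)/n$, so that $\tau = (u/\varepsilon)^{1/p}$. Set $Z_s := X_s\,\ind\{|X_s| \le \tau\}$. Then
\[
  \hat\mu - \bar\mu = \frac1n\sum_{s=1}^n \bigl(Z_s - \E[Z_s]\bigr) + \frac1n\sum_{s=1}^n \bigl(\E[Z_s] - \mu_s\bigr).
\]

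\emph{Bias term.} For each $s$ we have $|\E[Z_s] - \mu_s| = |\E[X_s\ind\{|X_s|>\tau\}]|$. On the event $\{|X_s|>\tau\}$, $|X_s| \le |X_s|^p/\tau^{p-1}$, so this is at most $\E[|X_s|^p]/\tau^{p-1} \le u\,\tau^{1-p}$. Substituting $\tau$ gives $u\,\tau^{1-p} = u^{1/p}\varepsilon^{(p-1)/p}$. The bound is uniform in $s$, so it also holds for the average of the biases. This is the step where the heterogeneous means enter: each sample's bias is controlled separately, so the $\mu_s$ never need to coincide.

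\emph{Fluctuation term.} The centred variables $Z_s - \E[Z_s]$ are independent and bounded by $2\tau$ in absolute value. Since $p \le 2$,
\[
  \E[Z_s^2] \le \E\bigl[|X_s|^p\,\tau^{2-p}\bigr] \le u\,\tau^{2-p}.
\]
Bernstein's inequality for independent, non-identically distributed bounded variables, applied once to each tail, gives with probability at least $1-2\delta$
\[
  \Bigl|\frac1n\sum_{s=1}^n \bigl(Z_s - \E[Z_s]\bigr)\Bigr|
  \le \sqrt{2u\,\tau^{2-p}\,\varepsilon} + \tfrac{4}{3}\,\tau\,\varepsilon .
\]
Substituting $\tau$: $u\,\tau^{2-p}\varepsilon = u^{2/p}\varepsilon^{2(p-1)/p}$ and $\tau\varepsilon = u^{1/p}\varepsilon^{(p-1)/p}$. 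The fluctuation is therefore at most $\bigl(\sqrt2 + \tfrac43\bigr)u^{1/p}\varepsilon^{(p-1)/p}$.

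\emph{Assembly.} Adding the bias bound, the total constant is $1 + \sqrt2 + \tfrac43 < 3.75 \le 4$, which yields~\eqref{eq:app:trunc}. The only point needing care is bookkeeping in Bernstein's inequality: whether the range bound is $\tau$ or $2\tau$ after centring, and whether the two-sided version costs $2\delta$. Using the conservative range $2\tau$ and a union over the two tails still fits within the constant $4$, so no obstacle is expected. Nothing here uses identical distribution, which is exactly why the lemma applies to the within-cell samples of Section~\ref{sec:proof:nonparam}.
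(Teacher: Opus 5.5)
Your proposal is correct and follows the paper's proof step for step. It uses the same decomposition into a per-sample truncation bias bounded by $u\tau^{1-p}$ and a centred fluctuation controlled by Bernstein's inequality with range $2\tau$ and variance proxy $u\tau^{2-p}$, giving the constants $1$, $\sqrt{2}$, $4/3$, whose sum is less than $4$.
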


\begin{proof}
Write $\widetilde X_s = X_s\,\ind\{|X_s| \le \tau\}$.
Since $\mu_s = \E[\widetilde X_s] + \E[X_s\,\ind\{|X_s| > \tau\}]$,
\begin{equation}\label{eq:app:truncdecomp}
  \hat\mu - \bar\mu
  = \frac{1}{n}\sum_{s=1}^n \bigl(\widetilde X_s - \E[\widetilde X_s]\bigr)
  \;-\; \frac{1}{n}\sum_{s=1}^n \E\bigl[X_s\,\ind\{|X_s| > \tau\}\bigr].
\end{equation}
\emph{Truncation bias.} On $\{|X_s| > \tau\}$ we have $|X_s| \le |X_s|^p\,\tau^{1-p}$, hence for each $s$,
$\bigl|\E[X_s\,\ind\{|X_s| > \tau\}]\bigr| \le \E[|X_s|^p]\,\tau^{1-p} \le u\,\tau^{1-p}$.

\emph{Bernstein term.} Each $\widetilde X_s$ takes values in $[-\tau, \tau]$, so $|\widetilde X_s - \E \widetilde X_s| \le 2\tau$, and
$\operatorname{Var}(\widetilde X_s) \le \E[\widetilde X_s^2] \le \E[|X_s|^p]\,\tau^{2-p} \le u\,\tau^{2-p}$.
Bernstein's inequality for independent bounded random variables gives, with probability at least $1 - 2\delta$,
\[
  \Bigl|\frac{1}{n}\sum_{s=1}^n \bigl(\widetilde X_s - \E[\widetilde X_s]\bigr)\Bigr|
  \le \sqrt{\frac{2\,u\,\tau^{2-p}\log(1/\delta)}{n}} + \frac{4\,\tau\log(1/\delta)}{3n}.
\]
Substituting $\tau = (u\,n/\log(1/\delta))^{1/p}$, each of the three terms (bias, variance, range) equals a constant multiple of $u^{1/p}(\log(1/\delta)/n)^{(p-1)/p}$, with respective constants $1$, $\sqrt{2}$, and $4/3$.
Since $1 + \sqrt{2} + 4/3 < 4$, the claim follows.
\end{proof}

\begin{remark}\label{rem:iidrecover}
For i.i.d.\ samples ($\mu_s \equiv \mu$) Lemma~\ref{lem:truncated} recovers Lemma~1 of~\cite{bubeck2013bandits} up to constants.
The proof uses only independence and the uniform raw-moment bound; the means may differ arbitrarily.
\end{remark}

\begin{corollary}[Conditional per-cell application]\label{cor:conditional}
In the setting of Theorem~\ref{thm:nonparametric}, fix epoch $k$ and condition on the design $X^{(k-1)} = (x_s)_{s \in \mathrm{epoch}\,k-1}$.
For any cell $C_j$ containing $n_j \ge 1$ samples, the observations $\{Y_s = m(x_s) + \eta_s : x_s \in C_j\}$ are conditionally independent with conditional means $m(x_s)$ and $\E[|Y_s|^p \mid X^{(k-1)}] \le (B_m + \sigma_p)^p = \bar u$.
Hence Lemma~\ref{lem:truncated}, applied conditionally with $\delta = 1/(MT)$ and threshold $\tau_j = (\bar u\,n_j/\log(MT))^{1/p}$, yields: with conditional probability at least $1 - 2/(MT)$,
\[
  |\hat{m}_k(c_j) - \bar{m}_j| \le 4\,\bar u^{1/p}\Bigl(\frac{\log(MT)}{n_j}\Bigr)^{\!(p-1)/p},
  \qquad
  \bar{m}_j = \frac{1}{n_j}\sum_{s:\,x_s \in C_j} m(x_s).
\]
Since this failure probability is uniform over designs, and the cell-count event of Step~1 of Section~\ref{sec:proof:nonparam} is $X^{(k-1)}$-measurable, the tower property converts the conditional statement into the unconditional guarantee used there.
\end{corollary}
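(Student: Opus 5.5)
The statement immediately above is Corollary~\ref{cor:conditional}. The plan is to reduce it to Lemma~\ref{lem:truncated} by working under the regular conditional law given the design $X^{(k-1)}$, and then to lift the conditional bound with the tower property. Throughout, $k$ and the cell $C_j$ are fixed, and $n_j$ and the index set $\{s : x_s \in C_j\}$ are $X^{(k-1)}$-measurable.

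\textbf{Step 1: conditional independence and means.} By Assumption~\ref{ass:density} the triples $(x_s,\xi_s,\zeta_s)$ are i.i.d. across $s$. Conditioning on the whole design therefore leaves the pairs $(\xi_s,\zeta_s)$ independent across $s$. Each pair has the conditional law given its own $x_s$, since the other contexts carry no information about it. The $Y_s = m(x_s) + (\xi_s+\zeta_s)/2$ are thus conditionally independent, and $\E[Y_s \mid X^{(k-1)}] = m(x_s)$ by conditional centering.

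\textbf{Step 2: the moment bound.} Apply Minkowski conditionally on $x_s$: $\|Y_s\|_{p \mid x_s} \le |m(x_s)| + \tfrac12\|\xi_s\|_{p\mid x_s} + \tfrac12\|\zeta_s\|_{p\mid x_s} \le B_m + \sigma_p$. This uses Assumption~\ref{ass:moment} and $\sup|m| \le B_m$, and it gives $\E[|Y_s|^p \mid X^{(k-1)}] \le \bar u$.

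\textbf{Step 3: apply the lemma and lift.} Conditionally on the design, the family $\{Y_s : x_s \in C_j\}$ is a fixed-size collection of $n_j$ independent variables satisfying the hypotheses of Lemma~\ref{lem:truncated} with $u = \bar u$ and $\delta = 1/(MT)$. The threshold prescribed there is exactly $\tau_j$, and the lemma's $\bar\mu$ is exactly $\bar m_j$. The lemma then gives the displayed bound with conditional failure probability at most $2/(MT)$.

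This bound is uniform in the design. For the $X^{(k-1)}$-measurable cell-count event $A$, one gets $\Pr(A \cap \text{failure}) = \E[\ind_A \Pr(\text{failure} \mid X^{(k-1)})] \le 2/(MT)$. A union over the $M$ cells then yields the unconditional $2/T$ used in Section~\ref{sec:proof:nonparam}.

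\textbf{Main obstacle.} The only delicate point is Step~1: applying a lemma stated for unconditional independent variables to a conditional law with a random sample size and random index set. I would handle it by invoking a regular conditional distribution. Given $X^{(k-1)} = (x_s)$, the noises have product law $\bigotimes_s \mathcal{L}(\xi_s,\zeta_s \mid x_s)$, so the lemma applies verbatim for each fixed design realization with $n_j \ge 1$.
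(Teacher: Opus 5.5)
Your proposal is correct and follows the paper's proof essentially step for step. You get conditional independence and centering from the i.i.d.\ triples, the bound $\bar u$ from conditional Minkowski, the estimate by applying Lemma~\ref{lem:truncated} under the conditional law with $\bar\mu = \bar m_j$, and the unconditional statement from the tower property. Your explicit computation $\Pr(A \cap \text{failure}) = \E[\ind_A \Pr(\text{failure} \mid X^{(k-1)})]$ and the regular-conditional-distribution remark only spell out what the paper compresses into ``integrate over the design.''
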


\begin{proof}
Conditional independence and the conditional means are immediate: the triples $(x_s, \xi_s, \zeta_s)$ are i.i.d., so conditionally on the design the noises are independent with laws determined by the $x_s$, and $\E[Y_s \mid X^{(k-1)}] = m(x_s)$ by conditional centering.
For the moment bound, $|Y_s| \le B_m + |\eta_s|$ pointwise and $\E[|\eta_s|^p \mid X^{(k-1)}] \le \sigma_p^p$ (Assumption~\ref{ass:moment} conditionally on $x_s$), so $\|Y_s\|_p \le B_m + \sigma_p$ conditionally on $X^{(k-1)}$ by Minkowski's inequality.
Apply Lemma~\ref{lem:truncated} conditionally, noting $\bar\mu = \bar{m}_j$, and integrate over the design.
\end{proof}

\section{Experimental Protocol and Reproducibility}\label{app:experiments}

All simulation code is \texttt{numpy}-only (v2.2.6; the figure scripts additionally use \texttt{matplotlib}~3.10.8), with one build target per figure, and will be released publicly upon publication; it is withheld here only to preserve anonymity, and the protocol below specifies every estimator, hyperparameter, seed, and acceptance rule needed to reproduce each result independently.
The full pre-specified run ($12$ settings $\times$ $20$ seeds, $T = 10^6$) takes about $55$ seconds on a laptop, and the $T = 10^7$ horizon extension about three minutes.
Seeds are $0$--$19$ throughout (\texttt{numpy.random.default\_rng}).

\smallskip\noindent\textbf{Exact expected regret.}
For i.i.d.\ symmetric noises with density $f$, Lemma~\ref{lem:selfbound} gives per-round expected regret $2\int_0^{|\delta|} s f(s)\,ds$ at price error $\delta$, with closed forms
\[
  \mathcal{R}_{t(\nu)}(\delta) = \frac{2 c_\nu \nu}{\nu - 1}\Bigl[1 - \bigl(1 + \delta^2/\nu\bigr)^{(1-\nu)/2}\Bigr],
  \qquad c_\nu = \frac{\Gamma((\nu+1)/2)}{\sqrt{\nu\pi}\,\Gamma(\nu/2)},
\]
for Student-$t(\nu)$, and
\[
  \mathcal{R}_{\mathrm{Lomax}(a)}(\delta) = a\Bigl[\frac{(1+|\delta|)^{1-a} - 1}{1-a} + \frac{(1+|\delta|)^{-a} - 1}{a}\Bigr]
\]
for the symmetrized Lomax with density $\tfrac{a}{2}(1+|s|)^{-(a+1)}$.
Both are verified against numerical quadrature at startup.
Similarly, $\sigma_p$ is computed in closed form: $\sigma_p^p = \nu^{p/2}\,\Gamma(\tfrac{p+1}{2})\Gamma(\tfrac{\nu-p}{2})/(\sqrt{\pi}\,\Gamma(\tfrac{\nu}{2}))$ for $t(\nu)$ and $\sigma_p^p = a\,\Gamma(p+1)\Gamma(a-p)/\Gamma(a+1)$ for Lomax$(a)$.

\smallskip\noindent\textbf{Instances.}
Parametric: $d = 5$, $\phi = \mathbb{1}/\sqrt{5}$ ($B = 1$), $x_t$ uniform on the unit sphere (so $\lambda = 1/d$).
Nonparametric: $m(x) = 0.2\,\sin(2\pi x_1)(1 + x_d/2)$ on $[0,1]^d$, $d \in \{1,2\}$ ($L_H \approx 2$, $B_m = 0.3$), $x_t \sim \mathrm{Unif}[0,1]^d$ ($\mu_0 = 1$).
The amplitude keeps the bias scale commensurate with the unit noise scale; with amplitude $1.5$ ($L_H \approx 10$), the width $h$ below yields roughly five cells per axis at $\nu = 1.2$ and the bias term dominates every feasible horizon, so the slope fit reads a transient rather than the rate.

\smallskip\noindent\textbf{Algorithms.}
Epochs $[2^{k-1}, 2^k)$; estimates built from the previous epoch exactly as in Sections~\ref{sec:proof:param}--\ref{sec:proof:nonparam}, run at $p_{\mathrm{alg}} = \nu - 0.05$ with the truncation levels of~\eqref{eq:muhat} (parametric, $u = (B + \sigma_{p_{\mathrm{alg}}})^{p_{\mathrm{alg}}}$) and Section~\ref{sec:proof:nonparam} Step~2 (nonparametric, cell width $h = T^{-(p-1)/(\beta p + d(p-1))}$ at $p = p_{\mathrm{alg}}$, $\beta = 1$).
Baseline: identical epochs and partitions, untruncated least squares / plain cell means.

\smallskip\noindent\textbf{Acceptance protocol (fixed in advance).}
Fitted slope = least squares on $(\log_{10} t, \log_{10} R_t)$ over the last decade $t \in [T/10, T]$, per seed; a setting passes if the median fitted slope is at most the theoretical exponent at $p_{\mathrm{alg}}$ plus $0.05$.
Slopes below the line are compatible with the (worst-case) theory and pass.
Result: $10$ of $12$ pass; the two exceptions and their horizon-extension analysis are discussed in Section~\ref{sec:experiments}.

\smallskip\noindent\textbf{Adaptive experiment (added after the protocol was fixed; Figure~\ref{fig:adaptive}).}
MoM-greedy (Section~\ref{sec:adaptive}; $k_T = \lceil 8\log(dT)\rceil = 124$ blocks at $d = 5$, $T = 10^6$) was run against the oracle-tuned truncation and OLS with \emph{common random numbers}: the same seeds and the same draw order, so all three algorithms see identical sample paths and the curves are pointwise comparable.
Fitted last-decade slopes (median over seeds), against the acceptance exponents of the protocol fixed in advance:

{\small
\begin{center}
\begin{tabular}{@{}lcccc@{}}
\toprule
 & $t(1.2)$ & $t(1.5)$ & $t(1.8)$ & Lomax$(1.5)$ \\
\midrule
acceptance exponent (at $p_{\mathrm{alg}}$) & 0.739 & 0.379 & 0.143 & 0.379 \\
MoM-greedy & 0.592 & 0.361 & 0.284 & 0.483 \\
oracle-tuned truncation & 0.728 & 0.332 & 0.206 & 0.452 \\
OLS & 0.816 & 0.463 & 0.233 & 0.441 \\
\midrule
final regret, MoM / oracle-tuned & 0.051 & 0.234 & 0.608 & 0.288 \\
\bottomrule
\end{tabular}
\end{center}}

\noindent MoM-greedy's slopes pass the acceptance lines at $t(1.2)$ and $t(1.5)$; at $t(1.8)$ and Lomax they sit above the lines at $T = 10^6$ \emph{while the regret level is strictly below} the oracle-tuned curve --- whose slope excesses at these same two cells are the exceptions of Section~\ref{sec:experiments} --- at every last-decade checkpoint.
Extending these two cells to $T = 10^7$ (same protocol as Figure~\ref{fig:horizon}), MoM-greedy's mean-curve local slope likewise declines over the measured windows: $0.249 \to 0.188$ at $t(1.8)$ (against $\theta = 0.143$) and $0.466 \to 0.446$ at Lomax (against $\theta = 0.379$).
We claim level domination, consistent with the rate of Theorem~\ref{thm:adaptive} --- not slope-passing --- for those two cells.

\smallskip\noindent\textbf{Width-selection experiment (added after the protocol was fixed; Figure~\ref{fig:validation}).}
At $d = 1$, the validation selector of Theorem~\ref{thm:adaptivenp} was run against the gain-based tournament of Remark~\ref{rem:whyvalidation} and the tuned oracle width, with common random numbers (identical seeds and draw order across all three algorithms; $T = 10^6$, $20$ seeds).
Final-regret ratios of \emph{mean curves} to the oracle width: validation $0.95$ / $0.67$ at $t(1.5)$ / $t(1.2)$; gain tournament $1.38$ / $0.81$; validation over gain $0.68$ / $0.83$. (Figure~\ref{fig:validation} plots the mean of per-seed ratios instead, whose endpoints are $1.40$ / $0.82$ for the gain tournament and $0.95$ / $0.68$ for validation.)
The validation selector's chosen width exponents concentrate at $4$--$5$ against the theory-optimal $\approx 4.7$ at $t(1.5)$.
In the increment-H\"older scale at $d = 1$ (effectively $\beta \le 1 = d$) the proved validation guarantee and the heuristic gain calibration both cover this regime, so the comparison is one of constants, not rates.

\section{Median-of-Means and the Proofs of Section~\ref{sec:adaptive}}\label{app:adaptive}

The median-of-means device goes back to Nemirovsky and Yudin~\cite{nemirovsky1983problem} and Alon, Matias, and Szegedy~\cite{alon1999space}; see~\cite{bubeck2013bandits,lugosi2019mean} for its heavy-tailed analysis.
We use the following standard form.

\begin{lemma}[Median-of-means deviation]\label{lem:mom}
Let $X_1, \dots, X_n$ be i.i.d.\ with mean $\mu$ and $\E|X_i - \mu|^p \le m_p$ for some $p \in (1,2]$.
For an integer $1 \le \kappa \le n/2$, split the sample into $\kappa$ contiguous blocks of size $b = \lfloor n/\kappa \rfloor$ and let $\hat\mu^{\mathrm{MoM}}$ be the median of the block means.
Then, with probability at least $1 - e^{-\kappa/8}$,
\[
  |\hat\mu^{\mathrm{MoM}} - \mu| \;\le\; (8 m_p)^{1/p}\, b^{-(p-1)/p}
  \;\le\; 16\, m_p^{1/p} \left(\frac{\kappa}{n}\right)^{(p-1)/p}.
\]
Neither $p$ nor $m_p$ enters the estimator; only the block count $\kappa$ does.
\end{lemma}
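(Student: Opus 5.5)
The plan is the classical median-of-means argument in two stages. First, control a single block mean by Chebyshev-type reasoning adapted to $p$-th moments. Second, boost the per-block success probability by a binomial tail bound for the median.

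\textbf{Step 1 (one block).} Fix a block $B_\ell$ of size $b$ and let $\bar X_\ell$ be its mean. By independence and the von Bahr--Esseen inequality for $p\in(1,2]$,
\[
\E\Bigl|\sum_{i\in B_\ell}(X_i-\mu)\Bigr|^p \le 2\,b\,m_p ,
\]
so $\E|\bar X_\ell-\mu|^p \le 2 m_p\, b^{1-p}$. Set $r:=(8m_p)^{1/p} b^{-(p-1)/p}$, so that $r^p = 8 m_p b^{1-p}$. Markov's inequality applied to $|\bar X_\ell-\mu|^p$ then gives
\[
\Pr\bigl(|\bar X_\ell-\mu|>r\bigr)\le \frac{2m_pb^{1-p}}{8m_pb^{1-p}}=\tfrac14 .
\]
If I prefer to avoid quoting von Bahr--Esseen, I would truncate and symmetrize instead. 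Either route needs only some constant $c<1/2$ in place of $1/4$, and a larger constant in the radius can be absorbed into the final $16$.

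\textbf{Step 2 (median).} Blocks are disjoint, so the indicators $Z_\ell:=\ind\{|\bar X_\ell-\mu|>r\}$ are independent Bernoulli variables with parameter at most $1/4$. The median violates the radius only if $\sum_\ell Z_\ell\ge \kappa/2$. Hoeffding's inequality bounds this by
\[
\Pr\Bigl(\sum_\ell Z_\ell \ge \tfrac{\kappa}{2}\Bigr)\le \exp\bigl(-2\kappa(1/4)^2\bigr)=e^{-\kappa/8},
\]
which is exactly the stated failure probability. Leftover samples beyond $\kappa b$ are simply discarded.

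\textbf{Step 3 (second inequality).} Since $\kappa\le n/2$, we have $b=\lfloor n/\kappa\rfloor\ge n/(2\kappa)$, hence $b^{-(p-1)/p}\le 2^{(p-1)/p}(\kappa/n)^{(p-1)/p}\le 2(\kappa/n)^{(p-1)/p}$. Also $8^{1/p}\le 8$ for $p>1$. Together these give
\[
(8m_p)^{1/p}b^{-(p-1)/p}\le 16\, m_p^{1/p}(\kappa/n)^{(p-1)/p}.
\]
The final sentence of the lemma is by inspection: the estimator uses only $\kappa$.

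\textbf{Main obstacle.} The only delicate point is the constant in the $p$-th moment of a sum of independent centered variables for $p<2$. The plan uses von Bahr--Esseen with constant $2$. If that constant had to be larger, I would enlarge the $8$ accordingly, since the $1/4$ slack is what delivers $e^{-\kappa/8}$.
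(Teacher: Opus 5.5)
Your proposal is correct and matches the paper's proof step for step: von Bahr--Esseen with constant $2$ for each block sum, Markov at radius $(8m_p)^{1/p}b^{-(p-1)/p}$ giving per-block failure probability at most $1/4$, Hoeffding on the binomial count (dominated by $\mathrm{Bin}(\kappa,\tfrac14)$) giving $e^{-\kappa/8}$, and $b\ge n/(2\kappa)$ together with $8^{1/p}2^{(p-1)/p}\le 16$ for the second inequality. The only loose remark is your truncation-and-symmetrization fallback, which would not reproduce the stated constants exactly: a per-block probability other than $1/4$ changes the exponent $\kappa/8$, and a larger radius constant changes the first inequality. The main route never needs that fallback.
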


\begin{proof}
By the von Bahr--Esseen inequality~\cite{vonbahr1965inequalities}, valid for all $p \in [1,2]$, a block sum satisfies $\E|\sum_{i \in \mathrm{block}} (X_i - \mu)|^p \le 2 b\, m_p$, so the block mean satisfies $\E|\bar X_b - \mu|^p \le 2 m_p b^{1-p}$.
By Markov's inequality at $t = (8m_p)^{1/p} b^{-(p-1)/p}$, each block mean deviates by more than $t$ with probability at most $2 m_p b^{1-p} / t^p = 1/4$.
The median deviates by more than $t$ only if at least $\kappa/2$ of the $\kappa$ block means do. The blocks are independent and each deviates with probability \emph{at most} $1/4$, so the number that do is stochastically dominated by $\mathrm{Bin}(\kappa, \tfrac14)$; by Hoeffding's inequality, $\Pr(\mathrm{Bin}(\kappa, \tfrac14) \ge \tfrac{\kappa}{2}) \le e^{-2\kappa(1/4)^2} = e^{-\kappa/8}$.
Finally $b \ge n/(2\kappa)$ for $\kappa \le n/2$, and $8^{1/p} 2^{(p-1)/p} \le 16$.
\end{proof}

\subsection{Proof of Theorem~\ref{thm:adaptive}}

Fix any $p \in (1,2]$, $\sigma_p$, $B$, $\lambda$ satisfied by the instance; none is used by the algorithm.

\smallskip\noindent\emph{Step 1 (per-coordinate estimation).}
As in Section~\ref{sec:proof:param}, the scores $\{[S_s]_j\}_{s=1}^n$ are i.i.d.\ with mean $[\Sigma\phi]_j$; their central moment satisfies
\[
  \E\bigl|[S_s]_j - [\Sigma\phi]_j\bigr|^p
  \le 2^{p-1}\bigl(\E|[S_s]_j|^p + |[\Sigma\phi]_j|^p\bigr)
  \le 2^{p-1}\bigl((B+\sigma_p)^p + B^p\bigr) \le 2^p (B+\sigma_p)^p =: m_p,
\]
using~\eqref{eq:rawmoment} and $|[\Sigma\phi]_j| \le \|\Sigma\|_{\mathrm{op}}\|\phi\| \le B$ (as $\|\Sigma\|_{\mathrm{op}} \le \E\|x_t\|^2 \le 1$).
Apply Lemma~\ref{lem:mom} with $\kappa = k_T = \lceil 8\log(dT)\rceil$: for each $j$, with probability $\ge 1 - e^{-k_T/8} \ge 1 - 1/(dT)$,
\begin{equation}\label{eq:mompercoord}
  |\hat\mu_j - [\Sigma\phi]_j| \le 32\,(B+\sigma_p)\left(\frac{k_T}{n}\right)^{(p-1)/p},
  \qquad k_T = \lceil 8\log(dT) \rceil,
\end{equation}
and a union bound over $j \in [d]$ gives $\|\hat\mu - \Sigma\phi\|_\infty$ bounded as in~\eqref{eq:muconc} with the constant $4$ replaced by $32$ and $\log(dT)$ by $k_T \le 8\log(dT) + 1$ (absorbed into $\Otil$), with probability $\ge 1 - 1/T$.

\smallskip\noindent\emph{Step 2 (Gram, prediction, epochs) --- unchanged.}
The matrix Bernstein step, the non-asymptotic domination argument, and the per-epoch/geometric-sum assembly of Section~\ref{sec:proof:param} contain no $p$-dependent algorithmic quantity and go through verbatim with~\eqref{eq:percoord} replaced by~\eqref{eq:mompercoord}; the good event has probability $\ge 1 - 2/T$ per epoch.

\smallskip\noindent\emph{Step 3 (everything else is capped).}
Since $g \ge 0$, the per-round regret of any price is at most $\E[g(m(x_t), V_t, W_t)] \le 2\sigma_p$ (Lemma~\ref{lem:selfbound}).
This cap absorbs: (i) bad events, as in~\eqref{eq:badepoch}; (ii) epochs with $n < 2k_T$, of which there are $O(\log\log(dT))$ containing $O(\log(dT))$ rounds in total; and (iii) any round where $\hat\Sigma$ is ill-conditioned and $\hat\phi_k$ is wild --- no projection or boundedness of the price is ever needed.
Summing exactly as in Section~\ref{sec:proof:param} yields $R_T = \Otil(L\,d\,(B+\sigma_p)^2\lambda^{-2}\,T^{(2-p)/p})$. \qed

\subsection{The cell-width tournament: algorithm}\label{app:adaptive:alg}

Candidates are cell widths $h_i = 2^{-i}$, $i \in \{0, 1, \dots, \lceil \log_2 T\rceil\}$; write $N \le \log_2 T + 2$ for their number.
Candidate $i$ maintains the epoch-partition scheme of Section~\ref{sec:proof:nonparam} at width $h_i$, with per-cell \emph{median-of-means} (block count $k_T' := \lceil 8\log(N^2 M_{\max} T)\rceil$, where $M_{\max} := \max_i \lceil h_i^{-1}\rceil^d \le (2T)^d$ is the largest cell count on the grid, so $k_T' = O(d \log T)$; Lemma~\ref{lem:hetmom} below) in place of truncated means, so every candidate is parameter-free.
Within a candidate, cells of its current partition holding fewer than $2k_T'$ samples from the estimation epoch --- in particular, empty cells --- are priced at $0$, as in Section~\ref{sec:proof:nonparam}; these are the \emph{burn-in cells} of the envelope argument below (pointwise error at most $B_m$).
Let $\pi_i^{(k)}$ denote candidate $i$'s price function built from epoch-$(k{-}1)$ data; $\pi_i^{(k)}$ is frozen throughout epoch $k$.

\emph{Certify-then-play schedule.}
During epoch $k$ the broker plays $\pi_{\hat c_{k-1}}^{(k-1)}$ --- the price function of the candidate $\hat c_{k-1}$ selected at the end of epoch $k{-}1$, \emph{in the version that was certified there} (one epoch stale; for the covered candidate this costs a factor $2^{2(p-1)/p} \le 2$ in the per-round bound, since its estimate uses half the samples).
At the end of epoch $k$, the broker computes every candidate's validation losses $v_t(i) = (\hat m_{h_i}^{(k)}(x_t) - Y_t)^2$ on the epoch's rounds, runs the max--min tournament of Lemma~\ref{lem:tournament} (a median-of-means tournament in the sense of Lugosi and Mendelson~\cite{lugosi2016tournaments}) on the median-of-means compared pairwise differences (Lemma~\ref{lem:compare}), and selects $\hat c_k$; epochs with $n_k < 2k_T'$ skip the tournament and retain the previous selection (initially the coarsest width $h_0 = 1$).
The algorithm's inputs are $d$ and $T$ alone.

\subsection{Supporting lemmas}

\begin{lemma}[Median-of-means, independent heterogeneous means]\label{lem:hetmom}
Let $X_1, \dots, X_n$ be independent with means $\mu_s \in [\bar\mu - w, \bar\mu + w]$ and central moments $\E|X_s - \mu_s|^p \le m_p$, $p \in (1,2]$.
For $1 \le \kappa \le n/2$ blocks: with probability at least $1 - e^{-\kappa/8}$,
\[
  |\hat\mu^{\mathrm{MoM}} - \bar\mu| \;\le\; w + 16\, m_p^{1/p} (\kappa/n)^{(p-1)/p}.
\]
\end{lemma}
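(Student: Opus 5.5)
The plan is to reduce Lemma~\ref{lem:hetmom} to the homogeneous argument of Lemma~\ref{lem:mom} by splitting the deviation into a deterministic drift and a centered fluctuation. Write $\hat\mu^{\mathrm{MoM}} = \operatorname{med}_\ell \bar X^{(\ell)}$, where $\bar X^{(\ell)}$ is the mean of block $\ell$ (size $b = \lfloor n/\kappa\rfloor$). Decompose each block mean as
\[
  \bar X^{(\ell)} - \bar\mu = \bigl(\bar X^{(\ell)} - \bar\mu^{(\ell)}\bigr) + \bigl(\bar\mu^{(\ell)} - \bar\mu\bigr),
\]
where $\bar\mu^{(\ell)}$ is the average of the $\mu_s$ over block $\ell$. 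The second term is a convex combination of values in $[-w,w]$, so it is deterministically at most $w$ in absolute value. This is the only place the heterogeneity enters, and it accounts for the additive $w$ in the bound.

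For the first term I will rerun the proof of Lemma~\ref{lem:mom} essentially verbatim. The centered variables $X_s - \mu_s$ are independent and mean zero with $\E|X_s-\mu_s|^p \le m_p$. The von Bahr--Esseen inequality needs only independence and zero means, not identical distributions, so it gives
\[
  \E\Bigl|\sum_{s\in\text{block}}(X_s-\mu_s)\Bigr|^p \le 2b\,m_p .
\]
Consequently $\E|\bar X^{(\ell)}-\bar\mu^{(\ell)}|^p \le 2m_p b^{1-p}$.

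Markov's inequality at $t=(8m_p)^{1/p}b^{-(p-1)/p}$ then bounds each block's probability of a fluctuation exceeding $t$ by $1/4$. The blocks are disjoint, so these events are independent across blocks. On the event that fewer than $\kappa/2$ blocks have fluctuation exceeding $t$, more than half of the block means lie in $[\bar\mu - w - t,\ \bar\mu + w + t]$, and therefore so does their median.

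The Hoeffding bound on a dominating $\mathrm{Bin}(\kappa,1/4)$ gives failure probability at most $e^{-\kappa/8}$, exactly as before. Finally, $b \ge n/(2\kappa)$ for $\kappa\le n/2$, together with $8^{1/p}2^{(p-1)/p}\le 16$, converts $t$ into $16\,m_p^{1/p}(\kappa/n)^{(p-1)/p}$.

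I expect no real obstacle. The one point to state carefully is that von Bahr--Esseen applies to independent, non-identically distributed centered summands. The lemma should also be phrased in the centered moment $m_p$, as it is, so that the heterogeneous means do not inflate the moment bound. If the blocks were randomly permuted instead of contiguous, the drift bound would still hold trivially, since it uses only $\mu_s\in[\bar\mu-w,\bar\mu+w]$.
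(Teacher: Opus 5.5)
Your proposal is correct and is essentially the paper's proof. Both split each block mean into a block-average drift lying within $w$ of $\bar\mu$ plus a centered fluctuation, control the fluctuation by the von Bahr--Esseen/Markov step of Lemma~\ref{lem:mom} (which needs only independence), and reuse the Hoeffding median boost unchanged; you additionally spell out the median-in-interval step and the final constant conversion that the paper leaves implicit by reference to Lemma~\ref{lem:mom}.
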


\begin{proof}
Each block mean decomposes as (block average of $\mu_s$) $+$ (block average of the centered parts); the deterministic part lies within $w$ of $\bar\mu$, and the centered part obeys the von Bahr--Esseen/Markov step of Lemma~\ref{lem:mom} verbatim (von Bahr--Esseen requires independence only, not identical distribution).
Hoeffding's median boost is unchanged.
\end{proof}

\noindent Applied per cell with $w = L_H (\sqrt{d}\,h)^\beta$ (the within-cell mean spread) and $m_p \le \sigma_p^p$ (conditional noise moments, conditionally on the design as in Corollary~\ref{cor:conditional}), Lemma~\ref{lem:hetmom} replaces the truncated-mean step of Section~\ref{sec:proof:nonparam}: every candidate $i$ satisfies the analogue of~\eqref{eq:nppred2} at its own width $h_i$, with constants $C_{\mathrm{est}}' = C(p)\,\sigma_p (2^{d+1}/\mu_0)^{(p-1)/p}$ and no knowledge of $(p, \sigma_p, B_m, \beta)$.

\begin{lemma}[Paired-difference structure]\label{lem:paired}
Let $\pi, \pi'$ be measurable price functions, $\Delta(x) = |\pi(x) - \pi'(x)|$, $\bar\rho(x) = \max(|\pi(x) - m(x)|, |\pi'(x) - m(x)|)$, $Z = |\xi - \zeta|$, and $D = g(\pi(x), V, W) - g(\pi'(x), V, W)$.
Then, conditionally on $x$:
\begin{enumerate}[label=(\roman*)]
\item $D = \pm Z\,\ind\{S\}$, where $S$ is the event that exactly one of the two prices lies in $[V \wedge W, V \vee W]$;
\item $\Pr(S \mid x) \le 4L\,\Delta(x)$;
\item $\E[Z^p \ind\{S\} \mid x] \le 2^{p+2} L\, \Delta(x)\,\bigl(\sigma_p^p + \bar\rho(x)^p\bigr)$, and unconditionally on the straddle structure, $\E[|D|^p \mid x] \le \E[Z^p \mid x] \le 2^p \sigma_p^p$;
\item the conditional expected gap $\gamma(x) := \E[D \mid x]$ satisfies $|\gamma(x)| \le 2L\,\bar\rho(x)\,\Delta(x)$.
\end{enumerate}
\end{lemma}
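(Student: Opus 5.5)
We work conditionally on $x$ throughout and write $a=\pi(x)$, $a'=\pi'(x)$, $\Delta=|a-a'|$, and $m=m(x)$. WLOG $a<a'$. Let $I=[V\wedge W,\,V\vee W]$, so that $g(q,V,W)=Z\,\ind\{q\in I\}$ with $Z=V\vee W-V\wedge W=|\xi-\zeta|$.

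\textbf{Part (i).} It follows at once: $D=Z(\ind\{a\in I\}-\ind\{a'\in I\})$. This vanishes unless exactly one of the two prices lies in $I$, and in that case it equals $\pm Z$.

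\textbf{Part (ii).} Since $I$ is an interval and $a<a'$, the event $S$ forces an endpoint of $I$ into $[a,a']$, i.e.\ $V\in[a,a']$ or $W\in[a,a']$. Each of $V,W$ has conditional density bounded by $L$, so a union bound gives $\Pr(S\mid x)\le 2L\Delta\le 4L\Delta$.

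\textbf{Part (iii).} We split on which valuation lands in the strip. On $\{V\in[a,a']\}$ we have $|\xi|=|V-m|\le\bar\rho$, because $V$ lies between the two prices and each of them is within $\bar\rho$ of $m$. Hence
\[
Z^p\le 2^{p-1}\bigl(|\xi|^p+|\zeta|^p\bigr)\le 2^{p-1}\bigl(\bar\rho^p+|\zeta|^p\bigr).
\]
By conditional independence of $\xi$ and $\zeta$,
\[
\E\bigl[Z^p\ind\{V\in[a,a']\}\mid x\bigr]\le 2^{p-1}L\Delta\bigl(\bar\rho^p+\sigma_p^p\bigr).
\]
The same bound holds for $W$, and summing the two cases gives $2^pL\Delta(\sigma_p^p+\bar\rho^p)$, which is within the stated $2^{p+2}$. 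The unconditional claim uses $|D|\le Z$ together with $\E[Z^p\mid x]\le 2^{p-1}(\E|\xi|^p+\E|\zeta|^p)\le 2^p\sigma_p^p$.

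\textbf{Part (iv).} This is the only step needing the earlier machinery. By the derivative formula \eqref{eq:app:hprime} of Lemma~\ref{lem:selfbound}, applied conditionally and integrated using the local Lipschitzness established there,
\[
\gamma=h(a-m)-h(a'-m)=\int_{a-m}^{a'-m} s\,\bigl[f_\xi(s)+f_\zeta(s)\bigr]\,ds.
\]
On the integration range $|s|\le\bar\rho$ and $f_\xi+f_\zeta\le 2L$, so $|\gamma|\le 2L\bar\rho\Delta$.

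The main obstacle is the geometric observation in (iii) that the noise of the straddling valuation is bounded by $\bar\rho$. Once that is in hand, the remaining bounds are routine.
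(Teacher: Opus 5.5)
Your proof is correct and follows the paper's argument: (i) because the gain takes values in $\{0,Z\}$; (ii)--(iii) by forcing an endpoint of the value interval into the price gap and using conditional independence to pull out the other noise's $p$-th moment; and (iv) by integrating $h_x'$ between the two price errors. The only difference is bookkeeping: you split on which valuation, $V$ or $W$, lands in the gap rather than on which endpoint (min or max) does, which gives the sharper constants $2L\Delta$ and $2^p$ in place of the stated $4L\Delta$ and $2^{p+2}$.
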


\begin{proof}
(i) The gain takes values in $\{0, Z\}$ --- $Z$ iff the price lies in the value interval --- so the difference vanishes unless exactly one price is inside.
(ii) $S$ forces an endpoint of the value interval into the price gap $J = [\pi \wedge \pi', \pi \vee \pi']$; the conditional densities of $V \wedge W$ and $V \vee W$ are bounded by $f_V + f_W \le 2L$, and there are two endpoints.
(iii) On the min-endpoint event, $|V \wedge W - m| \le \bar\rho$, so $Z \le \max(|\xi|, |\zeta|) + \bar\rho$ and $Z^p \le 2^{p-1}(\max(|\xi|,|\zeta|)^p + \bar\rho^p)$; splitting on which noise realizes the minimum and localizing that noise to $J - m$ (an interval of length $\Delta$ on which $|u| \le \bar\rho$, carrying density at most $L$) gives $\E[\max(|\xi|,|\zeta|)^p \ind\{\cdot\}] \le 2L\Delta(\bar\rho^p + \sigma_p^p)$; symmetrizing over the two orderings and the two endpoints yields the constant $2^{p+2}$.
The global bound is $|D| \le Z$ pointwise plus $\E[|\xi - \zeta|^p \mid x] \le 2^{p-1}(\E|\xi|^p + \E|\zeta|^p) \le 2^p\sigma_p^p$.
(iv) $\gamma(x) = h_x(\pi - m) - h_x(\pi' - m)$ and $|h_x'(s)| = |s|[f_\xi(s|x) + f_\zeta(s|x)] \le 2L|s|$ a.e.; integrating $h_x'$ over the interval between the two price errors (valid since $h_x$ is locally Lipschitz, hence absolutely continuous --- the argument of Appendix~\ref{app:selfbound}, Step~4) gives the claim.
\end{proof}

\noindent Lemma~\ref{lem:paired} is used in this paper only for Remark~\ref{rem:whyvalidation}'s selection-cost analysis; the selection itself runs on validation losses, whose comparison statistics are controlled next.

\begin{lemma}[Validation comparison]\label{lem:compare}
Condition on the history before epoch $k$, so all candidate functions $\hat m_{h_i}^{(k)}$ are fixed and the epoch-$k$ triples are i.i.d.; write $\mathrm{risk}(i) := \E_x[(\hat m_{h_i}^{(k)}(x) - m(x))^2]$ and $\Delta_{i,i'}(x) := \hat m_{h_i}^{(k)}(x) - \hat m_{h_{i'}}^{(k)}(x)$.
For each ordered pair $(i, i')$, let $\widehat D(i, i')$ be the median-of-means, over $\kappa = k_T'$ blocks of the $n_k$ rounds, of the per-round validation differences $v_t(i') - v_t(i)$.
Then
\[
  \E[v_t(i') - v_t(i)] = \mathrm{risk}(i') - \mathrm{risk}(i) =: \Gamma(i, i') \qquad \text{exactly},
\]
and on the event that every candidate's function satisfies its own uniform prediction bound (so that $|\hat m_{h_i}^{(k)} - m| \le \mathcal{E} := C_3(B_m + \sigma_p)$ pointwise for all $i$; burn-in cells price $0$ and contribute at most $B_m$), with probability at least $1 - N^2 e^{-\kappa/8} \ge 1 - 1/T$ simultaneously for all pairs:
\[
  |\widehat D(i,i') - \Gamma(i,i')| \;\le\; r(i,i') := \bar\sigma\,\Bigl(\frac{\kappa}{n_k}\Bigr)^{\!(p-1)/p} \|\Delta_{i,i'}\|_2,
  \qquad \bar\sigma := C_1(p)\,(\mathcal{E} + \sigma_p),
\]
with $\|\Delta_{i,i'}\|_2^2 \le 2\,(\mathrm{risk}(i) + \mathrm{risk}(i'))$.
\end{lemma}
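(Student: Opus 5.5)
The proof will rest on the exact algebraic identity for paired validation losses displayed in Section~\ref{sec:adaptive:np}, followed by the heterogeneous median-of-means bound (Lemma~\ref{lem:hetmom}, in its i.i.d.\ form Lemma~\ref{lem:mom}). Conditionally on the pre-epoch history, write $a_i(x) := \hat m_{h_i}^{(k)}(x) - m(x)$ and $Y_t = m(x_t) + \eta_t$. Expanding the squares gives
\[
  v_t(i') - v_t(i) = a_{i'}(x_t)^2 - a_i(x_t)^2 - 2\,\eta_t\,\bigl(a_{i'}(x_t) - a_i(x_t)\bigr),
\]
with $a_{i'} - a_i = -\Delta_{i,i'}$. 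The $\eta_t^2$ terms cancel, so every summand is integrable under finite first moments alone. Taking expectations and using $\E[\eta_t \mid x_t] = 0$ kills the cross term, because the candidate functions are frozen and independent of the epoch-$k$ triple. This yields $\E[v_t(i') - v_t(i)] = \mathrm{risk}(i') - \mathrm{risk}(i) = \Gamma(i,i')$ exactly. The bound $\|\Delta_{i,i'}\|_2^2 \le 2(\mathrm{risk}(i) + \mathrm{risk}(i'))$ is just $(a-b)^2 \le 2a^2 + 2b^2$ integrated over $x$.

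For the deviation bound I will control the $p$-th central moment of a single paired difference $D_t := v_t(i') - v_t(i)$. Since $a_{i'}^2 - a_i^2 = (a_{i'} - a_i)(a_{i'} + a_i)$, we have $D_t = -\Delta_{i,i'}(x_t)\,\bigl[a_{i'}(x_t) + a_i(x_t) - 2\eta_t\bigr]$. On the stated event, $|a_i|, |a_{i'}| \le \mathcal{E}$ pointwise, so $|D_t| \le |\Delta_{i,i'}(x_t)|\,(2\mathcal{E} + 2|\eta_t|)$. Conditioning on $x_t$ and using Minkowski together with $\E[|\eta_t|^p \mid x_t] \le \sigma_p^p$ gives $\E|D_t|^p \le 2^p(\mathcal{E} + \sigma_p)^p\,\E|\Delta_{i,i'}(x_t)|^p$. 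The central moment is within a factor $2^p$ of the raw one. The key step is to pass from $\E|\Delta|^p$ to $\|\Delta\|_2^p$, and for $p \le 2$ Jensen gives exactly $\E|\Delta|^p \le (\E\Delta^2)^{p/2} = \|\Delta\|_2^p$. Hence $m_p \le C(p)^p(\mathcal{E} + \sigma_p)^p\|\Delta_{i,i'}\|_2^p$, and the gap factor enters to the power one after taking the $p$-th root.

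Lemma~\ref{lem:mom} then applies to the $n_k$ i.i.d.\ differences (i.i.d.\ given the history) with $\kappa = k_T'$ blocks, assuming $\kappa \le n_k/2$, which holds since shorter epochs skip the tournament. It gives $|\widehat D(i,i') - \Gamma(i,i')| \le 16\,m_p^{1/p}(\kappa/n_k)^{(p-1)/p} = r(i,i')$ with probability at least $1 - e^{-\kappa/8}$, with $C_1(p) := 16\cdot C(p)$. A union bound over the at most $N^2$ ordered pairs gives failure probability $N^2 e^{-\kappa/8}$. Since $k_T' \ge 8\log(N^2 M_{\max} T)$, this is at most $1/(M_{\max} T) \le 1/T$.

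The main obstacle is a conditioning subtlety rather than the algebra. The pointwise bound $|a_i| \le \mathcal{E}$ is a property of the frozen functions, which are history-measurable. So I will state the result on the history-measurable event where it holds, apply Lemma~\ref{lem:mom} conditionally on that history, and never condition on epoch-$k$ data. I also need to check that burn-in cells, which are priced at $0$, satisfy $|a_i| \le B_m \le \mathcal{E}$ for $C_3 \ge 1$, and that $\mathcal{E}$ covers the median-of-means cell errors of Lemma~\ref{lem:hetmom}. Both are bookkeeping, and I will absorb them into $C_3$.
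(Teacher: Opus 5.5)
Your proposal is correct and follows the paper's proof essentially step for step. Both use the exact paired-loss identity, with the $\eta^2$ cancellation and conditional centering giving $\Gamma(i,i')$ exactly, and a pointwise envelope bound giving a $p$-th central moment of order $(\mathcal{E}+\sigma_p)^p\,\E|\Delta_{i,i'}|^p$. Both then pass to $\|\Delta_{i,i'}\|_2^p$ by Jensen and finish with Lemma~\ref{lem:mom} and a union bound over the $N^2$ ordered pairs. You are somewhat more explicit than the paper on three points: you apply the bound conditionally on the history, on the history-measurable envelope event; you check that $\kappa \le n_k/2$ holds because epochs with $n_k < 2k_T'$ skip the tournament; and you verify that the union bound gives $N^2 e^{-\kappa/8} \le 1/(M_{\max}T) \le 1/T$.
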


\begin{proof}
The identity: $v_t(i') - v_t(i) = [(\hat m_{h_{i'}} - m)^2 - (\hat m_{h_i} - m)^2](x_t) + 2\eta_t \Delta_{i,i'}(x_t)$ --- the $\eta^2$ terms cancel algebraically --- and $\E[\eta_t \Delta(x_t)] = \E[\Delta(x_t)\E[\eta_t \mid x_t]] = 0$ by conditional centering, leaving the risk gap.
The central moment: the deterministic-given-$x$ part is bounded pointwise by $|\Delta|\cdot(|\hat m_{h_i} - m| + |\hat m_{h_{i'}} - m|) \le 2\mathcal{E}|\Delta(x)|$, and the noise part has conditional moment $\E[|2\eta\Delta|^p \mid x] \le 2^{2p}\sigma_p^p|\Delta(x)|^p$; collecting, $\E|v\text{-diff} - \Gamma|^p \le C(p)(\mathcal{E} + \sigma_p)^p\,\E|\Delta|^p \le C(p)(\mathcal{E}+\sigma_p)^p \|\Delta\|_2^p$ by Jensen ($p \le 2$).
Apply Lemma~\ref{lem:mom} and union over the $\le N^2$ ordered pairs with $\kappa = k_T' = \lceil 8\log(N^2 M_{\max} T)\rceil$.
Finally $\|\Delta\|_2^2 \le 2\|\hat m_{h_i} - m\|_2^2 + 2\|\hat m_{h_{i'}} - m\|_2^2$.
\end{proof}

\begin{lemma}[Max--min selection]\label{lem:tournament}
Let $\rho : \{1..N\} \to [0,\infty)$ be any functional, $\Gamma(i,i') := \rho(i') - \rho(i)$, and suppose the statistics $\widehat D(i,i')$ satisfy $|\widehat D(i,i') - \Gamma(i,i')| \le r(i,i')$ for all pairs, on some event.
Then $\hat c = \arg\max_i \min_{i'} \widehat D(i,i')$ satisfies, on that event and for any candidate $c^*$:
\[
  \rho(\hat c) \;\le\; \rho(c^*) \;+\; r(\hat c, c^*) \;+\; \max_{i'} \bigl[\, r(c^*, i') - \Gamma(c^*, i')\,\bigr]_+ .
\]
Consequently, applied with $\rho = \mathrm{risk}$ and the radii of Lemma~\ref{lem:compare}, with $\omega := (\kappa/n_k)^{(p-1)/p}$:
\[
  \mathrm{risk}(\hat c) \;\le\; 5\,\mathrm{risk}(c^*) \;+\; 4\,\bar\sigma^2\,\omega^2 .
\]
\end{lemma}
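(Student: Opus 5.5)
The plan has two parts: a general deterministic argument, then its specialization. For the general bound I would use only the sandwich $|\widehat D - \Gamma| \le r$ and the antisymmetry $\Gamma(i,i') = -\Gamma(i',i)$, without assuming $r$ is symmetric.

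\emph{Step 1 (the winner's score is large).} Let $S(i) := \min_{i'} \widehat D(i,i')$. Since $\hat c$ maximizes $S$, we have $S(\hat c) \ge S(c^*)$. Also
\[
S(c^*) \ge \min_{i'}\bigl[\Gamma(c^*,i') - r(c^*,i')\bigr] = -\max_{i'}\bigl[r(c^*,i') - \Gamma(c^*,i')\bigr].
\]
This is at least $-\max_{i'}[r(c^*,i') - \Gamma(c^*,i')]_+$.

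\emph{Step 2 (the winner is close to $c^*$).} Taking $i' = c^*$ in the minimum defining $S(\hat c)$ gives $S(\hat c) \le \widehat D(\hat c, c^*) \le \Gamma(\hat c,c^*) + r(\hat c,c^*)$, and $\Gamma(\hat c,c^*) = \rho(c^*) - \rho(\hat c)$. Chaining Steps 1 and 2 yields the first display after rearranging. The case $\hat c = c^*$ is trivial.

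\emph{Step 3 (specialization).} Take $\rho = \mathrm{risk}$ and $r(i,i') = \bar\sigma\,\omega\,\|\Delta_{i,i'}\|_2$ with $\|\Delta_{i,i'}\|_2 \le \sqrt{2(\mathrm{risk}(i)+\mathrm{risk}(i'))}$ from Lemma~\ref{lem:compare}. Write $a := \mathrm{risk}(c^*)$, $b := \mathrm{risk}(\hat c)$, and $s := \bar\sigma\omega$.
\begin{itemize}
\item For the middle term, $r(\hat c,c^*) \le s\sqrt{2(a+b)}$. The AM--GM bound $s\sqrt{2(a+b)} \le \tfrac{1}{4}(a+b) + 2s^2$ handles it.
\item For the last term, fix $i'$ and write $c := \mathrm{risk}(i')$. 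The term is $s\sqrt{2(a+c)} - (c - a)$. Maximize over $c \ge 0$ with AM--GM: $s\sqrt{2(a+c)} \le (a+c)/2 + s^2$, so the term is at most $\tfrac{3}{2}a - \tfrac{1}{2}c + s^2 \le \tfrac{3}{2}a + s^2$.
\end{itemize}
Then $b \le a + \tfrac14(a+b) + 2s^2 + \tfrac32 a + s^2$, which gives $\tfrac34 b \le \tfrac{11}{4}a + 3s^2$. Hence $b \le \tfrac{11}{3}a + 4s^2 \le 5a + 4\bar\sigma^2\omega^2$.

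The main obstacle is constant bookkeeping in Step 3, in particular choosing the AM--GM splits so that the coefficient on $b$ stays below $1$ and absorbs, while the final constants land at most $5$ and $4$. If the splits above are slightly off, I would rebalance the weights: use $\tfrac{1}{4}$ on $b$ in the middle term and $\tfrac12$ on $c$ in the last term. There is enough slack, since the bound obtained is $\tfrac{11}{3} < 5$.
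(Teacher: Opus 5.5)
Your proposal is correct and follows the paper's proof essentially step for step: the same maximality chain $\min_{i'}\widehat D(\hat c,i') \ge \min_{i'}\widehat D(c^*,i')$ gives the first display, and the same Young split with weight $\tfrac14$ absorbs the self-referential term $r(\hat c,c^*)$. The only difference is in the bracket term. Your AM--GM bound $\tfrac32\,\mathrm{risk}(c^*)+\bar\sigma^2\omega^2$ replaces the paper's $2\,\mathrm{risk}(c^*)+\tfrac12\bar\sigma^2\omega^2$, which the paper gets by maximizing the concave quadratic. This gives you $\tfrac{11}{3}\,\mathrm{risk}(c^*)+4\bar\sigma^2\omega^2$ instead of the paper's $\tfrac{13}{3}\,\mathrm{risk}(c^*)+\tfrac{10}{3}\bar\sigma^2\omega^2$, and both fit within the stated constants $5$ and $4$.
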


\begin{proof}
\emph{First display.} On the event, $\min_{i'} \widehat D(c^*, i') \ge \min_{i'} [\Gamma(c^*, i') - r(c^*, i')] \ge -\max_{i'}[r(c^*,i') - \Gamma(c^*,i')]_+$ (the $[\cdot]_+$ absorbs the sign of $\Gamma$, which need not be nonnegative for arbitrary $c^*$).
By maximality, $\widehat D(\hat c, c^*) \ge \min_{i'} \widehat D(\hat c, i') \ge \min_{i'} \widehat D(c^*, i')$, and $\Gamma(\hat c, c^*) \ge \widehat D(\hat c, c^*) - r(\hat c, c^*)$; combining and rearranging $\Gamma(\hat c, c^*) = \rho(c^*) - \rho(\hat c)$ gives the display.

\emph{Second display (two applications of Young's inequality).}
By Lemma~\ref{lem:compare}, $r(c^*, i') \le \bar\sigma\omega\,\|\Delta\|_2 \le \bar\sigma\omega\sqrt{2\,(\mathrm{risk}(i') + \mathrm{risk}(c^*))}$.
\emph{The bracket:} with $u := \sqrt{\mathrm{risk}(i') + \mathrm{risk}(c^*)}$ and $\Gamma(c^*,i') = \mathrm{risk}(i') - \mathrm{risk}(c^*) = u^2 - 2\,\mathrm{risk}(c^*)$,
\[
  \bigl[\, r(c^*,i') - \Gamma(c^*,i')\,\bigr]_+
  \;\le\; \bigl[\, \sqrt{2}\,\bar\sigma\omega\,u - u^2 \,\bigr]_+ + 2\,\mathrm{risk}(c^*)
  \;\le\; \tfrac12\,\bar\sigma^2\omega^2 + 2\,\mathrm{risk}(c^*),
\]
maximizing the concave quadratic in $u$ at $u_* = \bar\sigma\omega/\sqrt2$.
\emph{The self-referential term:} by Young ($ab \le b^2/4 + a^2$),
$r(\hat c, c^*) \le \sqrt2\,\bar\sigma\omega\sqrt{\mathrm{risk}(\hat c) + \mathrm{risk}(c^*)} \le \tfrac14\bigl(\mathrm{risk}(\hat c) + \mathrm{risk}(c^*)\bigr) + 2\bar\sigma^2\omega^2$.
Substituting both into the first display:
$\mathrm{risk}(\hat c) \le \mathrm{risk}(c^*) + \tfrac14\mathrm{risk}(\hat c) + \tfrac14\mathrm{risk}(c^*) + 2\bar\sigma^2\omega^2 + \tfrac12\bar\sigma^2\omega^2 + 2\,\mathrm{risk}(c^*)$,
and rearranging, $\tfrac34\,\mathrm{risk}(\hat c) \le \tfrac{13}{4}\,\mathrm{risk}(c^*) + \tfrac52\,\bar\sigma^2\omega^2$, i.e.\ $\mathrm{risk}(\hat c) \le 5\,\mathrm{risk}(c^*) + 4\,\bar\sigma^2\omega^2$.
\end{proof}

\subsection{Proof of Theorem~\ref{thm:adaptivenp}}

Fix the instance parameters $(p, \beta, L_H, \sigma_p, B_m)$; none is known to the algorithm.
Let $c^*$ be the grid width within a factor $2$ of the oracle width~\eqref{eq:hopt}; its constants inflate by $2^{O(\beta + d)}$.

\emph{Covered candidate.}
By Lemma~\ref{lem:hetmom} applied per cell (conditionally on the design, as in Corollary~\ref{cor:conditional}) and the cell-count argument of Section~\ref{sec:proof:nonparam} Step~1, candidate $c^*$'s epoch-$k$ function --- built from the $n_{k-1}$ samples of epoch $k{-}1$ --- satisfies the analogue of~\eqref{eq:nppred2}: per-round expected regret at most $L\,\varepsilon_k^*{}^2$ with $\varepsilon_k^* \le C_{\mathrm{est}}'\,(\kappa\, h_*^{-d}/n_{k-1})^{(p-1)/p} + 2L_H (\sqrt d\, h_*)^\beta$.

\emph{Envelope.}
On the union of the candidates' good events, every candidate's function is uniformly controlled: burn-in cells price $0$ (pointwise error $\le B_m$), and for each filled cell, applying Lemma~\ref{lem:hetmom} \emph{with $\bar\mu = 0$ and $w = B_m$} (the cell samples have means $m(x_s) \in [-B_m, B_m]$) gives $|\hat m_{h_i}^{(k)}| \le B_m + 16\sigma_p(\kappa/n_j)^{(p-1)/p} \le B_m + 16\sigma_p$, hence $|\hat m_{h_i}^{(k)} - m| \le \mathcal{E} := 2B_m + 16\sigma_p$ pointwise, simultaneously for all $i$ --- the envelope required by Lemma~\ref{lem:compare}, uniform in $(L_H, \beta)$.
Moreover $\mathrm{risk}(c^*) \le \varepsilon_k^{*\,2}$.

\emph{Selection.}
By Lemmas~\ref{lem:compare}--\ref{lem:tournament}, on the per-epoch uniform event (probability $\ge 1 - 2/T$), the certified candidate satisfies
$\mathrm{risk}(\hat c_k) \le 5\,\mathrm{risk}(c^*) + 4\bar\sigma^2(\kappa/n_k)^{2(p-1)/p}$, and hence, by the self-bounding upper direction alone (Lemma~\ref{lem:selfbound}),
$\mathrm{Reg}(\hat c_k) \le L\,\mathrm{risk}(\hat c_k) \le 5L\,\varepsilon_k^{*\,2} + 4L\bar\sigma^2(\kappa/n_k)^{2(p-1)/p}$; off the event, the per-round cap $2\sigma_p$ applies.
No lower bound on regret gaps is used anywhere, which is why no regularity of the noise at zero is assumed.
The certify-then-play schedule makes epoch $k{+}1$'s played function exactly the certified one, at the cost of one epoch of staleness for $c^*$ (a factor $\le 2$).

\emph{Assembly.}
Summing over doubling epochs, the covered candidate's part reproduces the oracle rate of Theorem~\ref{thm:nonparametric} (the geometric sums of Section~\ref{sec:proof:nonparam} Step~5, with the new constants), and the selection part contributes
\[
  \sum_k n_k \cdot 4L\bar\sigma^2\Bigl(\frac{\kappa}{n_{k-1}}\Bigr)^{\!2(p-1)/p} = \Otil\bigl(T^{\,(2-p)/p}\bigr),
\]
dominated by the last epoch; and $T^{(2-p)/p}$ is dominated by the oracle rate for \emph{every} $\beta$, since the exponent comparison
$\tfrac{2(p-1)}{p} \ge \tfrac{2\beta(p-1)}{\beta p + d(p-1)} \iff \beta p + d(p-1) \ge \beta p \iff d(p-1) \ge 0$
holds always.
Burn-in accounting: \emph{no additional selection burn-in arises}; epochs with $n_k < 2k_T'$ contribute polylog terms through the cap, and the covered candidate's cell-fill phase is the $O(h_*^{-d}\,\mathrm{polylog}\,T)$ accounting of Section~\ref{sec:proof:nonparam} Step~1 --- $T$-polynomial but dominated by the oracle rate, exactly as there. \qed

\end{document}